\newtheorem{theorem}{Theorem}[section]
\newtheorem{lemma}[theorem]{Lemma}
\newtheorem{proposition}[theorem]{Proposition}
\theoremstyle{definition}
\newtheorem{example}[theorem]{Example}
\newtheorem{remark}[theorem]{Remark}
\DeclareMathOperator*{\argmin}{argmin}
\DeclareMathOperator*{\esssup}{ess\,sup}
\numberwithin{equation}{section}
\numberwithin{table}{section}
\numberwithin{figure}{section}
\def \bE {\mathbb{E}}
\def \bN {\mathbb{N}}
\def \bR {\mathbb{R}}
\def \bZ {\mathbb{Z}}
\def \cA {\mathcal{A}}
\def \cB {\mathcal{B}}
\def \cD {\mathcal{D}}
\def \cE {\mathcal{E}}
\def \cF {\mathcal{F}}
\def \cH {\mathcal{H}}
\def \cI {\mathcal{I}}
\def \cL {\mathcal{L}}
\def \cN {\mathcal{N}}
\def \cO {\mathcal{O}}
\def \cP {\mathcal{P}}
\def \cS {\mathcal{S}}
\def \cT {\mathcal{T}}
\def \cW {\mathcal{W}}
\def \NN {\mathcal{NN}}
\def \Bi {{\boldsymbol{i}}}
\def \pwl {\,{\rm PWL}\,}
\def \Id {\,{\rm Id}\,}
\def \ind {\,{\rm ind}\,}
\def \Bin {\,{\rm Bin}\,}
\def \sgn {\,{\rm sgn}\,}
\def \Pdim {\,{\rm Pdim}\,}
\begin{document}
\title{On the optimal approximation of Sobolev and Besov functions using deep ReLU neural networks}
\author{
Yunfei Yang \thanks{School of Mathematics (Zhuhai) and Guangdong Province Key Laboratory of Computational Science, Sun Yat-sen University, Zhuhai, P.R. China. E-mail: \href{mailto:yangyunfei@mail.sysu.edu.cn}{yangyunfei@mail.sysu.edu.cn}.}
}
\date{}
\maketitle

\begin{abstract}

This paper studies the problem of how efficiently functions in the Sobolev spaces $\mathcal{W}^{s,q}([0,1]^d)$ and Besov spaces $\mathcal{B}^s_{q,r}([0,1]^d)$ can be approximated by deep ReLU neural networks with width $W$ and depth $L$, when the error is measured in the $L^p([0,1]^d)$ norm. This problem has been studied by several recent works, which obtained the approximation rate $\mathcal{O}((WL)^{-2s/d})$ up to logarithmic factors when $p=q=\infty$, and the rate $\mathcal{O}(L^{-2s/d})$ for networks with fixed width when the Sobolev embedding condition $1/q -1/p<s/d$ holds. We generalize these results by showing that the rate $\mathcal{O}((WL)^{-2s/d})$ indeed holds under the Sobolev embedding condition. It is known that this rate is optimal up to logarithmic factors. The key tool in our proof is a novel encoding of sparse vectors by using deep ReLU neural networks with varied width and depth, which may be of independent interest.

\medskip
\noindent \textbf{Keywords:} Deep Neural Network, Nonlinear Approximation, Sobolev Space, Besov Space

\noindent \textbf{MSC:} 41A25, 41A46, 68T07
\end{abstract}

\section{Introduction}

Deep learning methods have made remarkable achievements in many fields such as computer vision, natural language processing and scientific computing \citep{lecun2015deep,raissi2019physics}. The breakthrough of deep learning has motivated a lot of research on the theoretical understanding of why deep neural networks are so powerful in applications. One of the key reasons for the great success of neural networks is their ability to effectively approximate many complex nonlinear functions. The well-known universal approximation theorem \citep{cybenko1989approximation,hornik1991approximation} shows that a neural network with one hidden layer can approximate any continuous functions on compact sets up to any prescribed accuracy. In recent studies, approximation rates of deep neural networks have been derived for many function spaces, such as continuous functions \citep{shen2019nonlinear,shen2020deep,shen2022optimal,yarotsky2018optimal}, piecewise smooth functions \citep{petersen2018optimal}, Sobolev functions \citep{lu2021deep,yarotsky2017error,yarotsky2020phase} and Besov functions \citep{siegel2023optimal,suzuki2019adaptivity}.

In this paper, we are interested in the problem of how efficiently functions in Sobolev or Besov spaces can be approximated by deep neural networks with the ReLU activation function \citep{nair2010rectified}. To be concrete, our goal is to estimate the $L^p$-approximation rate
\[
\sup_{f\in \cF} \inf_{g\in \NN(W,L)} \|f-g\|_{L^p([0,1]^d)}
\]
for the function class $\NN(W,L)$ of deep neural networks with width $W$ and depth $L$, when the target function class $\cF$ is the unit ball of the Sobolev space $\cW^{s,q}([0,1]^d)$ or the Besov space $\cB^s_{q,r}([0,1]^d)$ (see Section \ref{sec: main results} below for precise definitions of these spaces and the neural network class). This problem has been studied by several recent works. For the Sobolev space $\cW^{s,\infty}([0,1]^d)$, \citet{yarotsky2020phase} proved the rate $\cO((L/\log L)^{-2s/d})$ when the network width $W$ is finite. This result was improved to $\cO((W^2L^2 \log W)^{-s/d})$ for $0<s\le 1$ in \citet{shen2022optimal} and $\cO((WL/(\log W \log L))^{-2s/d})$ for all $s>0$ in \citet{lu2021deep}. For general Sobolev and Besov spaces, \citet{siegel2023optimal} obtained the rate $\cO(L^{-2s/d})$ for networks with finite width under the strict Sobolev embedding condition $1/q -1/p<s/d$, which guarantees that $\cW^{s,q}([0,1]^d)$ and $\cB^s_{q,r}([0,1]^d)$ are compactly embedded in $L^p([0,1]^d)$. We generalize these results by proving that, under the condition $1/q -1/p<s/d$,
\begin{equation}\label{main bound}
\sup_{\|f\|_{\cW^{s,q}([0,1]^d)}\le 1} \inf_{g\in \NN(W,L)} \|f-g\|_{L^p([0,1]^d)} \le C (WL)^{-2s/d},
\end{equation}
for sufficiently large width $W$ and depth $L$. Similar result also holds when the Sobolev space is replaced by the Besov space. It is known that the rate $\cO((WL)^{-2s/d})$ is optimal up to logarithmic factors \citep{lu2021deep,siegel2023optimal}.

As pointed out by \citet{siegel2023optimal}, the main technical difficulty in proving (\ref{main bound}) is to deal with the case when $p>q$. Because, when $p\le q$, the approximation rate (\ref{main bound}) can be achieved by classical linear approximation methods using piecewise polynomials, while for $p>q$, nonlinear adaptive methods are required \citep{devore1998nonlinear}. Thus, in the nonlinear regime $p>q$, one needs to use piecewise polynomials on an adaptive non-uniform grid, which cannot be handled by the methods in \citet{lu2021deep,shen2022optimal}. To overcome this difficulty, \citet{siegel2023optimal} used a novel bit-extraction technique to optimally encodes sparse vectors using deep ReLU networks with fixed width. One of our main technical contributions is a generalization of this result, presented in Theorem \ref{rep of vector}, to the case when both the width and depth vary.

The rest of this paper is organized as follows. Section \ref{sec: main results} presents our main results on the approximation of Sobolev and Besov functions. The proof is given in Section \ref{sec: main proof}. We illustrate how to apply the approximation results to derive convergence rates for learning algorithms in Section \ref{sec: applications}. Finally, we remark that, unless otherwise specified, we will use $C$ to denote constants which may change from line to line. This convention is standard and convenient in analysis. The constants $C$ may depend on some other parameters and this dependence will be made clear from the context.

\section{Main approximation results}\label{sec: main results}

Let us begin with a formal definition of the neural network classes used in this paper. Given $L,N_1,\dots, N_L \in \bN$, we consider the mapping $g:\bR^{d} \to \bR^{k}$ that can be parameterized by a fully connected ReLU neural network of the following form
\begin{align*}
g_0(x) &= x, \\
g_{\ell+1}(x) &= \sigma(A_\ell g_\ell (x) + b_\ell), \quad \ell = 0,1,\dots,L-1, \\
g(x) &= A_L g_L(x) +b_L,
\end{align*}
where $A_\ell \in \bR^{N_{\ell+1}\times N_{\ell}}$, $b_\ell\in \bR^{N_{\ell+1}}$ with $N_0 =d$ and $N_{L+1} =k$. The activation function $\sigma(t) := \max\{t,0\}$ is the Rectified Linear Unit function (ReLU) and it is applied component-wisely. We remark that there is no activation function in the output layer, which is the usual convention in applications. The numbers $W:=\max\{N_1,\dots,N_L\}$ and $L$ are called the width and depth (the number of hidden layers) of the neural network, respectively. We denote by $\cN\cN_{d,k}(W,L)$ the set of mappings that can be parameterized by ReLU neural networks with width $W$ and depth $L$. When the input dimension $d$ and the output dimension $k$ are clear from contexts, we simplify the notation to $\cN\cN(W,L)$ for convenience. We will give some basic properties of the neural network classes in Proposition \ref{basic constr} below. 

The purpose of this paper is to study the approximation of Sobolev and Besov functions by using neural networks. Let us recall the definitions of Sobolev and Besov spaces for the reader's convenience \citep{adams2003sobolev,dinezza2012hitchhiker,triebel1992theory}. Let $\Omega \subseteq \bR^d$ be a bounded domain, which we take to be the unit cube $\Omega=[0,1]^d$ in the following. For $1\le q\le \infty$, we denote by $L^q(\Omega)$ the set of functions $f$ whose $L^q$ norm on $\Omega$ is finite. In other words, when $q<\infty$,
\[
\|f\|_{L^q (\Omega)}^q = \int_{\Omega} |f(x)|^q dx <\infty.
\]
When $q=\infty$, we have the standard modification $\|f\|_{L^\infty(\Omega)} = \esssup_{x\in \Omega} |f(x)|<\infty$. For a positive integer $s\in \bN$, we say $f\in L^q(\Omega)$ is in the Sobolev space $\cW^{s,q}(\Omega)$ if it has weak derivatives of order $s$ and 
\begin{equation}\label{sobolev norm}
\| f\|_{\cW^{s,q}(\Omega)}^q := \|f\|_{L^q(\Omega)}^q + |f|_{\cW^{s,q}(\Omega)}^q < \infty,
\end{equation}
where the Sobolev semi-norm $|f|_{\cW^{s,q}(\Omega)}$ is defined by
\[
|f|_{\cW^{s,q}(\Omega)}^q := \sum_{|\gamma|=s} \|\partial^\gamma f\|_{L^q(\Omega)}^q.
\]
Here $\gamma = (\gamma_i)_{i=1}^d$ with $\gamma_i\in \bN_0:=\bN\cup \{0\}$ is a multi-index with total degree $|\gamma|= \sum_{i=1}^d \gamma_i$ and we make the usual modification when $q=\infty$. When $s$ is not an integer, we modify the Sobolev semi-norm by generalizing the H\"older condition
\[
|f|_{\cW^{s,q}(\Omega)}^q := \sum_{|\gamma|=\lfloor s \rfloor} \left( \|\partial^\gamma f\|_{L^q(\Omega)}^q + \int_{\Omega} \int_{\Omega} \frac{|\partial^\gamma f(x) - \partial^\gamma f(y)|^q}{|x-y|^{d+(s-\lfloor s \rfloor)q}} dxdy \right),
\]
when $q<\infty$ and 
\[
|f|_{\cW^{s,\infty}(\Omega)} := \sum_{|\gamma|=\lfloor s \rfloor} \left( \|\partial^\gamma f\|_{L^\infty(\Omega)} + \esssup_{x,y\in \Omega} \frac{|\partial^\gamma f(x) - \partial^\gamma f(y)|}{|x-y|^{s-\lfloor s \rfloor}} \right).
\]
Then, we can define the space $\cW^{s,q}(\Omega)$ via the norm (\ref{sobolev norm}) for non-integer $s$. These spaces are also called Sobolev-Slobodeckij spaces in the literature.

Next, we define the Besov spaces through the moduli of smoothness. For $k\in \bN$, the $k$-th order modulus of smoothness of a function $f\in L^q(\Omega)$ is defined as
\[
\omega_k(f,t)_q = \sup_{|h|<t} \|\Delta_h^k f \|_{L^q(\Omega_{kh})},
\]
where $h\in \bR^d$, $\Omega_{kh} = \{x\in \Omega:x+kh\in\Omega\}$ and the $k$-th order difference $\Delta_h^k$ is given by
\[
\Delta_h^k f(x) = \sum_{j=0}^k (-1)^j \binom{k}{j} f(x+jh).
\]
Let $s>0$, $1\le q,r\le \infty$ and fix an integer $k>s$. The Besov space $\cB^s_{q,r}(\Omega)$ is defined through the norm 
\[
\|f\|_{\cB^s_{q,r}(\Omega)} := \|f\|_{L^q(\Omega)} + |f|_{\cB^s_{q,r}(\Omega)},
\]
where the Besov semi-norm is given by
\[
|f|_{\cB^s_{q,r}(\Omega)} :=
\begin{cases}
( \int_0^\infty (t^{-s} \omega_k(f,t)_q)^r t^{-1}dt )^{1/r}, &\mbox{if } 1\le r<\infty,\\
\sup_{t>0} t^{-s} \omega_k(f,t)_q, &\mbox{if } r=\infty.
\end{cases}
\]
It is possible to show that different choices of $k>s$ give equivalent norms \citep{devore1993constructive}. Thus, one can simply choose $k=\lfloor s \rfloor +1$. Sobolev and Besov spaces are closely related to each other (see \citet{triebel1992theory} for instance). We remark that, the continuous embedding $\cB^s_{q,r_1}(\Omega) \hookrightarrow \cB^s_{q,r_2}(\Omega)$ holds for $1\le r_1\le r_2\le \infty$. When $s$ is not an integer, it holds that $\cB^s_{q,q}(\Omega) = \cW^{s,q}(\Omega)$ with equivalent norm. When $s\in \bN$, we have the continuous embedding $\cB^s_{q,q}(\Omega) \hookrightarrow \cW^{s,q}(\Omega) \hookrightarrow \cB^s_{q,2}(\Omega)$ if $q\le 2$ and the reverse $\cB^s_{q,2}(\Omega) \hookrightarrow \cW^{s,q}(\Omega) \hookrightarrow \cB^s_{q,q}(\Omega)$ if $q\ge 2$. In particular, we have $\cB^s_{2,2}(\Omega) = \cW^{s,2}(\Omega)$.

Our goal is to quantify how efficiently the neural network class $\NN(W,L)$ can approximate functions in $\cW^{s,q}(\Omega)$ or $\cB^s_{q,r}(\Omega)$, where $\Omega=[0,1]^d$ is the unit cube. If the approximation error is measured in the $L^p(\Omega)$-norm, then it is necessary to assume that $\cW^{s,q}(\Omega)$ or $\cB^s_{q,r}(\Omega)$ is contained in $L^p(\Omega)$. Because, we cannot get any approximation rate for $f\notin L^p(\Omega)$, since ReLU neural networks can only represent continuous functions. Indeed, we assume that the following strict Sobolev embedding condition holds
\[
\frac{1}{q} - \frac{1}{p}< \frac{s}{d},
\]
which guarantees that the embeddings $\cW^{s,q}(\Omega) \hookrightarrow L^p(\Omega)$ and $\cB^s_{q,r}(\Omega) \hookrightarrow L^p(\Omega)$ are compact. Note that, on the boundary condition $1/q-1/p=s/d$, whether the embeddings hold depends on the precise values of $s,p,q$ and $r$. Thus, this boundary
case is much more subtle and we do not study it in this work. We present our main results in the following two theorems and defer the proof to Section \ref{sec: main proof}.

\begin{theorem}\label{app of sobolev}
Let $0<s<\infty$ and $1\le p,q\le \infty$. If $1/q - 1/p <s/d$, then for sufficiently large $W,L\in \bN$,
\[
\inf_{g\in \NN(W,L)} \|f-g\|_{L^p([0,1]^d)} \le C \|f\|_{\cW^{s,q}([0,1]^d)} (WL)^{-2s/d},
\]
for some constant $C$ depending on $s,p,q$ and $d$.
\end{theorem}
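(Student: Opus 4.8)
The plan is to reduce the claim to an estimate for Besov spaces and then prove that estimate by combining a spatially adaptive (nonlinear) piecewise‑polynomial approximation of $f$ with the sparse‑vector encoding of Theorem \ref{rep of vector}. By homogeneity of the bound in $\|f\|_{\cW^{s,q}(\Omega)}$, with $\Omega=[0,1]^d$, we may assume $\|f\|_{\cW^{s,q}(\Omega)}\le 1$. Recall from Section \ref{sec: main results} that $\cW^{s,q}(\Omega)=\cB^s_{q,q}(\Omega)$ when $s\notin\bN$, and that $\cW^{s,q}(\Omega)\hookrightarrow\cB^s_{q,2}(\Omega)$ if $q\le 2$, resp.\ $\cW^{s,q}(\Omega)\hookrightarrow\cB^s_{q,q}(\Omega)$ if $q\ge 2$, when $s\in\bN$; since $\cB^s_{q,r_1}(\Omega)\hookrightarrow\cB^s_{q,r_2}(\Omega)$ for $r_1\le r_2$, in every case $\cW^{s,q}(\Omega)\hookrightarrow\cB^s_{q,\max\{q,2\}}(\Omega)$. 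Thus it suffices to prove the Besov analogue of the theorem, namely $\inf_{g\in\NN(W,L)}\|f-g\|_{L^p(\Omega)}\le C(WL)^{-2s/d}$ whenever $\|f\|_{\cB^s_{q,r}(\Omega)}\le 1$ and $1/q-1/p<s/d$; Theorem \ref{app of sobolev} then follows at once.

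First I would record the nonlinear approximation input. Put $k=\lfloor s\rfloor+1$, and for a parameter $n\in\bN$ to be chosen later, apply the classical theory of adaptive piecewise‑polynomial approximation; see \citet{devore1998nonlinear} and the construction in \citet{siegel2023optimal}. Under $1/q-1/p<s/d$ there exist a family $\cP$ of at most $n$ dyadic subcubes of $\Omega$, all of level at most $J:=\cO(\log n)$, and polynomials $\{P_Q\}_{Q\in\cP}$ of degree $<k$, such that the corresponding multilevel piecewise‑polynomial $g_\cP$ (a sum of boundedly overlapping, Lipschitz atoms) satisfies $\|f-g_\cP\|_{L^p(\Omega)}\le Cn^{-s/d}$; moreover the rescaled coefficients of the $P_Q$ are polynomially bounded in $n$, so rounding them to a suitable dyadic grid yields a quantized approximant $\widetilde g_\cP$ with an extra error $\le n^{-s/d}$. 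Encoding the subdivision tree of $\cP$ together with the quantized coefficients economically (as in \citet{siegel2023optimal}) shows that $\widetilde g_\cP$ is specified by $\cO(n)$ bits. In the linear regime $p\le q$ one may instead take $\cP$ to be the uniform dyadic partition into $n$ congruent cubes, with the same conclusion; the case $p>q$ is precisely the one that forces $\cP$ to be adaptive.

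Next I would assemble, via the composition and parallelization rules of Proposition \ref{basic constr}, a ReLU network implementing $\widetilde g_\cP$ from the following blocks: (i) width $\cO(d)$, depth $\cO(J)$, extracting from $x\in\Omega$ its dyadic address at every level $j\le J$ (the leading bits of each coordinate); (ii) the network of Theorem \ref{rep of vector} applied to the $\cO(n)$‑bit sparse data of the previous step, which — fed these addresses — returns, for the at most $J$ cubes of $\cP$ containing $x$, a membership flag together with the associated quantized coefficient vector; by Theorem \ref{rep of vector} this block can be taken of width $W$ and depth $L$ as soon as $(WL)^2$ exceeds a fixed multiple of $n$; (iii) width $\cO(k^d)$, depth $\cO(\log(1/\delta))$, evaluating all monomials of degree $<k$ in the locally rescaled coordinates up to error $\delta$, using the standard ReLU approximation of $t\mapsto t^2$ and hence of products of $\le kd$ factors; and (iv) forming the linear combinations with the retrieved coefficients, multiplying by the flags, and summing over levels to output $g$. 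Choosing $\delta$ a small power of $n^{-1}$ keeps the monomial‑evaluation error $\le n^{-s/d}$ at the cost of $\cO(\log n)$ additional depth, and the Lipschitz, boundedly overlapping structure of the atoms makes the localization exact enough to add no further error. All blocks other than (ii) have width $\cO_{s,d}(1)$ and depth $\cO(\log n)$, so the full network has width $\cO(W)$ and depth $\cO(L)$ with $\|f-g\|_{L^p(\Omega)}\le Cn^{-s/d}$. Taking $n\asymp (WL)^2$ and rescaling $W,L$ by the implicit constants gives $\inf_{g\in\NN(W,L)}\|f-g\|_{L^p(\Omega)}\le C(WL)^{-2s/d}$ for all sufficiently large $W,L$, the hypothesis of large $W,L$ absorbing the $s,d$‑dependent minimal sizes required by blocks (i) and (iii).

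The main obstacle is block (ii). Decoding the sparse data within a \emph{joint} width‑$W$, depth‑$L$ budget efficiently enough that $n\asymp (WL)^2$ — rather than the $n\asymp WL$ that naive layerwise storage yields, or the essentially fixed‑width decoding of \citet{siegel2023optimal}, which only reaches the rate $L^{-2s/d}$ — is exactly what Theorem \ref{rep of vector} is built to provide, and is the heart of the argument. The remaining, more routine but still delicate, points are the bookkeeping behind Step 1: that the adaptive partition can be taken dyadic of logarithmic depth, that the local coefficients stay polynomially bounded so that the tree‑plus‑coefficients encoding uses only $\cO(n)$ bits, and that the several error sources — adaptive approximation, quantization, monomial evaluation, and localization — are simultaneously driven below a fixed multiple of $n^{-s/d}$, including the borderline case $p=\infty$ where one must keep $g_\cP$ continuous.
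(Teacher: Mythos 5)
Your overall strategy (multilevel piecewise-polynomial approximation, quantized coefficients decoded by the sparse-vector network of Theorem \ref{rep of vector}, product networks for the monomials) is the same as the paper's, and the reduction of the Sobolev case to a Besov statement is a legitimate shortcut (the paper instead runs the Sobolev argument directly via the Bramble--Hilbert lemma and treats Besov by the analogous modifications). But your block (ii) contains a genuine gap, and it sits exactly at the heart of the matter. Theorem \ref{rep of vector} is not a generic ``decode $\cO(n)$ bits with a network of width $W$ and depth $L$ once $(WL)^2\gtrsim n$'' device: it represents a single integer vector $x\in\bZ^N$ with $\|x\|_1\le M$, queried by its index $n\in\{1,\dots,N\}$, and its cost is governed by $W^2L^2\gtrsim M(1+\log(N/M))$ (resp.\ $N(1+\log(M/N))$), not by the raw bit count. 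To apply it to your tree-plus-coefficients data you must recast that data, level by level, as integer vectors indexed by the cells $\ind(\Bi)$ at each level $\ell\le J$, with a level-dependent quantization $b^{-\delta(\ell)}$ chosen so that the resulting $\ell^1$ bounds $M(\ell)$ decay and the widths/depths of the $J=\cO(\log n)$ decoders \emph{sum} to $\cO(W)\times\cO(L)$ rather than to $\cO(W\log n)\times\cO(L)$ or to something carrying the $\log(N/M)$ factor (a naive ``$N\asymp n^{\kappa}$ cells, $M\asymp n$ nonzeros'' application of the theorem already loses a $\log n$ and would only give $((WL)^2/\log(WL))^{-s/d}$). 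This aggregation is precisely what the paper's Proposition \ref{app of poly} together with the four-case choice of $\delta(\ell),S(\ell),T(\ell)$ and the alternation between parallel and sequential summation in Proposition \ref{app of sobolev part} accomplishes, and nothing in your write-up supplies a substitute; asserting that ``(ii) can be taken of width $W$ and depth $L$ as soon as $(WL)^2$ exceeds a fixed multiple of $n$'' is exactly the statement that needs proof.

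A second, smaller but real, gap is the localization. Your blocks (i)--(ii) require computing dyadic addresses and membership flags, which are discontinuous functions of $x$; a ReLU network cannot realize them exactly near cell boundaries, and making the atoms of $g_\cP$ Lipschitz does not remove the discontinuity from the address/flag computation itself. For $p<\infty$ one can confine the damage to a trifling region of small measure, but one still needs a quantitative argument there, and for $p=\infty$ the measure argument is unavailable: the paper handles both uniformly by building several approximants with pairwise coprime bases $b_i$ (so their trifling regions barely overlap) and taking a fixed order statistic via Proposition \ref{select median}. Your sentence that ``the localization is exact enough to add no further error'' and that for $p=\infty$ ``one must keep $g_\cP$ continuous'' does not engage with this obstruction, so as written the endpoint case (and strictly speaking every $p$) is unproved.
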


\begin{theorem}\label{app of besov}
Let $0<s<\infty$ and $1\le r,p,q\le \infty$. If $1/q - 1/p <s/d$, then for sufficiently large $W,L\in \bN$,
\[
\inf_{g\in \NN(W,L)} \|f-g\|_{L^p([0,1]^d)} \le C \|f\|_{\cB^s_{q,r}([0,1]^d)} (WL)^{-2s/d},
\]
for some constant $C$ depending on $s,r,p,q$ and $d$.
\end{theorem}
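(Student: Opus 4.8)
The plan is to reduce both theorems to a single adaptive piecewise‑polynomial approximation problem in a Besov ball, and then to realize the adaptive piecewise polynomial by a ReLU network of the prescribed size, assembled from three blocks that are glued together (and fed auxiliary quantities forward) via Proposition~\ref{basic constr}: a \emph{localization} block that maps the input $x\in[0,1]^d$ to a description of the cell of an adaptive dyadic partition containing $x$, together with the affine image $y(x)\in[0,1]^d$ of $x$ in the reference cube; a \emph{coefficient} block that turns this description into the (quantized) coefficients of the local polynomial on that cell, which is where Theorem~\ref{rep of vector} enters; and an \emph{evaluation} block that evaluates the local polynomial at $y(x)$ using the standard ReLU monomial and multiplication gadgets \citep{yarotsky2017error}.

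\textbf{Reduction.} By homogeneity we may take the relevant norm of $f$ to be at most $1$. Since $\cB^s_{q,r}(\Omega)\hookrightarrow\cB^s_{q,\infty}(\Omega)$ for every $r$, and since $\cW^{s,q}(\Omega)\hookrightarrow\cB^s_{q,\infty}(\Omega)$ --- using $\cB^s_{q,q}=\cW^{s,q}$ when $s\notin\bN$ and the sandwiching embeddings recalled in Section~\ref{sec: main results} when $s\in\bN$ --- it suffices to prove Theorem~\ref{app of besov} with $r=\infty$, which then yields both Theorem~\ref{app of sobolev} and Theorem~\ref{app of besov}. So fix $\Omega=[0,1]^d$ and $f$ with $\|f\|_{\cB^s_{q,\infty}(\Omega)}\le1$, and fix the local polynomial degree $m:=\lceil s\rceil-1$. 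By monotonicity of the classes $\NN(W,L)$ we may also let $(W,L)$ range over a convenient subsequence and assume they are as large as needed; extending to all sufficiently large $W,L$ at the end is routine.

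\textbf{Adaptive approximation and network construction.} By the classical theory of nonlinear approximation by piecewise polynomials on adaptive dyadic partitions \citep{devore1998nonlinear,devore1993constructive}, the strict embedding condition $1/q-1/p<s/d$ guarantees that for every $n\in\bN$ there is a dyadic tree partition $\cP_n$ of $\Omega$ with at most $n$ leaves, all cells of side length at least $2^{-J_n}$ for some $J_n=\cO(\log n)$, and a piecewise polynomial $P_n$ of degree at most $m$ on $\cP_n$ with $\|f-P_n\|_{L^p(\Omega)}\le C n^{-s/d}$; moreover the local polynomials, after rescaling to $[0,1]^d$, have magnitudes that decay suitably across scales, so that after an appropriate quantization the whole collection is encoded as a sparse vector with $\cO(n)$ active entries. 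We realize a quantized version of $P_n$ as follows. The localization block extracts, coordinate by coordinate, the binary digits of $x$ needed to navigate $\cP_n$ down to $x$'s leaf; since $\cP_n$ is dyadic and tree‑structured this costs width $\cO(d)$ and depth $\cO(J_n)=\cO(\log n)$, and the same digits also produce the cell description and the local coordinate $y(x)$. The coefficient block feeds this description into Theorem~\ref{rep of vector}, whose sparse‑vector encoding returns the (quantized) coefficient vector $c(x)$ of the local polynomial; Theorem~\ref{rep of vector} realizes this with a network of width $W'$ and depth $L'$ whenever $W'L'\ge c\sqrt n$ for a constant $c=c(s,p,q,d)$, which is precisely where varying \emph{both} width and depth is essential, the fixed‑width case recovering the bit‑extraction encoding of \citet{siegel2023optimal}. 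The evaluation block then computes $\sum_{|\alpha|\le m} c_\alpha(x)\,y(x)^\alpha$ to accuracy $n^{-s/d}$ with the monomial and multiplication gadgets, using width $\cO(\binom{m+d}{d})$ and depth $\cO(\log n)$, the degree being fixed and $c(x)$ polynomially bounded in $n$. Composing the three blocks produces $g\in\NN(W,L)$ with $W=\cO(W')$ and $L=\cO(L'+\log n)$ and $\|f-g\|_{L^p(\Omega)}\le C n^{-s/d}$; taking $n\asymp(WL)^2$ makes $J_n=\cO(\log(WL))$ negligible against the depth budget, keeps $W'\asymp W$ and $L'\asymp L$, and yields $\|f-g\|_{L^p(\Omega)}\le C(WL)^{-2s/d}$.

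\textbf{Main obstacle.} The crux is the coefficient block: encoding the coefficients of an \emph{adaptive} piecewise polynomial --- a sparse vector with $\sim n$ active entries, each in principle carrying $\sim\log n$ bits --- into a network whose width and depth multiply to only $\sim\sqrt n$, with no spurious logarithmic factor. This is exactly the content of Theorem~\ref{rep of vector}, which generalizes the fixed‑width bit‑extraction of \citet{siegel2023optimal} to the regime where width and depth vary jointly; its proof (the heart of the paper) exploits the scale‑by‑scale decay of the coefficients to keep the effective bit budget proportional to $n$ rather than $n\log n$, which is what removes the logarithmic factors present in earlier work. A second, regime‑specific difficulty, present only in the genuinely nonlinear case $p>q$, lies upstream: one must obtain a near‑best \emph{adaptive} approximation --- on a tree of depth $\cO(\log n)$, with controlled local coefficients --- rather than the uniform‑grid approximation that already suffices when $p\le q$, in which case the adaptive bookkeeping disappears and the construction reduces to those of \citet{lu2021deep,shen2022optimal}. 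The remaining ingredients --- digit extraction, tree navigation, monomial evaluation, and the gluing of blocks --- are assembled from standard ReLU constructions together with Proposition~\ref{basic constr}.
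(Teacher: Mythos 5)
Your overall philosophy (adaptive piecewise polynomials, encode the coefficients via Theorem \ref{rep of vector}, evaluate with product gadgets, glue via Proposition \ref{basic constr}) matches the paper's, but the specific architecture you propose has genuine gaps. The central one is the localization/coefficient circularity created by working with a \emph{single} adaptive tree partition $\cP_n$ with $n$ leaves. The tree itself is data depending on $f$ and carries on the order of $n$ bits, so a localization block of width $\cO(d)$ and depth $\cO(\log n)$ (hence $\cO(d^2\log n)$ parameters) cannot map $x$ to its leaf description: that information must be encoded somewhere with budget $W^2L^2\gtrsim n$. If instead you index by the uniform grid at the finest level $2^{-J_n}$ to make localization cheap, the coefficient vector fed to Theorem \ref{rep of vector} is no longer sparse (every fine cell is active, $N\approx n^{\kappa}$ with $\kappa>1$ when $p>q$, and the $\ell^1$ norm blows up), and the encoding budget is exceeded. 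The paper resolves exactly this tension by never forming an adaptive partition: it uses \emph{uniform} $b$-adic partitions at every scale, decomposes $f=\sum_\ell f_\ell$ via $L^q$-projections, and lets all the adaptivity live in the sparsity of the quantized coefficient vectors at each scale, which is what Theorem \ref{rep of vector} is designed for (Proposition \ref{app of poly} and Proposition \ref{app of sobolev part}). Relatedly, your claim that the bit budget is $\sim n$ rather than $\sim n\log n$ is asserted but not achieved by your construction, and is misattributed: Theorem \ref{rep of vector} alone gives $W^2L^2\lesssim M(1+\log(N/M))$; the removal of logarithmic factors in the function-approximation problem comes from the scale-dependent quantization levels $\delta(\ell)$ and the four-regime balancing of $S(\ell),T(\ell),W_0,L_0$ across scales in the proof of Proposition \ref{app of sobolev part}, for which your single-partition reduction (having merged the scales) has no counterpart.

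A second omission: you never address the trifling region. Digit extraction and cell identification are discontinuous, so a ReLU network can only perform them off a neighborhood of the cell boundaries, and this cannot be waved away as a standard construction: for $p>q$ under only the strict embedding condition $f$ need not be bounded (and $p=\infty$ is included in the statement), so a small-measure argument does not close the gap. The paper needs the dedicated median-of-approximators construction over pairwise relatively prime bases $b_i$ (Proposition \ref{select median} and the proof of Theorem \ref{app of sobolev}) for this. Your reduction to $r=\infty$ via $\cB^s_{q,r}\hookrightarrow\cB^s_{q,\infty}$ is fine (the paper instead runs the Sobolev argument and notes the Besov modifications in Remark \ref{besov detail}), but as written the core quantitative steps of your proof do not go through.
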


The approximation rate $\cO((WL)^{-2s/d})$ is known to be optimal up to logarithmic factors \citep{shen2020deep,siegel2023optimal,yang2022approximation,yarotsky2017error}. Specifically, \citet[Theorem 3]{siegel2023optimal} showed the existence of $f$ with $\|f\|_{\cW^{s,q}([0,1]^d)} \le 1$ and $\|f\|_{\cB^s_{q,r}([0,1]^d)} \le 1$ such that 
\begin{equation}\label{lower bound}
\inf_{g\in \NN(W,L)} \|f-g\|_{L^p([0,1]^d)} \ge C(p,d,s) \min \{ W^2L^2\log(WL), W^3L^2 \}^{-s/d}.
\end{equation}
In particular, when the width $W$ is bounded, we get the optimal rate $\cO(L^{-2s/d})$, which has been proven by \citet{siegel2023optimal}. We remark that the lower bound (\ref{lower bound}) is derived from upper bounds for the VC-dimension of ReLU neural networks (see \citep{bartlett2019nearly} and inequality (\ref{Pdim bound}) below, note that VC-dimension is not larger than pseudo-dimension). 

There is a series of works trying to characterize the approximation rates for deep ReLU networks in terms of the number of nonzero parameters, see \citep{guehring2019error,petersen2018optimal,suzuki2019adaptivity,yarotsky2017error} for instance. Many of these results can be obtained by using Theorems \ref{app of sobolev} and \ref{app of besov}. Indeed, for fully connected networks with depth $L\ge 2$, the number of parameters in the network is $N = \cO(W^2L)$. Our results give the approximation rate $\cO((WL)^{-2s/d}) \le \cO(N^{-s/d})$ for sufficiently wide and deep networks. This rate can be improved to $\cO(N^{-2s/d})$ if the width is bounded so that $N = \cO(L)$.

Let us denote by $W^*$ and $L^*$ the minimal width and depth respectively such that the approximation rate $\cO((WL)^{-2s/d})$ holds in Theorems \ref{app of sobolev} and \ref{app of besov}. Clearly, $W^*$ and $L^*$ depend on $s,r,p,q$ and $d$. Although we do not try to estimate the values of $W^*$ and $L^*$ in this paper, one can get some information from related works. For instance, the result of \citet{siegel2023optimal} implies that $W^*\le 25d+31$, while \citet{hanin2017approximating} showed that the set of ReLU neural networks with width $W\le d$ is not dense in $C(\Omega)$. Thus, the minimal width $W^*$ is linear on the dimension $d$. The recent work \citep{liu2024relu} proved that this result can be further improved to $W^*=d+\cO(1)$ in certain cases. The situation is more complicated for the minimal depth $L^*$. It was shown by \citet[Theorem 6]{yarotsky2017error} and \citet[Theorem 4]{safran2017depth} that the approximation error in $L^2([0,1]^d)$ for any nonlinear function $f\in C^2([0,1]^d)$ is lower bounded by $C_f W^{-2L}$ for some constant $C_f>0$. This lower bound implies that we must have $L^* \ge s/d$ for $p\ge 2$, in order to get the rate $\cO(W^{-2s/d})$ for finite depth. On the other hand, \citet[Corollary 1.2]{lu2021deep} proved that, when the target function class is $C^s([0,1]^d)$ with $s\in \bN$, the depth $L=108s^2+2d$ is sufficient to obtain a slightly weaker bound $\cO((W/\log W)^{-2s/d})$. It would be an interesting problem to give more precise estimations for $W^*$ and $L^*$. We think the tools developed in Section \ref{sec: main proof} would be helpful for this problem.

\section{Applications to machine learning}\label{sec: applications}

This section illustrates how to apply Theorems \ref{app of sobolev} and \ref{app of besov} to study machine learning problems. We will use our approximation results to derive new learning rates for the least squares estimator in the nonparametric regression setting, which has attracted a lot of attention in recent research \citep{chen2022nonparametric,kohler2021rate,nakada2020adaptive,schmidthieber2020nonparametric,suzuki2019adaptivity}. Although this paper focuses on the regression problem, we remark that similar analysis can be applied to study other learning problems, such as classification \citep{kim2021fast,yang2024rates}, solving partial differential equations \citep{duan2022convergence,lu2022machine} and distribution learning by diffusion modeling \citep{oko2023diffusion}.

Suppose we have a data set of $n\ge 2$ samples $\cD_n = \{(X_i,Y_i)\}_{i=1}^n$, which are independent and identically distributed as a $\bR^d \times \bR$-valued random vector $(X,Y)$. Let $\mu$ be the marginal distribution of the covariate $X$. We assume that $\mu$ is supported on $[0,1]^d$ and absolutely continuous with respect to the Lebesgue measure with density $p_X$ which satisfies $0\le p_X(x)\le C<\infty$ on $[0,1]^d$. The goal of nonparametric regression problem is to estimate the so-called regression function $f(x) = \bE[Y|X=x]$ from the observed data $\cD_n$. One of the most popular estimators is the least squares
\begin{equation}\label{least squares}
\widehat{h}_n \in \argmin_{h\in \cH} \frac{1}{n} \sum_{i=1}^n (h(X_i)- Y_i)^2,
\end{equation}
where $\cH$ is a suitably chosen hypothesis class. For simplicity, we assume here and in the sequel that the minimum above indeed exists. In deep learning, the function class $\cH$ is parameterized by deep neural networks. So, we consider the case that $\cH =\NN(W,L)$, where the width $W$ and depth $L$ depend on the sample size $n$ so that we can obtain convergence rate for the estimator $\widehat{h}_n$. The performance of the estimation is measured by the expected risk
\[
\cL(\widehat{h}_n) := \bE_{(X,Y)} [(\widehat{h}_n(X)-Y)^2].
\]
It is equivalent to evaluating the estimator by the excess risk
\[
\|\widehat{h}_n - f\|_{L^2(\mu)}^2 = \cL(\widehat{h}_n) - \cL(f).
\]

In principle, the excess risk can be divided into two components: the approximation error due to the representational capacity of the model and the sample error (also called estimation error) due to the fact that we only have finite samples. We can estimate the approximation error by Theorems \ref{app of sobolev} and \ref{app of besov}, while the sample error is often bounded by the covering number or pseudo-dimension of the model \citep{mohri2018foundations}. Recall that the pseudo-dimension $\Pdim(\cH)$ of a real-valued function class $\cH$ defined on $[0,1]^d$ is the largest integer $m$ for which there exist points $x_1,\dots,x_m \in [0,1]^d$ and constants $c_1,\dots,c_m\in \bR$ such that
\[
\left|\{ \sgn(h(x_1)-c_1),\dots,\sgn(h(x_m)-c_m): h\in \cH \}\right| =2^m.
\]
\citet[Theorems 7 and 10]{bartlett2019nearly} showed that 
\begin{equation}\label{Pdim bound}
\Pdim(\NN(W,L)) \le C \min\{ W^2L^2\log(WL), W^3L^2 \}.
\end{equation}

In the statistical analysis of learning algorithms, we often require that the hypothesis class is uniformly bounded. We define the truncation operator $\cT_B$ with level $B>0$ for real-valued functions $h$ as
\[
\cT_Bh(x) := 
\begin{cases}
h(x) &\quad \mbox{if }|h(x)|\le B, \\
\sgn(h(x)) B &\quad \mbox{if } |h(x)|> B.
\end{cases}
\]
For a function class $\cH$ containing real-valued functions, we denote $\cT_B \cH := \{\cT_Bh: h\in \cH\}$. Note that the truncation can be implemented by the ReLU neural network $\sigma(t) - \sigma(-t) -\sigma(t-B) +\sigma(-t-B) \in \NN(4,1)$. Thus, the truncation of a neural network is also a neural network and $\cT_B \NN(W,L) \subseteq \NN(\max\{W,4\},L+1)$ by Proposition \ref{basic constr} below. The next theorem provides a convergence rate for the truncated least squares $\cT_{B_n} \widehat{h}_n$, where $B_n = c\log n$ for some constant $c>0$ and the hypothesis class is a neural network.

\begin{theorem}\label{rate}
Suppose $s\in (0,\infty)$ and $q\in [1,\infty]$ satisfy $1/q-1/2<s/d$. Let $c_1,c_2,c_3,c_4>0$ below be constants. Assume that the distribution of $(X,Y)$ satisfies $\bE[\exp(c_1 Y^2)]<\infty$ and that the regression function $f\in \cW^{s,q}([0,1]^d) \cap L^\infty([0,1]^d)$ with $\|f\|_{\cW^{s,q}([0,1]^d)}\le 1$ and $\|f\|_{L^\infty([0,1]^d)} \le B$ for some $B\ge 1$. Let $\widehat{h}_n$ be the least squares estimator (\ref{least squares}) with $\cH=\NN(W_n,L_n)$, where $W_n \ge W^*$ and $L_n \ge L^*$ so that Theorem \ref{app of sobolev} can be applied with $p=2$. If $B_n = c_2\log n$ and 
\[
c_3 n^{\frac{d}{2d+4s}} \le W_nL_n \le c_4 n^{\frac{d}{2d+4s}},
\]
then we have
\[
\bE_{\cD_n} \|\cT_{B_n}\widehat{h}_n-f\|_{L^2(\mu)}^2 \le C n^{-\frac{2s}{d+2s}} (\log n)^4,
\]
where  $\bE_{\cD_n}$ indicates the  expectation with respect to the training data $\cD_n$ and $C$ is a constant independent of the regression function $f$ and the sample size $n$.
\end{theorem}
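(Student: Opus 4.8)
The plan is to decompose the excess risk $\|\cT_{B_n}\widehat h_n - f\|_{L^2(\mu)}^2$ into an approximation term and a sample (estimation) term, following the standard route for least-squares analysis with unbounded noise (as in \citet{schmidthieber2020nonparametric,kohler2021rate}). First I would use the sub-Gaussian tail assumption $\bE[\exp(c_1 Y^2)]<\infty$ together with a union bound to argue that, outside an event of probability at most $n^{-1}$ (say), all responses $Y_i$ satisfy $|Y_i|\le B_n = c_2\log n$ for a suitable $c_2$; on this event the truncation level $B_n$ does not distort the fit, and on the complementary event one controls the contribution crudely using $\bE[Y^2 \mathbbm{1}_{Y^2 > B_n^2/c_1'}]$, which is $o(n^{-1})$ by the exponential moment. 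This reduces matters to a bounded-response problem with bound $\cO(\log n)$.

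Next I would invoke the oracle-type inequality for truncated least squares: for any $h \in \cH = \NN(W_n,L_n)$,
\[
\bE_{\cD_n}\|\cT_{B_n}\widehat h_n - f\|_{L^2(\mu)}^2
\le C\Big( \|\cT_{B_n} h - f\|_{L^\infty([0,1]^d)}^2 + \frac{B_n^2 \log \cN_n}{n} \Big),
\]
where $\cN_n$ is (a bound on) the covering number of $\cT_{B_n}\cH$ at scale $1/n$ in the sup-norm. The covering number is controlled through the pseudo-dimension bound (\ref{Pdim bound}): $\log \cN_n \le C\,\Pdim(\NN(W_n,L_n)) \log n \le C W_n^3 L_n^2 (\log n)^2$, and more favorably $\log\cN_n \le C W_n^2 L_n^2 (\log(W_nL_n))(\log n)^2$. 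Using the second bound with $W_nL_n \asymp n^{d/(2d+4s)}$ gives a sample-error term of order $n^{2d/(2d+4s)} \cdot n^{-1} (\log n)^4 = n^{-4s/(2d+4s)}(\log n)^4 = n^{-2s/(d+2s)}(\log n)^4$. For the approximation term, I would apply Theorem \ref{app of sobolev} with $p=2$ to get some $g \in \NN(W_n,L_n)$ with $\|f - g\|_{L^2} \le C (W_nL_n)^{-2s/d} \asymp n^{-2s/(d+2s)}$; to upgrade the $L^2$ bound to the $L^\infty$ bound demanded by the oracle inequality, I would either (i) re-run the approximation construction tracking the sup-norm error — the construction in Section \ref{sec: main proof} should deliver an $L^\infty$ bound when $p=\infty$ is attainable, but here the embedding only gives $L^p$ for finite $p$ — or, more robustly, (ii) truncate $g$ at level $B+1$ so that $\cT_{B_n} g$ is bounded and use $\|\cT_{B_n}g - f\|_{L^2(\mu)} \le \|g-f\|_{L^2(\mu)}$ since $|f|\le B$, feeding an $L^2(\mu)$-version of the oracle inequality instead of the $L^\infty$ one. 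Squaring, the approximation term is $\cO(n^{-2s/(d+2s)})$, which is dominated by (or matches) the sample term.

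I expect the main obstacle to be the precise form of the oracle inequality and making the bounded/unbounded response reduction rigorous with the correct powers of $\log n$. Concretely, the cleanest path uses an oracle inequality of the form $\bE\|\cT_{B_n}\widehat h_n - f\|_{L^2(\mu)}^2 \le C\inf_{h\in\cH}\|h-f\|_{L^2(\mu)}^2 + C B_n^2 (\log \cN_n + 1)/n + (\text{tail remainder})$, valid under sub-Gaussian noise; one must check that the truncation of the estimator (not just of the target) is compatible with the empirical risk minimization, which is where the standard argument splits into the high-probability good event and a negligible remainder. Balancing $W_nL_n \asymp n^{d/(2d+4s)}$ is exactly the choice that equates the two $n$-powers, and the extra $(\log n)^4$ collects one factor of $B_n^2 = c_2^2(\log n)^2$ from the noise truncation and one factor of $(\log n)^2$ from the covering-number/pseudo-dimension estimate. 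A secondary technical point is that $\cT_{B_n}\cH \subseteq \NN(\max\{W_n,4\}, L_n+1)$, so the pseudo-dimension bound (\ref{Pdim bound}) applies to the truncated class with only a constant-factor change, since $B_n \ge 1 \ge B$ eventually and the added layer and width do not affect the rate; I would note this explicitly so that (\ref{Pdim bound}) can be quoted directly.
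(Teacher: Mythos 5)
Your plan is correct and follows essentially the same route as the paper: the paper simply invokes \citet[Supplement B, Lemma 18]{kohler2021rate} to obtain in one step the decomposition into $\cE_{app}=\inf_{h\in\cH}\|f-h\|_{L^2(\mu)}^2$ plus a generalization term driven by empirical covering numbers (that lemma already absorbs the sub-Gaussian truncation argument you propose to redo by hand), then bounds $\cE_{app}$ via Theorem \ref{app of sobolev} with $p=2$, bounds the covering number through Haussler's result, $\Pdim(\cT_{B_n}\cH)\le\Pdim(\cH)$ and (\ref{Pdim bound}), and balances $W_nL_n\asymp n^{d/(2d+4s)}$ exactly as you do. The one point to fix in your write-up is the form of the oracle inequality: use the $L^2(\mu)$/empirical-entropy version (your fallback (ii), which is what the cited lemma provides), since Theorem \ref{app of sobolev} under $1/q-1/2<s/d$ does not give an $L^\infty$ approximation bound, and the pseudo-dimension route controls empirical $L^1$ covering numbers on the sample rather than sup-norm covering numbers over $[0,1]^d$.
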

\begin{proof}
The proof is similar to \citet[Theorem 4.2]{yang2024optimal}. By the assumption on the distribution of $(X,Y)$, we can apply the result of \citet[Supplement B, Lemma 18]{kohler2021rate}, which shows that the error can be bounded as
\[
\bE_{\cD_n} \|\cT_{B_n}\widehat{h}_n-f\|_{L^2(\mu)}^2 \le C\cE_{gen} + 2\cE_{app},
\]
where $\cE_{gen}$ denotes the generalization bound based on metric entropy and $\cE_{app}$ is the approximation error of the hypothesis class:
\begin{align*}
\cE_{gen} :=& \frac{(\log n)^2 \sup_{X_{1:n}\in ([0,1]^d)^n}\log (\cN(n^{-1}B_n^{-1}, \cT_{B_n}\cH,\|\cdot\|_{L^1(X_{1:n})})+1)}{n}, \\
\cE_{app} :=& \inf_{h\in \cH} \|f-h\|_{L^2(\mu)}^2.
\end{align*}
Here, $X_{1:n} =(X_1,\dots,X_n)$ denotes the sequence of sample points on $[0,1]^d$ and $\cN(\epsilon, \cT_{B_n}\cH,\|\cdot\|_{L^1(X_{1:n})})$ denotes the $\epsilon$-covering number of the function class $\cT_{B_n}\cH$ in the metric $\|h_1-h_2\|_{L^1(X_{1:n})} = \frac{1}{n}\sum_{i=1}^n|h_1(X_i)-h_2(X_i)|$. 

Since the density of $\mu$ is bounded, Theorem \ref{app of sobolev} implies that the approximation error can be bounded as 
\[
\cE_{app} = \inf_{h\in \cH} \|f-h\|_{L^2(\mu)}^2 \le C (W_nL_n)^{-\frac{4s}{d}} \le C n^{-\frac{2s}{d+2s}}.
\]
On the other hand, the classical result of \citet[Theorem 6]{haussler1992decision} shows that the covering number can be bounded by pseudo-dimension:
\[
\log \cN(\epsilon, \cT_{B_n}\cH,\|\cdot\|_{L^1(X_{1:n})}) \le C \Pdim(\cT_{B_n}\cH) \log(B_n/\epsilon).
\]
Using $\Pdim(\cT_{B_n}\cH) \le \Pdim(\cH)$ from \citet[Theorem 5]{haussler1992decision} and the pseudo-dimension bound (\ref{Pdim bound}) for neural networks, we get 
\[
\log \cN(\epsilon, \cT_{B_n}\cH,\|\cdot\|_{L^1(X_{1:n})}) \le C W_n^2 L_n^2 \log (W_nL_n) \log(B_n/\epsilon).
\]
This implies the following generalization bound
\[
\cE_{gen} \le C \frac{(\log n)^2 W_n^2 L_n^2 \log (W_nL_n) \log(nB_n^2)}{n} \le C n^{-\frac{2s}{d+2s}} (\log n)^4,
\]
which completes the proof.
\end{proof}

A completely analogous theorem holds for the unit ball of the Besov spaces, i.e. Theorem \ref{rate} holds with the Sobolev space $\cW^{s,q}([0,1]^d)$ replaced by $\cB^s_{q,r}([0,1]^d)$. It is well-known that the convergence rate $n^{-2s/(d+2s)}$ is minimax optimal for these spaces \citep{donoho1998minimax,gine2015mathematical,stone1982optimal}. Thus, deep neural networks can achieve the minimax optimal rates for Sobolev and Besov spaces up to logarithm factors. Similar result has been established in several recent works \citep{kohler2021rate,schmidthieber2020nonparametric,suzuki2019adaptivity}. For comparison, \citet{schmidthieber2020nonparametric} derive minimax optimal rates for learning composition of H\"older functions by sparse neural networks. \citet{suzuki2019adaptivity} showed that the minimax optimal rates also hold for Sobolev and Besov spaces. However, their results rely on the sparsity of neural networks and hence one need to optimize over different network architectures to obtain the optimal rates, which is hard to implement due to the unknown locations of the non-zero parameters. \citet{kohler2021rate} proved that fully connected networks are already able to achieve optimal rates for learning composition of H\"older functions. We complement their results by establishing the optimal rates for learning Sobolev and Besov functions using fully connected networks.

We remark that the above convergence rates suffer from the curse of dimensionality. In practical applications of deep learning, the data distributions are often of high-dimensional but have certain low-dimensional structure \citep{nakada2020adaptive}. For instance, a popular assumption is that the data distribution is concentrated around certain low-dimensional manifold \citep{chen2022nonparametric,jiao2023deep}. In order to deal with this case, it is necessary to generalize Theorems \ref{app of sobolev} and \ref{app of besov} to the approximation on manifolds. We leave this as an open problem for future study.

\section{Constructive proof of main approximation bounds}\label{sec: main proof}

In this section, we give our main construction and proof of Theorems \ref{app of sobolev} and \ref{app of besov}. Following the ideas in \citet{lu2021deep,shen2020deep,shen2022optimal,siegel2023optimal}, we approximate Sobolev and Besov functions by piecewise polynomials and construct deep ReLU neural networks to approximate these piecewise polynomials. To describe this construction, let us first introduce some notations, which are almost identical to those in \citet[Section 4]{siegel2023optimal}.

Throughout this section, we let $b\ge 2$ be a fixed integer unless otherwise specified (in Subsection \ref{sec: app of sobolev}) and suppress the dependence on $b$ in the following notations for convenience. Notice that it is enough to consider the approximation on the half-open cube $\Omega= [0,1)^d$. For any integer $\ell \ge 0$, we can partition $\Omega$ into $b^{dl}$ subcubes:
\begin{equation}\label{partition}
\Omega = \bigcup_{\Bi\in I_\ell} \Omega_\Bi^\ell, \quad \Omega_\Bi^\ell := \prod_{j=1}^d [b^{-\ell} \Bi_j, b^{-\ell}(\Bi_j+1)),
\end{equation}
where the $d$-dimensional multi-index $\Bi$ is in the index set $I_\ell:=\{0,\dots,b^\ell-1\}^d$. For any integer $k\ge 0$, we use $\cP_k$ to denote the space of polynomials with degree at most $k$. The space of piecewise polynomials (with degree at most $k$) subordinate to the partition (\ref{partition}) is denoted by 
\[
\cP_k^\ell := \left\{ f:\Omega \to \bR, f|_{\Omega_\Bi^\ell} \in \cP_k \mbox{ for all } \Bi\in I_\ell \right\}.
\]
Note that this space has a natural basis
\begin{equation}\label{basis}
\rho_{\ell,\Bi}^\gamma(x) := 
\begin{cases}
\prod_{j=1}^d (b^{\ell}x_j-\Bi_j)^{\gamma_j},&  x\in \Omega_\Bi^\ell,\\
0,& x\notin \Omega_\Bi^\ell,
\end{cases}
\end{equation}
where $\gamma=(\gamma_1,\dots,\gamma_d)\in \bN_0^d$ is a multi-index with $|\gamma|:=\sum_{j=1}^d \gamma_j\le k$. Thus, the space $\cP_k^\ell$ is of dimension $\binom{d+k}{k} b^{d\ell}$.

Since ReLU neural networks can only represent continuous piecewise linear functions, it is difficult to directly construct deep ReLU neural networks to approximate piecewise polynomials on the boundary of the partition (\ref{partition}). To overcome this difficulty, \citet{shen2020deep} proposed to remove an arbitrarily small region (called trifling region) from $\Omega$. Specially, given $\epsilon>0$, we define
\begin{equation}\label{good region}
\Omega_{\ell,\epsilon}:= \bigcup_{\Bi\in I_\ell} \Omega_{\Bi,\epsilon}^\ell,\quad \Omega_{\Bi,\epsilon}^\ell := \prod_{j=1}^d
\begin{cases}
[b^{-\ell} \Bi_j, b^{-\ell}(\Bi_j+1)-\epsilon), & \Bi_j<b^\ell -1,\\
[1-b^{-\ell}, 1), & \Bi_j=b^\ell -1.
\end{cases} 
\end{equation}
We will construct deep neural networks to approximate piecewise polynomials from $\cP_k^\ell$ on the good region $\Omega_{\ell,\epsilon}$ in Subsection \ref{sec: app of poly} and then apply this result to derive bounds for the approximations to Sobolev and Besov functions in Subsection \ref{sec: app of sobolev}. The trifling region can be removed by using a construction similar to the method in \citet{lu2021deep,shen2022optimal,siegel2023optimal}.
The key technical contribution of our construction is the neural network representation of vectors presented in Theorem \ref{rep of vector}, which is a generalization of \citet[Theorem 14]{siegel2023optimal}. We collect preliminary results on neural network constructions in Subsection \ref{sec: basic constructions} for the reader's convenience.

\subsection{Basic constructions of neural networks}\label{sec: basic constructions}

In this subsection, we collect several useful results on neural network constructions, which will be the building blocks of our construction of approximations to Sobolev and Besov functions. These results are well-known in the literature and we only make minor modifications. The omitted proofs are given in the Appendix for completeness.

The following proposition gives basic properties of the neural network class $\NN(W,L)$. These properties are widely used in the approximation theory of neural networks, see \citet{devore2021neural,jiao2023approximation,lu2021deep,siegel2023optimal} for instance. The proof can be found in \citet[Proposition 2.5]{jiao2023approximation} and \citet[Proposition 7]{siegel2023optimal}.

\begin{proposition}\label{basic constr}
Let $f_i\in \NN_{d_i,k_i}(W_i,L_i)$ for $i=1,\dots,n$.
\begin{enumerate}[label=\textnormal{(\arabic*)},parsep=0pt]
\item If $d_1=d_2$, $k_1=k_2$ and $W_1\le W_2$, $L_1\le L_2$, then $\cN\cN_{d_1,k_1}(W_1,L_1) \subseteq \cN\cN_{d_2,k_2}(W_2,L_2)$.

\item \textnormal{\textbf{(Composition)}} If $k_1 = d_2$, then $f_2 \circ f_1 \in \cN\cN_{d_1,k_2}(\max\{W_1,W_2\},L_1+L_2)$. The result also holds when $f_1$ or $f_2$ is affine, if we view affine maps as neural networks with width $W=0$ and depth $L=0$.

\item \textnormal{\textbf{(Concatenation)}} If $d_1=d_2$, define $f(x):=(f_1(x),f_2(x))^\top$, then $f\in \cN\cN_{d_1,k_1+k_2}(W_1+W_2,\max\{L_1,L_2\})$.

\item \textnormal{\textbf{(Summation)}} If $d_i=d$ and $k_i=k$ for all $i=1,\dots,n$, then 
\[
\sum_{i=1}^n f_i \in \cN\cN_{d,k} \left(\sum_{i=1}^n W_i,
\max_{1\le i\le n} L_i\right) \cap \NN_{d,k}\left(\max_{1\le i\le n} W_i+2d+2k,\sum_{i=1}^n L_i\right).
\]
\end{enumerate}
\end{proposition}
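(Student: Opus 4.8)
The plan is to obtain all four statements directly from the recursive definition of $\NN(W,L)$, using four elementary manipulations: padding weight matrices and biases with zeros to enlarge the width; inserting identity hidden layers to enlarge the depth (possible because each post-activation vector $g_\ell(x)$ has nonnegative entries, so $\sigma$ restricts to the identity on it); merging two consecutive affine maps into one; and placing two networks side by side with block-diagonal weights. Part (1) is then immediate: enlarging the permitted width leaves the original network valid, and to go from depth $L_1$ to $L_2$ I would insert $L_2-L_1$ layers $g\mapsto\sigma(Ig)=g$ just before the output affine map, which introduces only hidden widths already present and hence $\le W_1\le W_2$.

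For part (2), writing $f_1(x)=A^{(1)}_{L_1}g^{(1)}_{L_1}(x)+b^{(1)}_{L_1}$ and letting $\sigma(A^{(2)}_0\,\cdot\,+b^{(2)}_0)$ be the first hidden layer of $f_2$, substitution gives the single hidden layer $\sigma\big(A^{(2)}_0A^{(1)}_{L_1}\,g^{(1)}_{L_1}(x)+A^{(2)}_0b^{(1)}_{L_1}+b^{(2)}_0\big)$ fed by $g^{(1)}_{L_1}$; stacking the remaining hidden layers of $f_2$ and keeping its output affine map yields a network with $L_1+L_2$ hidden layers and width $\max\{W_1,W_2\}$. When $f_1$ (resp. $f_2$) is affine it is absorbed into the first affine map of $f_2$ (resp. the output affine map of $f_1$), adding no hidden layer, which matches the stated bound with $L_1=0$ (resp. $L_2=0$). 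For part (3), I would first use part (1) to lift $f_1,f_2$ to the common depth $\max\{L_1,L_2\}$ without increasing their widths, then run the two in parallel with block-diagonal weight matrices and stacked output maps, computing $(f_1(x),f_2(x))^\top$ with width $W_1+W_2$ and depth $\max\{L_1,L_2\}$.

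Part (4) splits into two constructions. The membership in $\NN(\sum_iW_i,\max_iL_i)$ follows by iterating part (3) to form $(f_1,\dots,f_n)$ and composing with the linear map $(y_1,\dots,y_n)\mapsto\sum_iy_i$, which by part (2) (affine case) adds neither depth nor width. For the membership in $\NN(\max_iW_i+2d+2k,\sum_iL_i)$, I would chain the networks sequentially: the block computing $f_i$ uses $L_i$ layers with at most $W_i$ neurons each, and alongside it one carries the input $x\in\bR^d$ and the running partial sum $\sum_{j<i}f_j(x)\in\bR^k$ through those layers. A real value $t$ is propagated through one ReLU layer by maintaining $(\sigma(t),\sigma(-t))$ and updating via $(a,b)\mapsto(\sigma(a-b),\sigma(b-a))$, so carrying $x$ costs $2d$ extra neurons per layer and carrying the signed partial sum costs $2k$; the output affine map of block $i$, the accumulation $\sum_{j<i}f_j(x)+f_i(x)$, and the first affine map of block $i+1$ all merge into a single affine map, so the blocks chain with total depth $\sum_iL_i$ and per-layer width at most $\max_iW_i+2d+2k$, the final output map returning $\sum_if_i(x)$ with no trailing activation.

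I expect no genuine obstacle here; the only things needing care are (i) ensuring $\sigma$ really acts as the identity where identity layers are inserted — hence inserting them on post-activation (nonnegative) vectors, and hence carrying the signed partial sum in part (4) via the $2k$-neuron pair rather than one neuron — and (ii) the exact counting of hidden layers and widths at the junctions between composed or chained blocks. This is the same line of argument as in \citet[Proposition 2.5]{jiao2023approximation} and \citet[Proposition 7]{siegel2023optimal}.
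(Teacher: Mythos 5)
Your proposal is correct and follows essentially the same standard construction as the cited proofs (padding for width, identity layers via $\sigma$ on nonnegative post-activations for depth, block-diagonal parallelization, and the sequential chaining that memorizes the input with $2d$ neurons and the signed partial sum with $2k$ neurons via the $(\sigma(t),\sigma(-t))$ trick), which is exactly the argument the paper defers to in \citet[Proposition 2.5]{jiao2023approximation} and \citet[Proposition 7]{siegel2023optimal} and sketches in the discussion following the proposition. The bookkeeping of widths and depths at the junctions is handled correctly, so no gaps remain.
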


Note that Proposition \ref{basic constr} can be applied recursively to construct new neural networks. It is easy to see that a network can be applied to only a few components of its input, because we can use an affine map to select the coordinates. Since the identity map $\Id(x) =x=\sigma(x)-\sigma(-x)\in \NN_{1,1}(2,1)$, whose width can be reduced to one when the sign of $x$ is known, we can ``memorize'' some components of the input and intermediate outputs of a network by concatenation. We will use these facts without comment in the
rest of the paper.

We remark that, in Part (4) of Proposition \ref{basic constr}, we have two ways to construct a neural network which represents the sum of a collection of smaller networks. The first is to concatenate the networks in parallel and compute the sum in the output layer. The second is to compute the sum in a sequential way, in which we use $2d$ neurons to memorize the input and $2k$ neurons to memorize the partial sum. We will mainly use the first construction in the rest of the paper. When it is necessary to use the second construction, we will mention it explicitly.

Recall that ReLU neural networks can only represent continuous piecewise linear functions. For $n\in \bN$, we use $\pwl(n)$ to denote the set of continuous piecewise linear functions $g:\bR\to \bR$ with at most $n$ pieces, that is, there exists at most $n+1$ points $-\infty = t_0 \le t_1 \le \dots\le t_n=\infty$ such that $g$ is linear on the interval $(t_{i-1},t_i)$ for all $i=1,\dots,n$. The points, where $g$ is not differentiable, are called breakpoints of $g$. The next lemma shows that we can represent any functions in $\pwl(n)$ by ReLU networks with $\cO(n)$ parameters.

\begin{lemma}\label{PWL rep}
Let $W,L,n \in \bN$ and $g\in \pwl(n+1)$.
\begin{enumerate}[label=\textnormal{(\arabic*)},parsep=0pt]
\item It holds that $g\in \NN(n+1,1)$. If $g'(t) =0$ for sufficiently small $t$, then $g\in \NN(n,1)$.

\item Assume that the breakpoints of $g$ are in the bounded interval $[\alpha,\beta]$. If $n\le 6W^2L$, then there exists $f\in \NN(6W+2,2L)$ such that $f=g$ on $[\alpha,\beta]$.
\end{enumerate}
\end{lemma}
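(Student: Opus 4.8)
The plan is to treat the two parts separately: part (1) by an explicit one-hidden-layer formula, and part (2) by a telescoping decomposition that reduces everything to $L$ independent depth-$2$ sub-networks stacked in series, the construction of which is the technical heart.

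For part (1), let $t_1<\dots<t_m$ with $m\le n$ be the breakpoints of $g$, and let $a_0,a_1,\dots,a_m$ be the slopes of $g$ on the successive pieces $(-\infty,t_1),(t_1,t_2),\dots,(t_m,\infty)$. I would check directly that
\[
g(x)=g(t_1)-a_0\,\sigma(t_1-x)+a_1\,\sigma(x-t_1)+\sum_{j=2}^{m}(a_j-a_{j-1})\,\sigma(x-t_j)
\]
for every $x\in\bR$: the right-hand side has slope $a_0$ on $(-\infty,t_1)$, its slope jumps by $a_1-a_0$ at $t_1$ and by $a_j-a_{j-1}$ at each $t_j$, and it agrees with $g$ at $t_1$. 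This writes $g$ as an affine combination of $m+1\le n+1$ ReLU units of $x$, so $g\in\NN(n+1,1)$ (pad with zero units and apply Proposition~\ref{basic constr}(1) when $m<n$). If $g'(t)=0$ for all sufficiently small $t$ then $a_0=0$, the term $\sigma(t_1-x)$ disappears, and $g$ is an affine combination of $m\le n$ ReLU units, hence $g\in\NN(n,1)$.

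For part (2), after pre-composing with an affine map of the input (free, by Proposition~\ref{basic constr}(2)) I may assume $\alpha=0$. I would order the $\le n$ breakpoints of $g$, split them into $L$ consecutive groups of size at most $\lceil n/L\rceil\le 6W^2$, and pick separators $a_0<a_1<\dots<a_L$ with the $t$-th group contained in $(a_{t-1},a_t)$, $a_0$ to the left of all breakpoints, and $a_L\ge\beta$. Let $g_t$ be the continuous piecewise linear function equal to $g$ on $(-\infty,a_t]$ and to the affine continuation of its last piece on $[a_t,\infty)$, so $g_0$ is affine and $g_L=g$ on $[0,\beta]$. Then $g=g_0+\sum_{t=1}^{L}(g_t-g_{t-1})$ on $[0,\beta]$, and since no $a_t$ is a breakpoint of $g$, each $\delta_t:=g_t-g_{t-1}$ is continuous piecewise linear, vanishes on $(-\infty,a_{t-1}]$, and has at most $6W^2$ breakpoints (those of $g$ in the $t$-th group). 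Granting the sub-lemma below, each $\delta_t$ lies in $\NN(6W-2,2)$; I would then stack these $L$ networks in series via the sequential summation of Proposition~\ref{basic constr}(4), keeping the input $x$ and the running partial sum $\sum_{s\le t}\delta_s(x)$ in two memory channels (the $2d+2k=4$ extra neurons there) and adding the affine term $g_0(x)$ at the output from the carried copy of $x$. The result has depth $2L$ and width at most $(6W-2)+4=6W+2$, and equals $g$ on $[0,\beta]$.

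The crux, and the step I expect to be the main obstacle, is the sub-lemma: \emph{a continuous piecewise linear $\delta$ with at most $6W^2$ breakpoints that vanishes on a left ray belongs to $\NN(6W-2,2)$.} Since a one-hidden-layer realization (part (1)) would need width $\Omega(W^2)$, depth is genuinely used here. I would group the breakpoints of $\delta$ into $m=O(W)$ consecutive sub-blocks of $O(W)$ breakpoints each, let the first hidden layer output the $O(W)$ ramps $\sigma(x-c_p)$ at the sub-block boundaries $c_p$, and let each of the $O(W)$ second-layer neurons be a ReLU of an affine combination of those ramps: such a neuron is a ReLU of a piecewise linear function with $O(W)$ breakpoints, so it can contribute up to $O(W)$ \emph{new} breakpoints — one in each linear piece, where the affine combination crosses zero — and by choosing the coefficients one forces these zero-crossings to sit exactly at the prescribed, non-uniformly spaced, intra-sub-block breakpoints of $\delta$; a final linear combination of the neurons reproduces $\delta$ on all of $\bR$, the left-ray vanishing ensuring that no bare-$x$ term is needed. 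This is a minor variant of the piecewise-linear-representation (``point-fitting'') constructions of \citet{lu2021deep} and \citet{siegel2023optimal}, which I would invoke after a slight sharpening; verifying the exact constants $6W^2$, $6W+2$, $2L$ is a bookkeeping exercise I would relegate to the appendix. The real difficulty is precisely this: one must force the second-layer zero-crossings to land at arbitrary target abscissae \emph{and} simultaneously produce the correct slopes there — equivalently, show that the depth-$2$, width-$O(W)$ class, whose members can have $\Theta(W^2)$ breakpoints by a simple counting bound, actually contains \emph{every} such $\delta$ rather than merely some of them.
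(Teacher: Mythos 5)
Your part (1) is exactly the paper's argument (explicit one-hidden-layer formula with slope-jump coefficients), so there is nothing to add there. The problem is part (2). Your telescoping reduction to the functions $\delta_t$ is fine, and the sequential-summation bookkeeping giving width $6W+2$ and depth $2L$ is consistent with Proposition~\ref{basic constr}(4); but the entire content of the lemma is then pushed into your sub-lemma that every continuous piecewise linear $\delta$ with at most $6W^2$ breakpoints, vanishing on a left ray, lies in $\NN(O(W),2)$, and this you do not prove. Your plan to get it by ``slightly sharpening'' the point-fitting constructions of \citet{lu2021deep} and \citet{siegel2023optimal} does not work: those lemmas are bit-extraction devices that fit prescribed \emph{values} approximately (or encode integer vectors); they do not exactly reproduce a piecewise linear function with arbitrary prescribed knots \emph{and} slope jumps, which is what you need here. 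Moreover, the naive ``place the zero-crossings of the second-layer units at the target knots'' scheme that you sketch runs into a genuine structural obstruction: at any zero-crossing, $\sigma(h_k)$ has a convex kink, so every breakpoint contributed by a single second-layer unit carries a slope jump whose sign is that of its output weight $c_k$; and if a unit crosses zero in consecutive cells of the first-layer grid, the magnitudes of those jumps are determined by the crossing locations up to a single scalar. Matching an arbitrary sign/magnitude pattern of $\Theta(W^2)$ slope jumps therefore requires a careful assignment of targets to units (sign-splitting, isolating the crossing cells of each unit, and cancelling the spurious kinks created at the grid points), and none of this is ``bookkeeping'': it is precisely the technical heart of the statement.

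The paper avoids all of this by simply invoking \citet[Theorem 3.1]{daubechies2021nonlinear}, which already states that a piecewise linear function with $n$ breakpoints on $[0,1]$ is exactly represented on $[0,1]$ by a network of width $6W+2$ and depth $2\lceil n/(6W^2)\rceil$, and then extends to $[\alpha,\beta]$ by an affine change of the input variable. So either cite that result (in which case your block decomposition is unnecessary), or supply a complete proof of your depth-$2$ sub-lemma along the lines indicated above; as written, the proposal identifies the difficulty but leaves it open, so part (2) has a genuine gap.
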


The key to obtain sharp approximation results for deep ReLU networks is the bit extraction technique, which was introduced to lower bound the VC dimension of neural networks with piecewise polynomial activation \citep{bartlett1998almost,bartlett2019nearly} and used by \citet{yarotsky2018optimal,shen2022optimal} to derive optimal approximation rates for deep ReLU networks. To present the technique, let us denote the binary representation by 
\begin{equation}\label{binary rep}
\Bin x_mx_{m-1}\cdots x_0.x_{-1} \cdots x_{-n} := \sum_{i=-n}^m 2^i x_i,
\end{equation}
for $x_i\in \{0,1\}$, $i=-n,\dots,m$. The next lemma is a minor modification of \citet[Lemma 13]{bartlett2019nearly}.

\begin{lemma}\label{bit extraction}
Let $m,n\in \bN$ satisfy $m\le n$. There exists $f_{n,m}\in\NN_{1,m+1}(2^{m+2}+1,1)$ such that, for any $x=\Bin 0.x_1 \cdots x_n$ with $x_i\in \{0,1\}$, we have
\[
f_{n,m}(x) = (x_1,\dots,x_m, \Bin 0.x_{m+1}\cdots x_n)^\top.
\]
Moreover, for any $L\in \bN$, there exists $f_{n,m,L}\in\NN_{1,2}(2^{\lceil m/L \rceil+2}+2,L)$ such that
\[
f_{n,m,L}(x) = (\Bin x_1\cdots x_m.0, \Bin 0.x_m\cdots x_n)^\top.
\]
\end{lemma}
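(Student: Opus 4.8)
The plan is to build everything from one structural fact about binary expansions. If $x=\Bin 0.x_1\cdots x_n$ and $j\le n$, then $2^jx=\Bin x_1\cdots x_j.x_{j+1}\cdots x_n$, so $\lfloor 2^jx\rfloor=\Bin x_1\cdots x_j$ is a nonnegative integer below $2^j$ and $\{2^jx\}:=2^jx-\lfloor 2^jx\rfloor=\Bin 0.x_{j+1}\cdots x_n$. Although $t\mapsto\lfloor t\rfloor$ is discontinuous, the admissible values of $2^jx$ stay out of the open interval $(\nu-2^{-n},\nu)$ around every integer $\nu$; hence, with the single ramp width $\delta:=2^{-n}$, any continuous piecewise linear ``staircase'' $\phi_N$ that equals $\nu$ on $[\nu,\nu+1-\delta]$ and interpolates linearly on $[\nu+1-\delta,\nu+1]$ for $\nu=0,\dots,N-1$ satisfies $\phi_N(2^jx)=\lfloor 2^jx\rfloor$ for all such $x$ whenever $2^j\le N$. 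This $\phi_N$ has at most $2N$ breakpoints, so $\phi_N\in\NN(2N,1)$ by Lemma \ref{PWL rep}(1), and precomposing it with the affine map $v\mapsto 2^jv$ is free. Both assertions are then obtained by wiring together such staircases and a few coordinates carried through by the identity $\sigma(t)=t$ (legitimate because every quantity involved is nonnegative), using Proposition \ref{basic constr}.

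For the first assertion I would compute all $m$ digits in parallel in a single hidden layer. Viewed as a function of $x$, the digit $x_k$ (with $x_1$ the leading digit) takes values in $\{0,1\}$ and changes value $2^k-1$ times; on the relevant dyadic grid it therefore coincides with a continuous piecewise linear function with at most $2(2^k-1)$ breakpoints, i.e.\ an element of $\NN(2^{k+1},1)$. Concatenating these for $k=1,\dots,m$ together with one neuron $\sigma(x)=x$ (Proposition \ref{basic constr}(3)) uses width at most $\sum_{k=1}^m 2^{k+1}+1<2^{m+2}+1$, and the output layer returns $(x_1,\dots,x_m,\,2^mx-\sum_{k=1}^m 2^{m-k}x_k)$, whose last coordinate equals $2^mx-\Bin x_1\cdots x_m=\Bin 0.x_{m+1}\cdots x_n$, as required.

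For the second assertion, set $m':=\lceil m/L\rceil$ and split $m$ into $\ell\le L$ consecutive groups of digits of sizes at most $m'$, the last of size at least $1$. I would maintain the two-dimensional state $(J,v)=(\Bin x_1\cdots x_j.0,\;\Bin 0.x_{j+1}\cdots x_n)$ after $j$ digits have been consumed, starting from $(0,x)$. A generic layer, responsible for a group of size $a$, computes $g:=\phi_{2^{a}}(2^{a}v)=\lfloor 2^{a}v\rfloor$, carries $\sigma(J)$ and $\sigma(v)$, and emits $J'=2^{a}\sigma(J)+g=\Bin x_1\cdots x_{j+a}$ and $v'=2^{a}\sigma(v)-g=\{2^{a}v\}$; this module lies in $\NN_{2,2}(2^{a+1}+2,1)$ (the first layer is the same with $x$ in place of $v$ and no $J$-term). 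The one twist is the \emph{final} layer: to finish with the tail $\Bin 0.x_m\cdots x_n$ rather than $\Bin 0.x_{m+1}\cdots x_n$, it performs one fewer shift when forming the remainder, i.e.\ it additionally computes $\lfloor 2^{a-1}v\rfloor$ and returns $v'=2^{a-1}\sigma(v)-\lfloor 2^{a-1}v\rfloor=\{2^{a-1}v\}$; its width is at most $3\cdot 2^{m'}+2$. Composing the $\le L$ modules (Proposition \ref{basic constr}(2)) and absorbing a possibly smaller depth via $\NN(W,\ell)\subseteq\NN(W,L)$ (Proposition \ref{basic constr}(1)) gives $f_{n,m,L}\in\NN_{1,2}(2^{\lceil m/L\rceil+2}+2,L)$.

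The routine parts are the correctness of the digit-by-digit recursions, which are elementary identities for binary expansions, and the embedding of a shallower network into a deeper one. What needs genuine care is the width bookkeeping, so that all the widths come in under the stated constants $2^{m+2}+1$ and $2^{\lceil m/L\rceil+2}+2$: fixing $\delta=2^{-n}$ once so that every staircase used is exact on all the dyadic grids that occur, controlling piece counts after composing staircases with affine maps, and checking the degenerate cases ($m=n$, so some tails vanish; $L=1$; $\ell<L$). I expect the last-layer modification in the second part to be the only step that is not immediately mechanical — and it is exactly what forces the digit $x_m$ to reappear in the second output coordinate.
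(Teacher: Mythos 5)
Your proposal is correct and follows essentially the same strategy as the paper: exact continuous piecewise-linear ``staircase'' realizations on the dyadic grid (with ramps hidden in the gaps of width $2^{-n}$), a single-hidden-layer extraction of the leading $m$ bits with the tail recovered affinely from a carried copy of $x$, and a depth-$L$ version obtained by composing block extractors of $\lceil m/L\rceil$ bits while carrying the running integer and remainder through identity neurons. The only differences are implementation details (you build each digit, respectively each block, from one floor-type staircase rather than from sums of interval indicators or iterated copies of the one-layer extractor, and you handle the retained bit $x_m$ by a one-fewer-shift final layer instead of an affine recombination), and your width bookkeeping stays within the stated constants.
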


Using similar idea as the bit extraction technique, we can construct deep neural networks to compute the index $\Bi$ of the good region $\Omega_{\Bi,\epsilon}^\ell$ defined by (\ref{good region}).

\begin{lemma}\label{app partition}
Let $\ell \in \bN_0$ and $0<\epsilon<b^{-\ell}$. For any $L\in \bN$, there exists $q_d \in \NN_{d,1}(2db^{\lceil \ell/L \rceil},L)$ such that
\[
q_d(x) = \ind(\Bi) := \sum_{j=1}^d b^{\ell(j-1)} \Bi_j,\quad \forall x\in \Omega_{\Bi,\epsilon}^\ell.
\]
\end{lemma}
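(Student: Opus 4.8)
The plan is to imitate the bit-extraction technique behind Lemma~\ref{bit extraction}, but carried out in base $b$ and applied to arbitrary real inputs that are only promised to lie in a good region $\Omega_{\Bi,\epsilon}^\ell$, rather than to $n$-digit rationals. Set $n_0:=\lceil \ell/L\rceil$ and assume $\ell\ge 1$ (for $\ell=0$ we have $\ind(\Bi)\equiv 0$ and there is nothing to prove). For $x\in\Omega_{\Bi,\epsilon}^\ell$ one has $\Bi_j=\lfloor b^\ell x_j\rfloor$, the integer formed by the first $\ell$ base-$b$ digits of $x_j$; the idea is to peel off $n_0$ base-$b$ digits of every coordinate in each layer, in parallel over $j=1,\dots,d$, to reconstruct each $\Bi_j$ after $\lceil\ell/n_0\rceil\le L$ layers, and finally to read off $\ind(\Bi)=\sum_{j=1}^d b^{\ell(j-1)}\Bi_j$ in the output layer.

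The one-dimensional building block is a depth-one ``$b$-ary chunk extractor''. For an integer $1\le m\le n_0$ and $0<\delta<b^{-m}$, the map $z\mapsto\lfloor b^m z\rfloor$ is constant (equal to $k$) on each good subinterval $[b^{-m}k,b^{-m}(k+1)-\delta)$, $k=0,\dots,b^m-1$, so on the union of these subintervals it agrees with a continuous piecewise linear function that is flat there, ramps upward across the width-$\delta$ trifling strips, vanishes for small argument, and holds the value $b^m-1$ on $[1-b^{-m},\infty)$; this function has at most $2b^m-1$ pieces and hence lies in $\NN(2b^m-2,1)$ by Lemma~\ref{PWL rep}(1). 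In one layer we apply such a block to the current remainder of each coordinate and spend one extra neuron per coordinate on transmitting that remainder and one on transmitting the index accumulated so far (both quantities are nonnegative, so a ReLU node passes them unchanged); this costs at most $2b^{n_0}$ neurons per coordinate, i.e.\ width $2db^{n_0}=2db^{\lceil\ell/L\rceil}$, and after the layer the remainder is updated to $b^m z-k$ and the partial index to $b^m u+k$ by the layer's affine output map.

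Correctness rests on a scale-reduction invariant. Suppose at some stage a coordinate value $z$ satisfies $z\in[b^{-\lambda}i,\,b^{-\lambda}(i+1)-\eta)$ with $0<\eta<b^{-\lambda}$ (for $i=b^\lambda-1$ read this as $z\in[1-b^{-\lambda},1)$), and put $m:=\min(n_0,\lambda)$. Writing $i=k\,b^{\lambda-m}+i'$ with $k=\lfloor i\,b^{-(\lambda-m)}\rfloor\in\{0,\dots,b^m-1\}$ and $i'=i\bmod b^{\lambda-m}$, one checks directly that $z<b^{-m}(k+1)-\eta$ (or simply $z<1$ when $k=b^m-1$), so the chunk extractor with $\delta=\eta$ --- admissible since $m\le\lambda$ forces $\eta<b^{-\lambda}\le b^{-m}$ --- returns $k$, and the shifted remainder $z':=b^m z-k$ obeys
\[
z'\in\bigl[\,b^{-(\lambda-m)}i',\ b^{-(\lambda-m)}(i'+1)-b^m\eta\,\bigr),\qquad b^m\eta<b^{-(\lambda-m)}.
\]
This is exactly the same hypothesis at level $\lambda-m$ with trifling parameter $b^m\eta$. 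Starting from $\lambda=\ell$, $\eta=\epsilon<b^{-\ell}$ and iterating with $m=\min(n_0,\lambda)$, all $\ell$ digits of $\Bi_j$ are consumed after $\lceil\ell/n_0\rceil$ steps; updating the partial index by $u\mapsto b^m u+k$ at each step (from $u=0$) produces $u=\Bi_j$ at the end. The corner case $\Bi_j=b^\ell-1$, where no trifling gap is present, is handled by the bracketed convention and causes no difficulty.

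Assembling the $L$ layers by Proposition~\ref{basic constr} (a network of depth $\lceil\ell/n_0\rceil\le L$ lies in $\NN(\,\cdot\,,L)$ by part (1)) and folding the linear map $(\Bi_1,\dots,\Bi_d)\mapsto\sum_j b^{\ell(j-1)}\Bi_j$ into the final affine transformation yields $q_d\in\NN_{d,1}(2db^{\lceil\ell/L\rceil},L)$ with $q_d(x)=\ind(\Bi)$ for all $x\in\Omega_{\Bi,\epsilon}^\ell$. The main obstacle is the scale-reduction invariant: one must verify that after deleting the leading digits the shifted remainder again lands in a good subinterval at the finer scale and, above all, that the enlarged trifling parameter $b^m\eta$ stays strictly below the new subcube side length $b^{-(\lambda-m)}$ --- this is precisely where the hypothesis $\epsilon<b^{-\ell}$ is used, and it is used essentially tightly. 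The remaining points (the exact neuron count matching $2db^{\lceil\ell/L\rceil}$ and the digit-reconstruction bookkeeping) are routine.
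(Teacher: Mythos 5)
Your proof is correct and takes essentially the same route as the paper's: the same piecewise-linear base-$b$ chunk extractor (with at most $2b^m-1$ pieces, built via Lemma \ref{PWL rep}) and the same iterative update of remainder and accumulated index, peeling off $\lceil \ell/L\rceil$ digits per layer. The only cosmetic differences are that you run the $d$ coordinates in parallel inside one network rather than summing $d$ copies of a one-dimensional network, and you track correctness through a scale-reduction invariant instead of the paper's explicit $b$-adic digit expansion; the width count and corner-case handling coincide.
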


Finally, in order to remove the trifling region, we will need the following technical construction from \citet[Corollary 13]{siegel2023optimal}, which gives a network that selects any order statistic.

\begin{proposition}\label{select median}
Let $d=2^k$ for some $k\in \bN$. For each integer $1\le j\le d$, there exists $\psi_j\in \NN_{d,1}(4d,k(k+1)/2)$ such that $\psi_j(x) = x_{(j)}$, where $x_{(j)}$ is the $j$-th largest entry of $x\in \bR^d$.
\end{proposition}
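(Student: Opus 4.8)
The plan is to realize $\psi_j$ as a coordinate projection of a ReLU network that sorts its input, built as a \emph{sorting network} of ReLU comparator gadgets. The elementary observation is that, for $a,b\in\bR$,
\[
\max(a,b)=\tfrac12\big((a+b)+\sigma(a-b)+\sigma(b-a)\big),\qquad \min(a,b)=\tfrac12\big((a+b)-\sigma(a-b)-\sigma(b-a)\big),
\]
together with $a+b=\sigma(a+b)-\sigma(-a-b)$. Hence the four numbers $\sigma(a-b),\sigma(b-a),\sigma(a+b),\sigma(-a-b)$ — the output of one ReLU layer applied to the pair $(a,b)$ — determine both $\max(a,b)$ and $\min(a,b)$ through an affine map.

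First I would package one \textbf{comparator round} of a sorting network. Such a round is a map $\bR^d\to\bR^d$ determined by a partition of the $d$ coordinate indices into pairs (plus possibly some singletons), which replaces each pair by its maximum and minimum in a prescribed orientation and fixes the singletons. I claim it is computed by a single hidden ReLU layer of width at most $4d$: the layer outputs, for each pair $(a,b)$, the four gadget neurons $\sigma(a-b),\sigma(b-a),\sigma(a+b),\sigma(-a-b)$, and for each singleton $v$ the two neurons $\sigma(v),\sigma(-v)$, which is $2d$ neurons in all, well within the budget; a subsequent affine map then recovers $\max(a,b),\min(a,b)$ for each pair and $v=\sigma(v)-\sigma(-v)$ for each singleton. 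Because this recovery, and any permutation of the resulting coordinates, is affine, it can be fused with the linear map that forms the gadget inputs of the \emph{next} round. Hence $R$ consecutive comparator rounds are realized by a ReLU network with exactly $R$ hidden layers and width $\le 4d$, its input affine map preparing the first round and its output affine map reading off the state after the last round; one may equally view this as composing $R$ depth-one networks via Proposition~\ref{basic constr}(2).

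It then remains to invoke Batcher's bitonic sorting network: for $d=2^k$ there is a sorting network whose layers are comparator rounds as above (in fact every coordinate is paired in every round) and whose depth is $\sum_{i=1}^{k} i=\tfrac{k(k+1)}{2}$, since sorting $2^k$ inputs decomposes into $k$ successive bitonic merges, the $i$-th acting on blocks of length $2^i$ in $i$ rounds. By the previous paragraph this yields a network in $\NN_{d,d}\big(4d,\tfrac{k(k+1)}{2}\big)$ mapping $x$ to $(x_{(1)},\dots,x_{(d)})$, and post-composing with the linear projection onto the $j$-th coordinate — which merely replaces the output affine map and adds neither depth nor width — gives $\psi_j\in\NN_{d,1}\big(4d,\tfrac{k(k+1)}{2}\big)$ as claimed. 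The only genuine work is the width/depth bookkeeping for a single comparator round — fusing the ``reconstruct the sorted pairs'' affine map with the ``route coordinates into the next round'' affine map so that each round costs exactly one hidden layer — together with citing the exact depth $\tfrac{k(k+1)}{2}$ of the bitonic network; everything else reduces to the elementary $\max/\min$ identities above.
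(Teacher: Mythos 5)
Your proposal is correct and is essentially the same construction that underlies this result: the paper does not prove the proposition itself but cites \citet[Corollary 13]{siegel2023optimal}, whose argument is likewise a Batcher-type bitonic sorting network of ReLU comparator gates (each comparator round realized by one hidden layer via the $\max/\min$ identities, with the reconstruction and routing affine maps fused between rounds), followed by projection onto the $j$-th sorted coordinate. Your bookkeeping is sound — each round needs only $2d$ neurons, comfortably within the stated width $4d$, and the bitonic depth $\sum_{i=1}^{k} i = k(k+1)/2$ matches the claimed depth — so the proof goes through as written.
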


\subsection{Representation of vectors}\label{sec: rep of sparse vec}

This subsection gives the main technical construction for the proof of Theorems \ref{app of sobolev} and \ref{app of besov}. We consider the problem of how efficiently deep ReLU neural networks can represent integer vectors. This problem has also been studied by \citet{siegel2023optimal}, which gave sharp result for networks with constant width. We give a generalization of this result in the next theorem, which pays more attention to the trade-off between width and depth.

\begin{theorem}\label{rep of vector}
Let $N,M \in \bN$ and $x=(x_1,\dots,x_N)^\top\in \bZ^N$ satisfy $\|x\|_1\le M$.
\begin{enumerate}[label=\textnormal{(\arabic*)},parsep=0pt]
\item If $N\ge M$, then for any $S,T \in \bN$, there exists $g\in \NN(W,L)$ with 
\[
W= 22 \max \left\{ \left\lceil \frac{\sqrt{M}}{S \sqrt{T+2}} \right\rceil, \left\lceil \left(\frac{N}{M}\right)^{1/T} \right\rceil \right\} +10, \quad L= 4S(T+2),
\]
such that $g(n) = x_n$ for $n=1,\dots,N$.

\item If $N\le M$, then for any $S,T \in \bN$, there exists $g\in \NN(W,L)$ with 
\[
W= 22 \max \left\{ \left\lceil \frac{M}{S \sqrt{N(T+2)}} \right\rceil, \left\lceil \left(\frac{M}{N}\right)^{1/T} \right\rceil \right\} +12, \quad L= 4 \left\lceil \frac{SN}{M} \right\rceil (T+2),
\]
such that $g(n) = x_n$ for $n=1,\dots,N$.
\end{enumerate}
\end{theorem}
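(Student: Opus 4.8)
My plan is to reduce the representation of a sparse integer vector to a sequence of bit-extraction operations, organized so that the trade-off between width and depth can be tuned freely by the parameters $S$ and $T$. The starting observation is that, since $\|x\|_1 \le M$, the vector $x$ can be encoded by recording (i) the \emph{positions} of the nonzero entries and (ii) their \emph{signed magnitudes}; because the total magnitude is at most $M$, the magnitude information can be stored in roughly $M$ bits, and because there are at most $M$ nonzero entries among $N$ positions, the positional information costs roughly $M\log(N/M)$ bits in a gap-encoding. I would first build a lookup function $g$ whose value at integer input $n$ is obtained by (a) stepping through the encoded bit-string to locate the block corresponding to index $n$, and (b) decoding that block into the integer $x_n$. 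Each of these steps is a piecewise-linear operation on $\mathbb{R}$ that can be implemented by the bit-extraction networks of Lemma \ref{bit extraction} and the piecewise-linear representation of Lemma \ref{PWL rep}.

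The key idea for getting the stated width/depth formulas is a two-dimensional ``digit expansion'' of the bit-string. Rather than reading one bit per layer (which forces depth proportional to the number of bits, i.e.\ constant width), I would group the bits into $S$ chunks, and within each chunk read $\lceil m/L\rceil$-sized groups of bits in parallel using the second part of Lemma \ref{bit extraction}, which trades a factor of $2^{\lceil m/L\rceil}$ in width for a factor-of-$L$ reduction in depth. Concretely, the magnitude part needs about $\sqrt{M}$-scale parameters spread over $S(T+2)$ layers, giving the $\lceil \sqrt{M}/(S\sqrt{T+2})\rceil$ term in the width; the positional (gap) part is handled by a base-$(N/M)^{1/T}$ expansion of the gaps, processed in $T$ rounds, giving the $\lceil (N/M)^{1/T}\rceil$ term. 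The depth $4S(T+2)$ in part (1) comes from $S$ outer rounds, each of which does $T+2$ layers of work (the $+2$ absorbing the overhead of addressing and of the final decoding), with the factor $4$ from the $2L$-layer cost in Lemma \ref{PWL rep} item (2) and from concatenating memorized state. Part (2), the case $N\le M$, is essentially the same construction but now the magnitudes dominate and each of the $N$ indices carries up to $M/N$ worth of magnitude on average, so one replaces $S$ by $\lceil SN/M\rceil$ and adjusts the width terms accordingly (the additive $+12$ versus $+10$ reflecting two extra memorization channels).

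The carpentry I would carry out in order: (1) fix the gap-encoding and magnitude-encoding of $x$ into a single rational number (or a short tuple of rationals) in $[0,1)$ whose binary digits are laid out in $S$ super-blocks; (2) use Lemma \ref{bit extraction} with the depth parameter $L$ tuned to $T+2$ to peel off $\lceil \text{(block size)}/(T+2)\rceil$ bits per layer, so that one super-block is processed in $T+2$ layers with width $\sim 2^{\lceil \cdot/(T+2)\rceil}$; (3) across the $S$ super-blocks, use a running counter (implemented via comparisons, i.e.\ PWL functions) that, given the target index $n$, decides in which super-block the data for $n$ sits and accumulates the partial gap-sum — this is where memorization of $O(1)$ scalars through the depth is needed; (4) after localization, decode the magnitude block into the signed integer $x_n$ by one more bit-extraction-plus-PWL step; (5) tally the resulting width and depth and check they match the claimed formulas, absorbing constants into the $22$ and the $+10/+12$. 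I expect step (3) — the correct interleaving of ``where am I in the bit-string'' bookkeeping with the parallel bit-peeling of step (2), while keeping both the width overhead and the depth overhead to the additive constants promised in the statement — to be the main obstacle; this is precisely the point where the construction must diverge from \citet[Theorem 14]{siegel2023optimal}, whose fixed-width version can afford a simpler sequential scan, and where the case distinction $N\ge M$ versus $N\le M$ genuinely matters because it changes which of the two width terms is the bottleneck and hence how many super-blocks $S$ should control.
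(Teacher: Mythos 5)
Your high-level strategy (encode $x$ as a bit string, decode with bit extraction whose depth is tuned by $T$, and organize the string into roughly $M/S$ blocks so that only $O(S)$, resp.\ $O(\lceil SN/M\rceil)$, pairs are decoded sequentially) is the same as the paper's, but the step you yourself flag as the main obstacle --- your step (3), locating the block relevant to the query index $n$ and recovering the corresponding offset and bit substring --- is left unresolved, and the mechanism you sketch (a running counter scanning the $S$ super-blocks via comparisons, with memorized state) does not obviously fit the stated budget: a sequential scan over blocks either costs extra depth per block or needs the block boundaries themselves to be looked up, which is the same random-access problem again. The paper's resolution is that localization is not computed at all: the block boundaries $j_k$ and the per-block encodings $r_k$ are hard-coded as breakpoints and values of two piecewise linear functions $J,R$ of the input $n$ with $O(M/S)$ pieces, and Lemma \ref{PWL rep}(2) realizes any such function with width $\approx W_1$ and depth $2L_1$ whenever $W_1^2L_1\gtrsim M/S$; choosing $L_1=S(T+2)$ (resp.\ $\lceil SN/M\rceil(T+2)$) yields exactly the width term $\lceil\sqrt{M}/(S\sqrt{T+2})\rceil$ (resp.\ $\lceil M/(S\sqrt{N(T+2)})\rceil$). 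So your attribution of that term to ``the magnitude part spread over $S(T+2)$ layers'' is off: it is the quadratic width--depth trade-off for representing the jump functions (and the large-entry lookup below). After this jump, the rest is a chain of $O(\tau)$ identical blocks implementing the map (\ref{temp nn2}), each peeling one $(f_i,t_i)$ pair via Lemma \ref{bit extraction} in $T+2$ layers ($f_i$ multi-bit in the sparse case, $t_i$ multi-bit in the dense case, giving the $(N/M)^{1/T}$ and $(M/N)^{1/T}$ width terms) and accumulating $t_i\delta_0(\cdot)$ with a hat function, which is consistent with your steps (2) and (4).

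A second missing ingredient is the decomposition $x=u+v$ into entries with $|x_n|\ge S$ and $|x_n|<S$. The encoder/decoder behind (\ref{decoding}) (Lemma \ref{encode-decode}) requires $\|x\|_\infty<S$ so that every entry can be decoded inside a single block of fewer than $2\tau$ pairs; without splitting off the large entries, a single entry of magnitude comparable to $M$ would break the block structure (in the sparse case each unit of magnitude consumes one pair $t_i\in\{0,\pm1\}$), and your sketch does not account for this. The paper handles $u$ separately: it has at most $M/S$ nonzero entries, hence is matched by a piecewise linear lookup with $O(M/S)$ pieces, absorbed by Lemma \ref{PWL rep}(2) into the same width and depth budget and added to $g_v$ at the end. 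As written, your proposal therefore has genuine gaps at precisely the two points where the construction departs from a routine bit-extraction argument.
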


Before proving this theorem, let us make a short discussion on the result. We denote the set of integer vectors which we wish to encode by
\begin{equation}\label{S_NM}
\cS_{N,M} := \left\{ x\in \bZ^N: \|x\|_1 \le M \right\}.
\end{equation}
As shown by \citet{siegel2023optimal}, the cardinality of this set satisfies
\[
\log_2 |\cS_{N,M}| \le C
\begin{cases}
M(1+\log_2(N/M) ), &\mbox{if }N\ge M, \\
N(1+\log_2(M/N) ), &\mbox{if }N\le M.
\end{cases}
\]
This bound also gives an estimate on the number of bits required to encode the set $\cS_{N,M}$. Note that we can use the parameters $S$ and $T$ to tune the size of network in Theorem \ref{rep of vector}. By choosing $T = \lceil\log_2(N/M) \rceil$ if $N\ge M$ and $T = \lceil\log_2(M/N) \rceil$ if $N\le M$, Theorem \ref{rep of vector} shows that $\cS_{N,M}$ can be encoded by a deep neural network whose width $W$ and depth $L$ satisfying
\begin{equation}\label{rep upper bound}
W^2L^2 \le C
\begin{cases}
M(1+\log_2(N/M) ), &\mbox{if }N\ge M, \\
N(1+\log_2(M/N) ), &\mbox{if }N\le M.
\end{cases}
\end{equation}
With the above choice of $T$, if the parameter $S$ is chosen properly, one can recover the result of \citet[Theorem 14]{siegel2023optimal}, which corresponds to the case that $W$ is a constant. \citet[Theorem 25]{siegel2023optimal} also proved a lower bound for the size of network when the set $\cS_{N,M}$ can be encoded: there exists a constant $C<\infty$ such that 
\begin{equation}\label{rep lower bound old}
W^4L^2 \ge C^{-1}
\begin{cases}
M(1+\log(N/M) ), &\mbox{if }N\ge M > C\log N, \\
N(1+\log(M/N) ), &\mbox{if }N\le M < \exp(N/C).
\end{cases}
\end{equation}
When the width $W$ is bounded, this lower bound matches the previous upper bound (\ref{rep upper bound}) and hence the construction is optimal. However, when $W$ is not bounded, the lower bound has worse dependence on the width. In the next theorem, we provide a new lower bound, which is better in certain situations.

\begin{theorem}\label{rep lower bound}
Let $\cS_{N,M}$ be the set defined by (\ref{S_NM}). Suppose that $W\ge 2$, $L\ge 1$ and that for any $(x_1,\dots,x_N)^\top\in \cS_{N,M}$, there exists $g\in \NN(W,L)$ such that $g(n) = x_n$ for $n=1,\dots,N$. 
\begin{enumerate}[label=\textnormal{(\arabic*)},parsep=0pt]
\item If $N \ge M \ge C_0 N^{1/p}$ for some $p \ge 1$ and constant $C_0>0$, then 
\[
W^2L^2 \log(WL) \ge C_p M(1+\log(N/M) ),
\]
for some constant $C_p$ depending on $p$ and $C_0$. 
\item If $N \le M \le C_0 N^p$ for some $p \ge 1$ and constant $C_0>0$, then 
\[
W^2L^2 \log(WL) \ge C_p N(1+\log(M/N) ),
\]
for some constant $C_p$ depending on $p$ and $C_0$.
\end{enumerate}
\end{theorem}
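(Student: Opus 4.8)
The plan is to convert the assumed neural-network encoding of $\cS_{N,M}$ into a bound on the pseudo-dimension (or VC dimension) of a suitable function class, and then invoke the pseudo-dimension upper bound (\ref{Pdim bound}) for ReLU networks to force $W^2L^2\log(WL)$ to be large. The starting point is the cardinality estimate $\log_2|\cS_{N,M}| \ge c\, M(1+\log(N/M))$ in case (1) and $\ge c\, N(1+\log(M/N))$ in case (2); such a matching lower bound on $|\cS_{N,M}|$ is standard (counting compositions / stars-and-bars) and I would state it as a preliminary. By hypothesis, every $x\in\cS_{N,M}$ is realized as $x_n = g(n)$ for some $g\in\NN(W,L)$, so the map $g\mapsto (g(1),\dots,g(N))$ from $\NN(W,L)$ onto (a superset of) $\cS_{N,M}$ shows that the restriction of $\NN(W,L)$ to the $N$ sample points $\{1,\dots,N\}$ takes at least $|\cS_{N,M}|$ distinct values.

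Next I would relate this richness of behavior on $N$ points to pseudo-dimension. The cleanest route: the class $\cH:=\NN_{1,1}(W,L)$ restricted to the points $\{1,\dots,N\}$ realizes at least $|\cS_{N,M}|$ distinct integer vectors, and by a counting/Sauer–Shelah-type argument the number of sign patterns $\{(\sgn(g(n)-c_n))_{n=1}^N : g\in\cH\}$ over all threshold vectors $(c_n)$ is at least $|\cS_{N,M}|$ as well — indeed, distinct integer vectors in $\cS_{N,M}$ can be separated by fixed half-integer thresholds, so the number of distinct sign patterns on these $N$ points is at least $|\cS_{N,M}|$. If $\Pdim(\cH)=m$, then by the Sauer–Shelah lemma the total number of sign patterns achievable on any $N$ points is at most $\sum_{i=0}^m\binom{N}{i}\le (eN/m)^m$ (for $m\le N$). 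Combining, $m\log(eN/m) \ge \log_2|\cS_{N,M}| \ge c\,M(1+\log(N/M))$ in case (1). Now plug in the pseudo-dimension bound (\ref{Pdim bound}), $m = \Pdim(\NN(W,L)) \le C\,W^2L^2\log(WL)$, and also $m\le CW^3L^2$; using $WL\ge$ const so $\log(WL)$ is well-behaved, one gets $W^2L^2\log(WL)\cdot\log(N) \gtrsim M(1+\log(N/M))$.

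The remaining, and genuinely delicate, step is removing the extra $\log N$ factor to land on exactly $W^2L^2\log(WL) \ge C_p\, M(1+\log(N/M))$, which is where the hypotheses $M\ge C_0 N^{1/p}$ (case 1) and $M\le C_0 N^p$ (case 2) enter. The point is that the crude Sauer–Shelah count gives $m\log(eN/m)$, and we need to absorb $\log N$. In case (1), since $M\ge C_0 N^{1/p}$ we have $N\le (M/C_0)^p$, hence $\log N \le p\log M + \mathrm{const}$; and separately the trivial bound $m\ge$ (number of distinct values) forces $m$ to already be polynomially large in $M$, so $\log(eN/m)$ and $\log m$ are both $\Theta_p(\log M)$, and the factor $\log N$ can be charged against $\log m$ (which is itself bounded by $\log(CW^3L^2)\le \mathrm{const}\cdot\log(WL)$). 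Working this out carefully — matching $\log N$, $\log m$, $\log(WL)$ up to $p$-dependent constants, and handling the two cases of the $\min$ in (\ref{Pdim bound}) — is the main obstacle; everything else is bookkeeping. For case (2) the argument is symmetric: $N\le M$ plays the role of the ``small'' dimension, $\log_2|\cS_{N,M}|\ge c\,N(1+\log(M/N))$, and $M\le C_0 N^p$ gives $\log M \le p\log N+\mathrm{const}$ so that again $\log M$, $\log m$, $\log(WL)$ are all comparable up to $p$-dependent constants, yielding $W^2L^2\log(WL)\ge C_p\,N(1+\log(M/N))$. I would present case (1) in full and remark that case (2) follows by the same argument with the roles of $N$ and $M$ interchanged.
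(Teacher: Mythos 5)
There is a genuine gap, and it is not just the bookkeeping you flag at the end. First, a smaller issue: with one fixed threshold per point, the number of sign patterns on $\{1,\dots,N\}$ is at most $2^N$, so it cannot be at least $|\cS_{N,M}|$ in general (in case (2) we have $M>N$ and $|\cS_{N,M}|>2^N$), and even in case (1) distinct vectors such as $(2,0,\dots,0)$ and $(1,0,\dots,0)$ give the same pattern against half-integer thresholds. This part is repairable: in case (1) restrict to the $\binom{N}{M}$ zero-one vectors of support size $M$, and in case (2) one must use many thresholds per point (the paper uses the $NM$ functions $a\mapsto f_a(i)-j$, $j=0,\dots,M-1$). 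The decisive gap is the main step itself: any argument of the form ``the hypothesis forces $\Pdim(\NN(W,L))$ to be large, then apply (\ref{Pdim bound})'' cannot yield the stated bound. The pattern count $\binom{N}{M}$ is fully consistent with pseudo-dimension of order $M$: the class consisting of exactly the indicators of the size-$M$ subsets realizes $\binom{N}{M}$ patterns yet has pseudo-dimension $M$ (for $N\ge 2M$), and Sauer--Shelah reflects this, since $m=M$ already satisfies $(eN/m)^m\ge\binom{N}{M}$. So from $m\log(eN/m)\ge cM(1+\log(N/M))$ together with $m\ge M$ you only get $m\gtrsim M$, never $m\gtrsim M\log(N/M)$; and the ``charge $\log N$ against $\log m$'' step goes the wrong way: from $W^2L^2\log(WL)\log N\gtrsim M(1+\log(N/M))$ and $\log N\le C_p\log(WL)$ you can only conclude $W^2L^2(\log(WL))^2\gtrsim M(1+\log(N/M))$, which is a full $\log(WL)$ factor short of the theorem (using the $W^3L^2$ branch of (\ref{Pdim bound}) instead gives $W^3L^2\log(WL)$, also weaker).

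The paper avoids this by never routing the counting through the pseudo-dimension. It applies Warren's theorem (Lemma \ref{Warren}) to the layer-by-layer partition of parameter space, as in Bartlett et al., to bound the number of sign patterns directly by $(4eN)^{W^2(L+1)(L+2)}$ in case (1), respectively $(4eNM)^{W^2(L+1)(L+2)}$ for the $NM$ shifted outputs in case (2); crucially the exponent is $O(W^2L^2)$ with no $\log(WL)$. Comparing with $\binom{N}{M}$, respectively $(\lfloor M/N\rfloor+1)^{N-1}$, gives $M\log(N/M)\le CW^2L^2\log(4eN)$ (and its analogue). The pseudo-dimension bound (\ref{Pdim bound}) enters only through the elementary shattering inequality $\min\{N,M\}\le\Pdim(\NN(W,L))$, which together with $M\ge C_0N^{1/p}$ (resp.\ $M\le C_0N^p$) converts $\log N$ (resp.\ $\log(NM)$) into $C_p\log(WL)$, producing exactly $W^2L^2\log(WL)\ge C_pM(1+\log(N/M))$. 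If you want to salvage your outline, replace the Pdim/Sauer--Shelah step by this direct polynomial sign-pattern count; as written, the proposal proves a strictly weaker statement.
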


This theorem shows that the upper bound (\ref{rep upper bound}) is sharp up to logarithmic factors in the range $cN^{1/p} \le M \le CN^p$. The proof of Theorem \ref{rep lower bound} is given in Appendix \ref{proof of rep lower bound}. Our proof is based on the estimation of the number of sign patterns that can be matched by the neural network which fits $\cS_{N,M}$. The analysis is similar to the derivation of the lower bound (\ref{rep lower bound old}) in \citet[Theorem 25]{siegel2023optimal}. The main difference is that we use the method presented in \citet[Theorem 7]{bartlett2019nearly} to upper bound the number of sign patterns. 

Now, let us come back to the proof of Theorem \ref{rep of vector}. Following the idea of \citet{siegel2023optimal}, we will construct a neural network that implements a pair of encoding and decoding maps $E:\cS_{N,M}\to \{0,1\}^k$ and $D: \{0,1\}^k \to \cS_{N,M}$ which satisfy $D(E(x)) = x$. The encoding and decoding maps are explicitly given by algorithms in \citet{siegel2023optimal}. In the following lemma, we summarize the essential properties of these maps that we need in our construction.

\begin{lemma}\label{encode-decode}
Let $N,M,S \in \bN$ and $x=(x_1,\dots,x_N)^\top\in \bZ^N$ satisfy $\|x\|_1\le M$ and $\|x\|_\infty <S$. We can encode $x$ by a sequence $(f_1,t_1,f_2,t_2,\dots,f_R,t_R)$, where $f_i,t_i\in \bZ$ are further encoded by certain binary sequences described as follows.

\begin{enumerate}[label=\textnormal{(\arabic*)},parsep=0pt]
\item If $N\ge M$, then we have $R\le 2M$, $f_i \in \{0,1,\dots, \lceil N/M\rceil \}$ is encoded via binary expansion with length at most $1+ \lceil \log_2 (N/M) \rceil$ and $t_i\in \{0,\pm 1\}$ is encoded via $0=00$, $1=10$ and $-1=01$. 

\item If $N\le M$, then we have $R\le 2N$, $f_i\in \{0,1\}$ and $t_i\in \{-\lceil M/N \rceil,\dots, \lceil M/N \rceil\}$ is encoded via binary expansion with length at most $2+ \lceil \log_2  (M/N) \rceil$, whose first bit determines its sign and the remaining bits consist of the binary expansion of its magnitude.
\end{enumerate}
Viewing the integers $f_i$ and $t_i$ as the binary sequences above, we define the encoding
\[
E(x) = \Bin 0.f_1t_1 f_2t_2\cdots f_Rt_R,
\]
by using the concatenation of binary sequences and the representation (\ref{binary rep}).

Furthermore, we can decode the above encoding $E(x)$ as follows. Denote $\tau:=S$ if $N\ge M$ and $\tau:=\lceil SN/M \rceil$ if $N\le M$, and let $\rho:=\lceil R/\tau \rceil$. Then there exist two strictly increasing sequences of integers $(i_k)_{k=0}^\rho$ and $(j_k)_{k=0}^\rho$ with $i_0=1$, $i_\rho=R+1$, $j_0=0$, $j_\rho = N$ and $i_k-i_{k-1}<2\tau$ such that
\begin{equation}\label{decoding}
x_n = \sum_{i=i_k}^{i_{k+1}-1} t_i \delta_0\left(n-j_k-\sum_{m=i_k}^i f_m\right),\quad \forall j_k<n\le j_{k+1},
\end{equation}
where $\delta_0(0)=1$ and $\delta_0(z)=0$ for $z\neq 0$.
\end{lemma}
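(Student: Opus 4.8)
The plan is to recall the encoding and decoding algorithms of \citet[Section 4]{siegel2023optimal} and to verify that they have the listed properties; the lemma is a repackaging of their construction in the form needed for the network of Theorem \ref{rep of vector}. For the encoding I would view $x$ as a sparse histogram and read off the tokens $(f_i,t_i)$ in a single left-to-right sweep over the positions $1,\dots,N$. When $N\ge M$, the $f_i$ are increments of a position pointer, capped at $\lceil N/M\rceil$, and the $t_i\in\{0,\pm1\}$ are unit contributions: a nonzero entry $x_n$ is recorded by one ``arriving'' token with $f$ equal to the remaining distance to $n$ and $t=\sgn(x_n)$, followed by $|x_n|-1$ tokens with $f=0$, $t=\sgn(x_n)$; a gap longer than $\lceil N/M\rceil$, including the final stretch up to position $N$, is bridged by ``filler'' tokens with $f=\lceil N/M\rceil$, $t=0$. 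When $N\le M$ the roles are dual: $f_i\in\{0,1\}$ walks through all $N$ positions one at a time, and $t_i$ carries a signed chunk of magnitude at most $\lceil M/N\rceil$, so $x_n$ costs $\lceil|x_n|/\lceil M/N\rceil\rceil$ tokens, the first with $f=1$ and the rest with $f=0$.

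Counting tokens gives $R\le 2M$ (resp.\ $R\le 2N$). In the first regime there are $\sum_n|x_n|\le M$ value tokens; and since crossing a gap of length $g$ costs at most $(g-1)M/N$ fillers while the final stretch costs at most $(g-1)M/N+1$, and the gap lengths sum to $N$, the total number of fillers is an integer strictly below $M+1$, hence at most $M$. The second regime is symmetric, with exactly $N$ advance tokens and at most $N$ extra chunk tokens. The stated ranges of $f_i$ and $t_i$ are read off the construction, and encoding each $f_i$, $t_i$ by a fixed-length binary string---padded with leading zeros, using the prescribed $2$-bit codes $00,10,01$ for $0,1,-1$, and a leading sign bit in the dual regime---produces exactly the claimed bit lengths (one uses $\lceil\log_2\lceil N/M\rceil\rceil\le\lceil\log_2(N/M)\rceil$ and its analogue). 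The encoding $E(x)$ is then the displayed concatenation, and fixed block lengths make it parse back to $(f_1,t_1,\dots,f_R,t_R)$ unambiguously.

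For the decoding, put $P_i:=\sum_{m=1}^i f_m$, so that $P_0=0$ and $P_R=N$, and note that by construction token $i$ ``targets'' position $P_i$, the tokens targeting a fixed $n$ carry values summing to $x_n$, and filler/advance tokens carry $t_i=0$; hence $\sum_{i:P_i=n}t_i=x_n$ for every $n$. Following \citet{siegel2023optimal}, group the tokens into $\rho=\lceil R/\tau\rceil$ consecutive blocks whose boundaries never split the tokens of a single nonzero entry, i.e.\ each block begins with an arriving token. This is possible because such a group has at most $S-1<\tau=S$ tokens when $N\ge M$ and at most $\lceil(S-1)/\lceil M/N\rceil\rceil\le\tau$ tokens when $N\le M$---the only place the hypothesis $\|x\|_\infty<S$ enters---so a greedy fill yields blocks of size less than $2\tau$. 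Let $i_k$ be the index of the first token of block $k$ (with $i_0=1$, $i_\rho=R+1$) and $j_k:=P_{i_k-1}$; then $j_0=0$, $j_\rho=N$, the $j_k$ strictly increase because each block contains a token with $f\ge1$, and $i_k-i_{k-1}<2\tau$. Finally, for $j_k<n\le j_{k+1}$ the right-hand side of (\ref{decoding}) equals $\sum_{i=i_k}^{i_{k+1}-1}t_i\,\delta_0(n-P_i)$, since $j_k+\sum_{m=i_k}^i f_m=P_i$; because $P_i$ is nondecreasing and $j_k=P_{i_k-1}$, block $k$ contains exactly the tokens that target positions in $(j_k,j_{k+1}]$ and all of them, so this sum is $\sum_{i:P_i=n}t_i=x_n$.

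The main obstacle is the simultaneous bookkeeping rather than any single hard step: the bounds $R\le 2M$ and $R\le 2N$ have essentially no slack, so the filler count must be carried out with the sharp $(g-1)M/N$ estimate and the terminal stretch treated separately; and the block decomposition must at once respect the ``do not split a group'' rule, land at exactly $\rho=\lceil R/\tau\rceil$ blocks, and keep $i_k-i_{k-1}<2\tau$, which is precisely what ties the block length to $\|x\|_\infty<S$ and ensures $j_\rho=N$. Once these are set up, checking that Siegel's algorithms produce a token sequence with the stated ranges and lengths and that (\ref{decoding}) reconstructs $x$ is a routine verification.
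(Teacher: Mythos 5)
Your proposal is correct and takes essentially the same route as the paper: the paper's proof of this lemma is just a two-sentence citation of Algorithms 1--2 and Propositions 15 and 17 of Siegel et al., and your argument is a faithful reconstruction of exactly those constructions, consistent with both worked examples in the paper (token counts, bit lengths, and the block decomposition tied to $\|x\|_\infty<S$). The one soft spot is the assertion that the greedy block decomposition yields exactly $\rho=\lceil R/\tau\rceil$ nonempty blocks of size $<2\tau$ (an edge case near the last block), but this detail is likewise deferred to the cited reference by the paper itself.
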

\begin{proof}
The encoding can be implemented by Algorithms 1 and 2 in \citet{siegel2023optimal} respectively. The sequences $(i_k)_{k=0}^\rho$ and $(j_k)_{k=0}^\rho$ are explicitly constructed in the proof of Propositions 15 and 17 in \citet{siegel2023optimal}.
\end{proof}

We illustrate the encoder and decoder in Lemma \ref{encode-decode} by the following two examples.

\begin{example}[Sparse case $N\ge M$] Let $N=5$, $M=4$, $S=3$ and $x=(0,0,-2,0,1)^\top$. Since the vector $x$ is sparse, we use $t_i\in \{0,\pm 1\}$ to encode the non-zero values and use $f_i\in \{0,1,2\}$ to encode their indexes. To do this, we use $(f_1,t_1,f_2) =(2,0,1)$ to encode the index of the first non-zero value $x_3$, where $f_1+f_2=3$ is the index and $t_1=0$ means that we are encoding the index. We encode $x_3=-2$ by $(t_2,f_3,t_3) = (-1,0,-1)$, where $t_2=t_3=\sgn(x_3)$, $|t_2|+|t_3|=|x_3|$ and $f_3=0$ indicates that we do not move to the next index. Finally, we use $f_4=2$ to encode the distance between the current index to the index of next non-zero value $x_5$ and use $t_4=\sgn(x_5)=1$ to encode the value. Thus, the encoding sequence is $(2,0,1,-1,0,-1,2,1)$ and $E(x) = \Bin 0.1000010100011010$.

To decode, we choose $(i_0,i_1,i_2) = (1,2,5)$ and $(j_0,j_1,j_2)=(0,2,5)$. Then, equality (\ref{decoding}) implies that $x_1=x_2=t_1=0$ and for $3\le n\le 5$,
\[
x_n = \sum_{i=2}^{4} t_i \delta_0\left(n-2-\sum_{m=2}^i f_m\right) 
= -\delta_0(n-3) - \delta_0(n-3) + \delta_0(n-5) = 
\begin{cases}
-2 \quad & n=3,\\
0 \quad & n=4,\\
1 \quad & n=5.
\end{cases}
\]
\end{example}

\begin{example}[Dense case $N\le M$] Let $N=3$, $M=7$, $S=5$ and $x= (-4,1,-2)^\top$. Different from the sparse case, we use $t_i\in \{-3,\dots, 3\}$ to encode the values and use $f_i\in \{0,1\}$ to encode the index. We let $f_1=1$ and encode $x_1$ as $(t_1,f_2,t_2) = (-3,0,-1)$, where $t_1+t_2 = x_1$ and $f_2=0$ indicates that we are staying in the same index. We move to the next index by setting $f_3=1$ and encode $x_2$ by $t_3=x_2=1$. Similarly, we move to the next index by setting $f_4=1$ and encode $x_3$ by $t_4=x_3=-2$. Thus, the encoding sequence is $(1,-3,0,-1,1,1,1,-2)$. If we use three bits to encode $t_i$, then $E(x) = \Bin 0.1011000111011010$.

To decode, we choose $(i_0,i_1,i_2) = (1,4,5)$ and $(j_0,j_1,j_2)=(0,2,3)$. Then, equality (\ref{decoding}) implies that, for $n\le 2$,
\[
x_n = \sum_{i=1}^{3} t_i \delta_0\left(n-\sum_{m=1}^i f_m\right) 
= -3\delta_0(n-1) - \delta_0(n-1) + \delta_0(n-2) = 
\begin{cases}
-4 \quad & n=1,\\
1 \quad & n=2.
\end{cases}
\]
For $n=3$, we have $x_3 = t_4 \delta_0\left(3-2- f_4\right) =-2$.
\end{example}

We prove Theorem \ref{rep of vector} by constructing a neural network to implement the map $n \mapsto x_n$ given by the equality (\ref{decoding}).

\begin{proof}[Proof of Theorem \ref{rep of vector}]
Without loss of generality, we can assume that $S\le M$, because the case $S>M$ is a direct consequence of the case $S=M$. We decompose $x\in \bZ^N$ as $x = u+v$, where
\[ 
u_n := 
\begin{cases}
x_n, &|x_n|\ge S,\\
0, &|x_n|< S,
\end{cases}
\qquad v_n := 
\begin{cases}
0, &|x_n|\ge S,\\
x_n, &|x_n|< S.
\end{cases}
\]
Notice that the large part $u$ has small support:
\[
|\{n:u_n\neq 0\}| \le \frac{\|u\|_1}{S} \le \frac{\|x\|_1}{S} \le \frac{M}{S}.
\]
Thus, there exists a continuous piecewise linear function with at most $3M/S$ pieces which matches the values of $u$. By Lemma \ref{PWL rep}, for any $W_1,L_1\in \bN$ (which will be chosen later) satisfying $2W_1^2L_1 \ge M/S$, there exists $g_u\in \NN(6W_1+2,2L_1)$ such that $g_u(n)=u_n$ for $n=1,\dots,N$. 

It remains to construct a neural network $g_v$ to represent the small part by applying Lemma \ref{encode-decode} to $v\in \bZ^N$. Notice that, by equality (\ref{decoding}), we only need to know $n-j_k$ and $r_k= \Bin 0.f_{i_k} t_{i_k}\cdots f_{i_{k+1}-1} t_{i_{k+1}-1}$ to compute $v_n$. We define two continuous piecewise linear functions on $[0,N]$ by
\begin{align*}
J(z) &= 
\begin{cases}
z-j_k, \quad &j_k+1\le z\le j_{k+1},\\
\mbox{linear}, \quad &j_k<z < j_k+1,
\end{cases} \\
R(z) &= 
\begin{cases}
r_k, \quad &j_k+1\le z\le j_{k+1},\\
\mbox{linear}, \quad &j_k<z < j_k+1.
\end{cases}
\end{align*}
Then, $J(n)=n-j_k$ and $R(n)=r_k$ for $j_k<n\le j_{k+1}$. Observe that $J$ and $R$ has at most $2\rho$ pieces with
\[
\rho \le
\begin{cases}
\left\lceil \frac{2M}{S} \right\rceil \le \frac{3M}{S}, &\mbox{if }N\ge M, \\
\left\lceil \frac{2N}{\lceil SN/M \rceil} \right\rceil \le \left\lceil \frac{2N}{SN/M } \right\rceil = \left\lceil \frac{2M}{S} \right\rceil \le \frac{3M}{S}, &\mbox{if }N< M.
\end{cases}
\]
By Lemma \ref{PWL rep}, if $W_1^2L_1 \ge M/S$, then $J,R \in \NN(6W_1+2,2L_1)$ and
\begin{equation}\label{temp nn1}
n \to 
\begin{pmatrix}
n-j_k \\
r_k \\
0
\end{pmatrix}
\in \NN(12W_1+4,2L_1), \quad \mbox{for }j_k<n\le j_{k+1}.
\end{equation}
Next, we use Lemma \ref{bit extraction} to extract $f_i$ and $t_i$, $i_k\le i<i_{k+1}$, from $r_k$ and compute $v_n$ using (\ref{decoding}).

We first consider the case $N\ge M$. Recall that $i_{k+1}-i_k<2\tau$. We are going to construct a neural network to implement the following map
\begin{equation}\label{temp nn2}
\begin{pmatrix}
z \\
\Bin 0.f_1t_1\cdots f_{2\tau}t_{2\tau} \\
\Sigma
\end{pmatrix}
\to 
\begin{pmatrix}
z-f_1 \\
\Bin 0.f_2t_2\cdots f_{2\tau}t_{2\tau} \\
\Sigma + t_1 \delta_0(z-f_1)
\end{pmatrix},
\end{equation}
where $z,\Sigma\in \bZ$, $f_i\in \bN$ is encoded via binary representation with length $\alpha :=1+ \lceil \log_2 (N/M) \rceil$ and $t_i\in \{0,\pm 1\}$ is encoded via $0=00$, $1=10$ and $-1=01$. The construction is similar to \citet[Lemma 16]{siegel2023optimal}, but we pay more attention to the trade-off between width and depth. We first apply the network $f_{2\tau(\alpha+2),\alpha,T}\in \NN(2^{\lceil\alpha/T \rceil+2}+2,T)$ in Lemma \ref{bit extraction} to extract $f_1$ from the second component. Since $2^{\lceil\alpha/T \rceil+2} \le 16(N/M)^{1/T}$, we have
\[
\begin{pmatrix}
z \\
\Bin 0.f_1t_1\cdots f_{2\tau}t_{2\tau} \\
\Sigma
\end{pmatrix}
\to 
\begin{pmatrix}
z - f_1 \\
\Bin 0.t_1f_2t_2\cdots f_{2\tau}t_{2\tau} \\
\Sigma
\end{pmatrix}
\in \NN(16\lceil (N/M)^{1/T} \rceil+6,T).
\]
Notice that the delta function $\delta_0$ on $\bZ$ can be implemented by the following piecewise linear function
\begin{equation}\label{app delta}
h(z) = 
\begin{cases}
0, & |z|\ge 1,\\
1+z, & -1<z\le 0, \\
1-z, & 0<z<1,
\end{cases}
\end{equation}
which is in $\NN(3,1)$ by Lemma \ref{PWL rep}. We apply $h$ to the first component and use the network $f_{2\tau(\alpha+2),2} \in \NN(17,1)$ in Lemma \ref{bit extraction} to extract the two bits $b_1,b_2$ corresponding to $t_1$ from the second component. This gives
\[
\begin{pmatrix}
z - f_1 \\
\Bin 0.t_1f_2t_2\cdots f_{2\tau}t_{2\tau} \\
\Sigma
\end{pmatrix}
\to
\begin{pmatrix}
z - f_1 \\
h(z-f_1) \\
b_1 \\
b_2 \\
\Bin 0.f_2t_2\cdots f_{2\tau}t_{2\tau} \\
\Sigma
\end{pmatrix}
\in \NN(24,1).
\]
It remains to implement the map $(a,b_1,b_2) \to at_1$, where $a=\delta_0(z-f_1) = h(z-f_1) \in \{0,1\}$, by a network. This can be done by the observation that $t_1 = b_1-b_2$ and 
\[
at_1 = ab_1 -ab_2 = \sigma(a+b_1-1) - \sigma(a+b_2-1).
\]
Combining the above constructions, we obtain that the map (\ref{temp nn2}) is in $\NN(16\lceil (N/M)^{1/T} \rceil+8,T+2)$ by Proposition \ref{basic constr}. 

We compose the network in (\ref{temp nn1}) with $2\tau$ copies of the network in (\ref{temp nn2}) and then use an affine map to select the last component. By equality (\ref{decoding}), this gives us a network $g_v\in \NN(16\max\{W_1, \lceil (N/M)^{1/T} \rceil\}+8,2L_1+2\tau(T+2))$ satisfying $g_v(n) = v_n$ for $n=1,\dots,N$. Note that we pad $r_k= \Bin 0.f_{i_k} t_{i_k}\cdots f_{i_{k+1}-1} t_{i_{k+1}-1}$ with zeros so that it has $2\tau$ blocks of $(f_i,t_i)$. The additional zero blocks have no effect on the computation of $v_n$. As a consequence, we get the desired network $g=g_u+g_v \in \NN(22\max\{W_1, \lceil (N/M)^{1/T} \rceil\}+10,2L_1+2\tau(T+2))$. Now, we choose
\[
L_1=\tau(T+2) = S(T+2).
\]
Recall the requirement that $W_1^2L_1 \ge M/S$, which implies we can choose
\[
W_1 = \left\lceil \frac{\sqrt{M}}{S \sqrt{T+2}} \right\rceil.
\]

If $N\le M$, we need to construct a network to implement the map (\ref{temp nn2}), where $z,\Sigma\in \bZ$, $f_i\in \{0,1\}$ and $t_i\in \{-\lceil M/N \rceil,\dots, \lceil M/N \rceil\}$ is encoded via binary expansion with length $\beta+1 := \lceil \log_2  (M/N) \rceil +2$, whose first bit determines its sign and the remaining bits consist of the binary expansion of its magnitude. For convenience, let us denote the bits of $t_1$ by $b_0b_1\cdots b_\beta$, where $b_0=0$ if $t_1< 0$ and $b_0=1$ otherwise. We first apply the network $f_{2\tau(\beta+2),2} \in \NN(17,1)$ in Lemma \ref{bit extraction} to extract $f_1$ and $b_0$ from the second component
\[
\begin{pmatrix}
z \\
\Bin 0.f_1t_1\cdots f_{2\tau}t_{2\tau} \\
\Sigma
\end{pmatrix}
\to 
\begin{pmatrix}
z - f_1 \\
b_0 \\
\Bin 0.b_1\cdots b_\beta f_2t_2\cdots f_{2\tau}t_{2\tau} \\
\Sigma
\end{pmatrix}
\in \NN(21,1).
\]
Then, we can approximate the delta function by $h\in \NN(3,1)$ defined as (\ref{app delta}) and use the network $f_{2\tau(\beta+2),\beta,T}\in \NN(2^{\lceil \beta/T\rceil +2}+2,T)$ in Lemma \ref{bit extraction} to compute $|t_1|$ from $b_1\cdots b_\beta$. Specifically, 
\[
\begin{pmatrix}
z - f_1 \\
b_0 \\
\Bin 0.b_1\cdots b_\beta f_2t_2\cdots f_{2\tau}t_{2\tau} \\
\Sigma
\end{pmatrix}
\to
\begin{pmatrix}
z - f_1 \\
h(z-f_1) \\
b_0 \\
|t_1| \\
\Bin 0.f_2t_2\cdots f_{2\tau}t_{2\tau} \\
\Sigma
\end{pmatrix}
\in \NN(16\lceil (M/N)^{1/T} \rceil+10,T),
\]
because $2^{\lceil \beta/T\rceil +2} \le 16(M/N)^{1/T}$. It remains to implement the map $(a,b_0,|t_1|) \to at_1$, where $a=h(z-f_1) \in \{0,1\}$, by a network. Observing that
\[
at_1 = 
\begin{cases}
0, &a=0, \\
|t_1|, &a=1,b_0=1, \\
-|t_1|, &a=1,b_0=0,
\end{cases}
\]
and $|t_1|<2^\beta$, we have
\[
at_1 = \sigma(|t_1|-2^\beta(2-a-b_0)) - \sigma(|t_1|-2^\beta(1-a+b_0)).
\]
Hence, the map (\ref{temp nn2}) is in $\NN(16\lceil (M/N)^{1/T} \rceil+10,T+2)$ by Proposition \ref{basic constr}.

As before, we compose the network in (\ref{temp nn1}) with $2\tau$ copies of the network in (\ref{temp nn2}) to get $g_v \in \NN(16\max \{W_1,\lceil (M/N)^{1/T} \rceil\}+10, 2L_1+2\tau(T+2))$ satisfying $g_v(n) = v_n$ for $n=1,\dots,N$. Consequently, we get the desired network $g=g_u+g_v \in \NN(22\max \{W_1,\lceil (M/N)^{1/T} \rceil\}+12,2L_1+2\tau(T+2))$. By choosing
\[
L_1=\tau(T+2) = \lceil SN/M \rceil(T+2),
\]
and
\[
W_1 = \left\lceil \frac{M}{S \sqrt{N(T+2)}} \right\rceil,
\]
we fulfill the requirement $W_1^2L_1 \ge M/S$ and complete the proof.
\end{proof}

\subsection{Approximation of piecewise polynomials}\label{sec: app of poly}

Since the seminal work of \citet{yarotsky2017error}, it is well-known that polynomials can be efficiently approximated by deep ReLU neural networks. The starting point of this theory is the approximation of the product function $(x,y) \mapsto xy$. The following lemma is a modification of \citet[Lemma 5.1]{lu2021deep}.

\begin{lemma}\label{app of product}
For any integers $k\ge 4$ and $L\ge 1$, there exists $f_{k,L}\in \NN(3k2^k+3,L)$ such that $f_{k,L}: [-1,1]^2 \to [-1,1]$ and
\[
|f_{k,L}(x,y) - xy| \le 2^{-2kL-1}, \quad \forall x,y\in [-1,1].
\]
\end{lemma}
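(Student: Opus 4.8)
Proof proposal for Lemma~\ref{app of product}.

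\textbf{Approach.} The standard route, going back to \citet{yarotsky2017error}, is to approximate the squaring function $x \mapsto x^2$ on $[0,1]$ by iterating the ``sawtooth'' construction, and then obtain the product via the polarization identity $xy = \tfrac14\big((x+y)^2 - (x-y)^2\big)$. To get the exponential-in-$L$ accuracy claimed here, I would not use the classical single-sawtooth iteration (which gives accuracy $\sim 4^{-n}$ for a depth-$n$ network of bounded width), but rather the base-$2^k$ refinement: at each of the $L$ hidden layers we do a ``bit extraction''-style step that peels off $k$ binary digits of the argument at once, so that after $L$ layers we have resolved the input to precision $2^{-kL}$, and the squaring error is of order $2^{-2kL}$. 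This is why the width must be allowed to grow like $2^k$ (more precisely $3k2^k+3$): a single layer that processes $k$ bits simultaneously needs $\Theta(2^k)$ neurons, exactly as in Lemma~\ref{bit extraction}.

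\textbf{Key steps.} First I would construct a piecewise-linear interpolant $T_k$ of $x\mapsto x^2$ on $[0,1]$ at the $2^k+1$ breakpoints $j 2^{-k}$, $j=0,\dots,2^k$; this is a function in $\pwl(2^k+1)$, hence in $\NN(2^k+1,1)$ by Lemma~\ref{PWL rep}(1), and its interpolation error is $\|x^2 - T_k(x)\|_\infty \le 2^{-2k-2}$ by the standard bound for linear interpolation of a function with $|f''|=2$ on a mesh of width $2^{-k}$. Second, and this is the crux, I would show that the error of a depth-$L$ composition-type construction contracts by a factor $2^{-2k}$ per layer. Concretely, writing $\phi_k(x) := 2^{2k}\big(T_k(x) - (\text{the piecewise-linear part reconstructed from the top } k \text{ bits})\big)$ one sees $\phi_k$ maps $[0,1]$ into $[0,1]$ and captures the ``remaining scale'': iterating $L$ times and accumulating the scaled corrections $2^{-2jk}$ yields a network $g_{k,L}\in\NN(3k2^k+3,L)$ (the factor $3k$ absorbing the $k$ bits that must be memorized through each layer together with the running partial sum and the identity channels, via Proposition~\ref{basic constr}) with $|g_{k,L}(x)-x^2|\le \sum_{j\ge L} 2^{-2jk}\cdot 2^{-2} \le 2^{-2kL}$ for $x\in[0,1]$. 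Third, I would define $f_{k,L}(x,y)$ by feeding $\tfrac12(x+y)\in[0,1]$ and $\tfrac12|x-y|\in[0,1]$ (the latter via $\sigma$, which costs no depth) into two parallel copies of $g_{k,L}$, taking $f_{k,L} = g_{k,L}(\tfrac{x+y}{2}) - g_{k,L}(\tfrac{|x-y|}{2})$ so that $f_{k,L}(x,y)$ approximates $\tfrac14(x+y)^2 - \tfrac14(x-y)^2 = xy$; the two errors add, giving total error at most $2\cdot\tfrac14\cdot 2^{-2kL}\cdot\text{(a constant)}$, which I would arrange to be $\le 2^{-2kL-1}$ by being slightly more careful with the per-layer bound (e.g.\ using the sharper interpolation constant $2^{-2k-3}$). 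Finally I would clip the output into $[-1,1]$ using $\min\{\max\{\cdot,-1\},1\}$, which is in $\pwl(3)\subseteq\NN(3,1)$ and can be folded into the last layer without increasing depth or the stated width, and verify $f_{k,L}:[-1,1]^2\to[-1,1]$.

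\textbf{Main obstacle.} The delicate point is the exact bookkeeping of width and depth: the per-layer ``peel $k$ bits'' operation must be realized within $3k2^k+3$ neurons while simultaneously (i) carrying the accumulated partial sum of corrections, (ii) carrying the identity/residual channel, and (iii) performing the $2^k$-piece piecewise-linear map — and one must check these fit, invoking the concatenation and summation rules of Proposition~\ref{basic constr} and the width bound of Lemma~\ref{PWL rep}(1) rather than the more expensive Lemma~\ref{PWL rep}(2). Getting the constant in front of $2^{-2kL}$ down to exactly $2^{-2kL-1}$ after composing the polarization identity (which introduces the factor $\tfrac14$ but also doubles the error from two copies of $g_{k,L}$) requires choosing the interpolation/contraction constants with a little room to spare; the hypothesis $k\ge 4$ is what provides that room, since it makes the geometric tail $\sum_{j\ge L}2^{-2jk}$ negligible compared to the leading term. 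I would carry out this constant-chasing at the end once the structural construction is in place.
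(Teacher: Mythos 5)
Your proposal follows essentially the same route as the paper: a piecewise-linear interpolant of the square function at mesh $2^{-kL}$, realized by a depth-$L$ network that handles $k$ dyadic levels (equivalently, $k$ sawtooth scales) per layer with width $\Theta(k2^k)$, followed by a quadratic identity to recover the product; your recursive ``residual'' description is just a reorganization of the identity $h_s(x)=|x|-\sum_{i=1}^s 4^{-i}T_i(|x|)$ that the paper uses, and your polarization identity $xy=\tfrac14\bigl((x+y)^2-(x-y)^2\bigr)$ is a cosmetic variant of the paper's $xy=2\bigl(\tfrac{x+y}{2}\bigr)^2-\tfrac{x^2+y^2}{2}$.

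Two steps, however, are wrong as written and need repair. First, you claim $\tfrac12(x+y)\in[0,1]$ for $x,y\in[-1,1]$; this fails (e.g.\ $x=y=-1$), so you must feed $\tfrac12|x+y|$ into the squaring network or, as the paper does, build the interpolant symmetrically on $[-1,1]$ via $|x|=\sigma(x)+\sigma(-x)$. Also note that your recursion needs the \emph{continuous} triangle-wave local coordinate rather than a literal ``top-$k$-bits'' fractional part, which is discontinuous; this works only because the per-cell error $r(1-r)$ is symmetric under $r\mapsto 1-r$, a point worth making explicit. Second, the claim that the clip $\min\{\max\{\cdot,-1\},1\}$ can be ``folded into the last layer without increasing depth'' contradicts the definition of $\NN(W,L)$, whose output layer is affine with no activation: composing a nonlinear clip after the output costs an extra hidden layer. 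Fortunately the clip is unnecessary: if $h$ denotes the interpolant, then $h$ maps $[0,1]$ (resp.\ $[-1,1]$ in the symmetric version) into $[0,1]$, so $f=h\bigl(\tfrac{|x+y|}{2}\bigr)-h\bigl(\tfrac{|x-y|}{2}\bigr)\in[-1,1]$ automatically; the paper instead secures the range for its three-term identity by the convexity of $h$. With these fixes, and using the interpolation bound $0\le h_{kL}(x)-x^2\le 2^{-2kL-2}$, your two-copy construction gives $|f(x,y)-xy|\le 2^{-2kL-2}$, comfortably within the stated $2^{-2kL-1}$, and fits well inside the width budget $3k2^k+3$.
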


Once we have the approximation of the product function, we can approximate any monomials by viewing them as multi-products and hence can approximate any polynomials. This idea can be further combined with Lemma \ref{app partition} and Theorem \ref{rep of vector} to approximate piecewise polynomials on the good region $\Omega_{\ell,\epsilon}$ defined by (\ref{good region}). We prepare the following proposition for the purpose of proving Theorems \ref{app of sobolev} and \ref{app of besov}.

\begin{proposition}\label{app of poly}
Let $\ell,k \in \bN_0$ and $0<\epsilon<b^{-\ell}$. Suppose that $f\in \cP_k^\ell$ is expanded in terms of the
bases $\rho_{\ell,\Bi}^\gamma$ defined by (\ref{basis}),
\[
f(x) = \sum_{|\gamma|\le k, \Bi \in I_\ell} a_{\Bi,\gamma} \rho_{\ell,\Bi}^\gamma(x).
\]
Let $1\le q\le p\le \infty$ and choose a parameter $\delta > 0$. The following approximation results hold for some constants $C:=C(p,q,d,k,b)$ depending only on $p,q,d,k$ and the base $b$. 
\begin{enumerate}[label=\textnormal{(\arabic*)},parsep=0pt]
\item If $\delta q \le d\ell$, then for any $S,T,W_0,L_0 \in \bN$, there exists $g\in \NN(W,L)$ with 
\begin{align*}
W &\le C \max \left\{ \frac{b^{\delta q/2}}{S \sqrt{T}}, b^{(d\ell -\delta q)/T}, b^{\ell/L_0 }, W_0 2^{W_0} \right\}, \\
L &\le C(ST+L_0),
\end{align*}
such that (with the standard modification when $q=\infty$)
\[
\|f-g\|_{L^p(\Omega_{\ell,\epsilon})} \le C \left(b^{\delta q/p -d\ell/p-\delta} + 4^{-W_0L_0}\right) \left( \sum_{|\gamma|\le k,\Bi \in I_\ell} |a_{\Bi,\gamma}|^q \right)^{1/q}.
\]
\item If $\delta q \ge d\ell$, then for any $S,T,W_0,L_0 \in \bN$, there exists $g\in \NN(W,L)$ with 
\begin{align*}
W &\le C \max \left\{ \frac{b^{\delta+d\ell/2-d\ell/q}}{S \sqrt{T}} , b^{(\delta-d\ell/q)/T}, b^{\ell/L_0 }, W_0 2^{W_0} \right\}, \\
L &\le C \left(\left\lceil b^{-\delta + d\ell/q} S \right\rceil T+L_0 \right),
\end{align*}
such that (with the standard modification when $q=\infty$)
\[
\|f-g\|_{L^p(\Omega_{\ell,\epsilon})} \le C \left(b^{-\delta} + 4^{-W_0L_0}\right) \left( \sum_{|\gamma|\le k,\Bi \in I_\ell} |a_{\Bi,\gamma}|^q \right)^{1/q}.
\]
\end{enumerate}
\end{proposition}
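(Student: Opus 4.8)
The plan is to replace $f$ by a piecewise polynomial $\tilde f$ with quantized (integer-valued, up to a scalar) coefficients, and then to build a network that, on $\Omega_{\ell,\epsilon}$, first locates the index $\Bi$ of the subcube containing the input, then retrieves the quantized coefficients as a function of that index via Theorem \ref{rep of vector}, and finally reassembles $\tilde f$ using a bounded number of approximate products. By homogeneity we may assume $A:=\big(\sum_{\Bi,\gamma}|a_{\Bi,\gamma}|^q\big)^{1/q}=1$ (rescale the output layer; $A=0$ is trivial), so $|a_{\Bi,\gamma}|\le1$. Put $\eta:=2b^{-\delta}$, $m_{\Bi,\gamma}:=\mathrm{round}(a_{\Bi,\gamma}/\eta)\in\bZ$, $\tilde a_{\Bi,\gamma}:=\eta\, m_{\Bi,\gamma}$, $\tilde f:=\sum_{\Bi,\gamma}\tilde a_{\Bi,\gamma}\rho^\gamma_{\ell,\Bi}$. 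The crucial bookkeeping is the $\ell^1$-norm of the integer vector: with $S_\eta:=\{(\Bi,\gamma):|a_{\Bi,\gamma}|\ge\eta/2\}$ one has $m_{\Bi,\gamma}=0$ off $S_\eta$ and $|m_{\Bi,\gamma}|\le2|a_{\Bi,\gamma}|/\eta$ on it, while $|S_\eta|\le\min\{(\eta/2)^{-q},N\}$ with $N:=\binom{d+k}{k}b^{d\ell}$; Hölder's inequality then gives $\|m\|_1\le\tfrac2\eta|S_\eta|^{1-1/q}$, which is $\le b^{\delta q}\ (\le N)$ when $\delta q\le d\ell$ and $\le C\,b^{\delta+d\ell-d\ell/q}\ (\ge N)$ when $\delta q\ge d\ell$.

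For the quantization error: when $\delta q\ge d\ell$, the pointwise bound $|a_{\Bi,\gamma}-\tilde a_{\Bi,\gamma}|\le\eta/2=b^{-\delta}$ with $|\rho^\gamma_{\ell,\Bi}|\le1$ and $|\Omega|=1$ gives $\|f-\tilde f\|_{L^p(\Omega)}\le Cb^{-\delta}$. When $\delta q\le d\ell$, a sharper argument is needed: write $a-\tilde a=e^{(1)}+e^{(2)}$, with $e^{(1)}$ the rounding errors of the ``large'' coefficients (indices in $S_\eta$) — supported on at most $|S_\eta|\le b^{\delta q}$ of the $b^{d\ell}$ subcubes, with entries $\le\eta/2$ — and $e^{(2)}$ the discarded ``small'' coefficients, all $<\eta/2$ with $\ell^q$-norm $\le1$. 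The $e^{(1)}$-part contributes at most $C\,(b^{\delta q})^{1/p}b^{-d\ell/p}\eta\le Cb^{\delta q/p-d\ell/p-\delta}$ in $L^p$; for $e^{(2)}$, the elementary interpolation $\sum_\Bi\max_\gamma|e^{(2)}_{\Bi,\gamma}|^p\le(\eta/2)^{p-q}\sum_{\Bi,\gamma}|e^{(2)}_{\Bi,\gamma}|^q\le(\eta/2)^{p-q}$ (which uses $p\ge q$) gives an $L^p$ contribution $\le Cb^{-d\ell/p}\eta^{1-q/p}=Cb^{\delta q/p-d\ell/p-\delta}$. Summing, $\|f-\tilde f\|_{L^p(\Omega)}$ matches the claimed error term (the $p=\infty$ cases being the usual modification).

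For the network, applying Lemma \ref{app partition} coordinatewise (i.e.\ with $d=1$) and carrying the inputs forward yields a network in $\NN(C_db^{\lceil\ell/L_0\rceil},L_0)$ mapping $x\in\Omega_{\ell,\epsilon}$ to $\big((b^\ell x_j-\Bi_j)_{j=1}^d,\ \ind(\Bi)\big)$, using $\Bi_j=\lfloor b^\ell x_j\rfloor$ on $\Omega_{\ell,\epsilon}$ and that $\ind(\Bi)$ is affine in the $\Bi_j$. For each of the $\binom{d+k}{k}$ multi-indices $\gamma$, Theorem \ref{rep of vector} applied to $(m_{\Bi,\gamma})_{\Bi\in I_\ell}\in\bZ^{b^{d\ell}}$ (length $\le N$, $\ell^1$-norm $\le\|m\|_1$) with parameters $S,T$ produces $g_\gamma\in\NN(W',L')$ with $g_\gamma(\ind(\Bi)+1)=m_{\Bi,\gamma}$ (here $\ind$ is a bijection of $I_\ell$ onto $\{0,\dots,b^{d\ell}-1\}$); substituting the bounds on $\|m\|_1$ into Theorem \ref{rep of vector} — its sparse part when $\delta q\le d\ell$, its dense part otherwise, and either part up to a harmless constant at the boundary $\delta q=d\ell$ — reproduces exactly the $W$ and $L$ of the statement. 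Composing the index network with the $g_\gamma$'s run in parallel, then for each $\gamma$ forming $\prod_j(b^\ell x_j-\Bi_j)^{\gamma_j}$ by at most $k$ approximate products from Lemma \ref{app of product} (parameter $W_0$, depth $L_0$, each a map $[-1,1]^2\to[-1,1]$), multiplying by $\tilde a_{\Bi,\gamma}/2=b^{-\delta}m_{\Bi,\gamma}\in[-1,1]$ through one more approximate product, rescaling by $2$, and summing over $\gamma$ (Proposition \ref{basic constr}(4)), gives $g\in\NN(W,L)$ whose width is the maximum of those of the parallel blocks — $C_db^{\lceil\ell/L_0\rceil}$, $C_kW'$, $C_kW_02^{W_0}$ — and whose depth is $C_k(L'+L_0)$, both bounded by the asserted $W$ and $L$.

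It remains to estimate $\|\tilde f-g\|_{L^p(\Omega_{\ell,\epsilon})}$. Propagating the per-product error $2^{-2W_0L_0-1}$ of Lemma \ref{app of product} through the $O(\log k)$ multiplication levels (each product network is $1$-Lipschitz in each argument with range in $[-1,1]$) gives $|\rho^\gamma_{\ell,\Bi}(x)-\widetilde{\rho^\gamma_{\ell,\Bi}}(x)|\le C_k4^{-W_0L_0}$, hence $|\tilde f-g|\le C_k4^{-W_0L_0}\sum_\gamma|\tilde a_{\Bi,\gamma}|$ on $\Omega_\Bi^\ell$; summing $L^p$ norms over subcubes and using $|\tilde a_{\Bi,\gamma}|\le|a_{\Bi,\gamma}|+\eta/2$ together with $\|a\|_{\ell^p}\le\|a\|_{\ell^q}=1$ (again $p\ge q$) gives $\|\tilde f-g\|_{L^p}\le C_k4^{-W_0L_0}(b^{-d\ell/p}+b^{-\delta})\le C_k4^{-W_0L_0}$. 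Combining with the quantization error via the triangle inequality and undoing the normalization (multiplying every bound by $A$) yields both claimed estimates. I expect the main obstacle to be the sharp quantization estimate in case (1): naive rounding only gives $Cb^{-\delta}$, so one must use the large/small split and the $\ell^q$–$\ell^\infty$ interpolation to gain the factor $(b^{\delta q-d\ell})^{1/p}\le1$, while simultaneously ensuring that the very same rounding leaves an integer vector whose $\ell^1$-norm is small enough to feed into Theorem \ref{rep of vector} and recover the stated width and depth on the nose; the rest — the coordinatewise use of Lemma \ref{app partition}, sizing the composed network, and propagating errors through the products — is routine.
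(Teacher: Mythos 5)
Your proposal is correct and follows essentially the same route as the paper's proof: quantize the coefficients at scale $b^{-\delta}$, bound the $\ell^1$-norm of the resulting integer vector via Chebyshev/H\"older, retrieve it with Theorem \ref{rep of vector} after locating the subcube through Lemma \ref{app partition}, and assemble the monomials with the approximate products of Lemma \ref{app of product}; your large/small coefficient splitting for the quantization error in case (1) is just an unpacked form of the paper's $\ell^q$--$\ell^\infty$ interpolation bound for $\|a-\widetilde a\|_p$, and your per-$\gamma$ parallel retrieval matches the paper's reduction to a single $\gamma$ by the triangle inequality. One cosmetic caveat: the error propagation through the product tree should be justified by the exact product's $1$-Lipschitzness in each argument together with the range guarantee $f_{k,L}\colon[-1,1]^2\to[-1,1]$ (and one should use $f_{W_0+3,L_0}$ so the hypothesis $k\ge 4$ of Lemma \ref{app of product} holds for all $W_0\in\bN$), rather than an asserted $1$-Lipschitz property of the approximate product network, which is neither proved nor needed.
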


This proposition may seem to be complicated at first glance. So let us explain the intuition and meaning of the parameters $\delta,S,T,W_0,L_0$. The approximation of the piecewise polynomial can be divided into two parts. The first part is the approximation of the coefficients $a_{\Bi,\gamma}$. We will first discretize these coefficients and then encode them by using the network from Theorem \ref{rep of vector}, which gives us two tunable parameters $S$ and $T$. The parameter $\delta$ (more precisely $b^{-\delta}$) represents the discretization level. The conditions $\delta q \le d\ell$ and $\delta q \ge d\ell$ correspond to the sparse and dense regimes respectively in Theorem \ref{rep of vector}. The second part is the approximation of the base functions $\rho_{\ell,\Bi}^\gamma(x)$ and the product $(a_{\Bi,\gamma}, \rho_{\ell,\Bi}^\gamma(x)) \mapsto a_{\Bi,\gamma} \rho_{\ell,\Bi}^\gamma(x)$. This can be done by using lemmas \ref{app partition} and \ref{app of product}. The parameters $W_0$ and $L_0$ are used to tune the size of networks constructed in the second part. We remark that, in general, the networks in the second part can be much smaller than the encoding network in the first part, because their approximation errors decay as $4^{-W_0L_0}$.

\begin{proof}[Proof of Proposition \ref{app of poly}]
Let us consider the decomposition $f = \sum_{|\gamma|\le k} f_\gamma$ where
\[
f_\gamma(x) = \sum_{\Bi \in I_\ell} a_{\Bi,\gamma} \rho_{\ell,\Bi}^\gamma(x).
\]
By Proposition \ref{basic constr} and the triangle inequality, it is enough to prove the result for each $f_\gamma$ at the expense of larger constants. Thus, we assume that $f:= f_\gamma$ and write $a_{\Bi} :=a_{\Bi,\gamma}$ in the following analysis. Without loss of generality, we can further assume that 
\[
\|a \|_q := \left( \sum_{\Bi \in I_\ell} |a_{\Bi}|^q \right)^{1/q} \le 1
\]
with the standard modification when $q=\infty$, where $a:=(a_\Bi)_{\Bi \in I_\ell}$ denotes the vector of coefficients. In order to use Theorem \ref{rep of vector}, we will need to discretize these coefficients. Given $\delta >0$, we can approximate $a$ by $\widetilde{a}=(\widetilde{a}_\Bi)_{\Bi \in I_\ell}$ with
\[
\widetilde{a}_\Bi:= b^{-\delta} \sgn(a_\Bi) \lfloor b^\delta |a_\Bi| \rfloor.
\]
It is easy to see that $\|a-\widetilde{a}\|_\infty \le b^{-\delta}$ and $\|a-\widetilde{a}\|_q \le \|a\|_q \le 1$. Since $|I_\ell|=b^{d\ell}$, the uniform bound implies $\|a-\widetilde{a}\|_p \le b^{d\ell/p -\delta}$. On the other hand, since $p\ge q$,
\[
\|a-\widetilde{a}\|_p \le \|a-\widetilde{a}\|_q^{q/p} \|a-\widetilde{a}\|_\infty^{1-q/p} \le b^{\delta q/p -\delta}.
\]
In summary, we have
\begin{equation}\label{bound for coe}
\|a-\widetilde{a}\|_p \le b^{-\delta} \min\{ b^{d\ell/p}, b^{\delta q/p} \}.
\end{equation}

To construct the desired network $g$, we first apply $q_1,q_d \in \NN(2db^{\lceil \ell/L_0 \rceil},L_0)$ in Lemma \ref{app partition} to compute the index of the input
\begin{equation}\label{compute ind}
x \to 
\begin{pmatrix}
q_1(x_1) \\
\vdots \\
q_1(x_d) \\
q_d(x) \\
x
\end{pmatrix}
\to 
\begin{pmatrix}
b^\ell x_1 - q_1(x_1) \\
\vdots \\
b^\ell x_d - q_1(x_d) \\
q_d(x) 
\end{pmatrix}.
\end{equation}
This map can be implemented by a network with width $W_1 \le C b^{\ell/L_0 }$ and depth $L_1 = L_0$. Furthermore, for $x\in \Omega_{\Bi,\epsilon}^\ell$, this map becomes $x \to (b^\ell x_1 - \Bi_1,\cdots, b^\ell x_d - \Bi_d,\ind(\Bi))^\top$. Next, we construct a neural network to implement the map $\ind(\Bi) \mapsto b^{-\delta}u_{\ind(\Bi)}$, where we define $u\in \bZ^{b^{d\ell}}$ as the vector whose $\ind(\Bi)$-th entry is given by $u_{\ind(\Bi)} = b^{\delta} \widetilde{a}_\Bi = \sgn(a_\Bi) \lfloor b^\delta |a_\Bi| \rfloor$. This can be done by using Theorem \ref{rep of vector}. We let $N=b^{d\ell}$ and estimate $\|u\|_1$ as follows. Observe that $\|u\|_q \le b^\delta \|a\|_q \le b^\delta$, which implies
\[
|\{i:u_i\neq 0 \}| \le \min\{ b^{\delta q},N \},
\]
since $u\in \bZ^N$. Using H\"older's inequality, we get
\[
\|u\|_1 \le |\{i:u_i\neq 0 \}|^{1-1/q} \|u\|_q \le b^\delta \min\{ b^{\delta q},b^{d\ell} \}^{1-1/q}.
\]
Thus, we can apply Theorem \ref{rep of vector} with $M=b^\delta \min\{ b^{\delta q},b^{d\ell} \}^{1-1/q}$ to the vector $u$ and get a network $\phi \in \NN(W_2,L_2)$ such that $\phi(\ind(\Bi)) = \widetilde{a}_\Bi$. If $\delta q \le d\ell$, then $M=b^{\delta q} \le b^{d\ell} = N$ and we can choose 
\[
W_2\le C \max \left\{ \left\lceil \frac{b^{\delta q/2}}{S \sqrt{T}} \right\rceil , b^{(d\ell -\delta q)/T} \right\}, \quad L_2\le CST.
\]
If $\delta q \ge d\ell$, then $M = b^{\delta +d\ell - d\ell/q} \ge b^{d\ell} =N$ and we can choose
\[
W_2 \le C \max \left\{ \left\lceil \frac{b^{\delta+d\ell/2-d\ell/q}}{S \sqrt{T}} \right\rceil, b^{(\delta-d\ell/q)/T} \right\}, \quad L_2 \le C \left\lceil b^{-\delta + d\ell/q} S \right\rceil T.
\]
Composing $\phi$ with the last component of the network (\ref{compute ind}), we get a neural network $h\in \NN(\max\{W_1,W_2+2d\},L_1+L_2)$, which satisfies
\[
h(x) =
\begin{pmatrix}
b^\ell x_1 - \Bi_1 \\
\vdots \\
b^\ell x_d - \Bi_d \\
\widetilde{a}_\Bi
\end{pmatrix}
\in [-1,1]^{d+1}, \quad \forall x \in \Omega_{\Bi,\epsilon}^\ell.
\]
The final step is to approximate the polynomial $(y,z) \mapsto z \prod_{j=1}^d y_j^{\gamma_j}$ for $y\in[-1,1]^d$ and $z\in [-1,1]$. By using the network $f_{W_0+3,L_0} \in \NN(24(W_0+3)2^{W_0}+3,L_0)$ from Lemma \ref{app of product}, we define the following neural networks
\[
P_j:
\begin{pmatrix}
y \\
z
\end{pmatrix}
\to
\begin{pmatrix}
y \\
f_{W_0+3,L_0}(y_j,z)
\end{pmatrix},
\quad j=1,\dots,d.
\]
We construct $P_\gamma$ by composing $\gamma_j$ copies of $P_j$ and then applying an affine map which selects the last coordinate. Consequently, $P_\gamma \in \NN(W_3,L_3)$ with $W_3 \le CW_0 2^{W_0}$ and $L_3 \le kL_0$. Since all entries are in $[-1,1]$, Lemma \ref{app of product} implies that the approximation error can be bounded as
\[
\left|P_\gamma(y,z) - z \prod_{j=1}^d y_j^{\gamma_j} \right| \le \sum_{j=1}^d \gamma_j 2^{-2(W_0+3)L_0-1} \le C 4^{-W_0L_0}.
\]
By composing $P_\gamma$ with $h$, we obtain the desired network $g\in \NN(W,L)$, whose width $W\le C \max\{W_1,W_2,W_3\}$ and depth $L= L_1+L_2+L_3$, such that $g(x) = P_\gamma(b^\ell x-\Bi,\widetilde{a}_\Bi)$ for $x \in \Omega_{\Bi,\epsilon}^\ell$.

It remains to estimate the approximation error of $g$. Using the fact that the basis $\rho_{\ell,\Bi}^\gamma$ has disjoint support, we have (with obvious modification for $p=\infty$)
\begin{align*}
\|f-g\|_{L^p(\Omega_{\ell,\epsilon})}^p &= \sum_{\Bi \in I_\ell} \int_{\Omega_{\Bi,\epsilon}^\ell} \left| a_\Bi \prod_{j=1}^d (b^{\ell}x_j-\Bi_j)^{\gamma_j} - P_\gamma(b^\ell x-\Bi,\widetilde{a}_\Bi) \right|^p dx \\
&\le 2^{p-1} \sum_{\Bi \in I_\ell} \int_{\Omega_{\Bi,\epsilon}^\ell} |a_\Bi - \widetilde{a}_\Bi|^p + \left| \widetilde{a}_\Bi \prod_{j=1}^d (b^{\ell}x_j-\Bi_j)^{\gamma_j} - P_\gamma(b^\ell x-\Bi,\widetilde{a}_\Bi) \right|^p dx \\
&\le Cb^{-d\ell} \|a-\widetilde{a}\|_p^p + C4^{-pW_0L_0}.
\end{align*}
By inequality (\ref{bound for coe}), 
\[
\|f-g\|_{L^p(\Omega_{\ell,\epsilon})} \le C \left(b^{-\delta} \min\{ 1, b^{\delta q/p-d\ell/p} \} + 4^{-W_0L_0}\right),
\]
which completes the proof.
\end{proof}

\subsection{Approximation of Sobolev functions}\label{sec: app of sobolev}

In this subsection, we will use Proposition \ref{app of poly} to derive bounds for the neural network approximation of Sobolev functions and give a proof of Theorem \ref{app of sobolev}. We remark that almost the same argument can be applied to obtain approximation bounds for Besov functions and prove Theorem \ref{app of besov}. The main differences are given in Remark \ref{besov detail}.

\begin{proposition}\label{app of sobolev part}
Let $1\le q\le p\le \infty$, $s>0$ and $f\in \cW^{s,q}([0,1]^d)$ with $\|f\|_{\cW^{s,q}([0,1]^d)} \le 1$. Assume that the Sobolev embedding condition is strictly satisfied, i.e. $1/q - 1/p <s/d$. Let $\alpha,\beta \in \bN$ and $0< \epsilon< b^{-\ell^*}$, where $\ell^* = \lfloor 2\kappa (\alpha+\beta) \rfloor$ with
\[
\kappa := \frac{s}{s+d/p-d/q} \in [1,\infty).
\]
Then, there exists a network $g_{\alpha,\beta} \in \NN(W,L)$ with $W \le Cb^{d\alpha}$ and $L\le Cb^{d\beta}$ such that
\[
\|f-g_{\alpha,\beta} \|_{L^p(\Omega_{\ell^*,\epsilon})} \le C b^{-2s(\alpha+\beta)}.
\]
Here the constants $C:=C(p,q,s,d,b)$ depend only on $p,q,s,d$ and the base $b$.
\end{proposition}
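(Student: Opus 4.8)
The plan is to approximate $f$ by a piecewise polynomial on the partition at level $\ell^*$ and then invoke Proposition \ref{app of poly} with carefully chosen parameters $\delta, S, T, W_0, L_0$. First I would recall the classical estimate from nonlinear/linear approximation theory (e.g.\ from \citet{devore1993constructive} or \citet{devore1998nonlinear}): for $f\in\cW^{s,q}([0,1]^d)$ with unit norm and $1\le q\le p\le\infty$, there is a piecewise polynomial $f_{\ell}\in\cP_k^{\ell}$ (with $k=\lceil s\rceil$ fixed, using local Taylor/averaged Taylor polynomials on each subcube $\Omega_{\Bi}^{\ell}$) such that $\|f-f_{\ell}\|_{L^p(\Omega)}\le C b^{-\ell(s/d - 1/q + 1/p)}$, i.e.\ $\|f-f_\ell\|_{L^p}\le Cb^{-\ell s/(d\kappa)}$ by the definition of $\kappa$. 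Simultaneously the coefficients $a_{\Bi,\gamma}$ in the basis expansion (\ref{basis}) inherit a norm bound of the form $\big(\sum_{|\gamma|\le k,\Bi}|a_{\Bi,\gamma}|^q\big)^{1/q}\le C$, which is what Proposition \ref{app of poly} needs; this follows from the stability of the local polynomial projection in $L^q$ together with the scaling $\|\rho_{\ell,\Bi}^\gamma\|_{L^q(\Omega_\Bi^\ell)}\asymp b^{-d\ell/q}$, so that the coefficient sequence is essentially a rescaled sampling of $f$ and has bounded $\ell^q$ norm.

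Next I would set $\ell=\ell^*=\lfloor 2\kappa(\alpha+\beta)\rfloor$, so that the piecewise-polynomial error is already $\|f-f_{\ell^*}\|_{L^p}\le Cb^{-\ell^* s/(d\kappa)}\le Cb^{-2s(\alpha+\beta)}$, which is the target rate. It remains to choose the free parameters in Proposition \ref{app of poly} so that (i) the additional network approximation error $b^{-\delta}\min\{1,b^{\delta q/p - d\ell^*/p}\}+4^{-W_0L_0}$ is also $\le Cb^{-2s(\alpha+\beta)}$, and (ii) the resulting width is $\le Cb^{d\alpha}$ and depth is $\le Cb^{d\beta}$. For the error: take $W_0,L_0$ with $W_0L_0\asymp s(\alpha+\beta)$ (e.g.\ $W_0$ a suitable constant and $L_0\asymp\alpha+\beta$, or split between them) so $4^{-W_0L_0}\le b^{-2s(\alpha+\beta)}$; and choose the discretization exponent $\delta$ so that $b^{-\delta}\min\{1,b^{-(d\ell^*-\delta q)/p}\}\le b^{-2s(\alpha+\beta)}$ — concretely $\delta\asymp 2s(\alpha+\beta)$ works in the dense regime $\delta q\ge d\ell^*$ (since then the min is $1$), while in the sparse regime $\delta q\le d\ell^*$ one needs $\delta(1-q/p)+ (d\ell^*/p)\cdot 0\ge\dots$ — I would check which of the two cases of Proposition \ref{app of poly} is active for the chosen $\ell^*,\delta$ and verify the inequality $\delta - (d\ell^* - \delta q)_+/p \ge 2s(\alpha+\beta)$ directly; because $\ell^*\asymp\kappa(\alpha+\beta)$ and $\kappa = s/(s+d/p-d/q)$, this reduces to an arithmetic identity in $s,d,p,q$.

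Finally, with these choices I would plug into the width/depth formulas of Proposition \ref{app of poly} and absorb everything into the two target bounds by selecting $S,T$. In the relevant case the width is $C\max\{b^{\delta q/2}/(S\sqrt T),\, b^{(d\ell^*-\delta q)/T},\, b^{\ell^*/L_0},\, W_0 2^{W_0}\}$ (or the analogous dense-case expression), and the depth is $C(ST+L_0)$ (resp.\ $C(\lceil b^{-\delta+d\ell^*/q}S\rceil T + L_0)$). I would take $T\asymp \beta$ (large enough that $b^{(d\ell^*-\delta q)/T}$ or $b^{(\delta - d\ell^*/q)/T}$ is $\le b^{C}\le Cb^{d\alpha}$, using $d\ell^* - \delta q = O(\alpha+\beta)$), $L_0\asymp\beta$ (so $b^{\ell^*/L_0}\le b^{C}$), and then solve for $S$ from the depth constraint $ST\lesssim b^{d\beta}$, i.e.\ $S\asymp b^{d\beta}/\beta$; one then checks that the first width term $b^{\delta q/2}/(S\sqrt T)\asymp b^{s(\alpha+\beta)q}/(b^{d\beta}/\sqrt\beta)$ — wait, this needs $\delta q/2 \le d\alpha + d\beta$ roughly, which holds since $\delta\asymp 2s(\alpha+\beta)$ and, after using $\kappa\ge 1$, the exponent balances; I would do this bookkeeping carefully, possibly rescaling $S$ or $T$ by constants, to land at $W\le Cb^{d\alpha}$, $L\le Cb^{d\beta}$. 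The main obstacle I anticipate is exactly this last balancing act: the four competing width terms and the depth term must all be simultaneously controlled by $b^{d\alpha}$ and $b^{d\beta}$ respectively, and the allocation of the "budget" $\alpha+\beta$ among $\delta$, $S$, $T$, $L_0$ is delicate — in particular one must verify that the encoding-network width coming from Theorem \ref{rep of vector} (the $b^{\delta q/2}/(S\sqrt T)$ term) does not blow up, which is where the precise value of $\kappa$ and the embedding strict inequality $1/q-1/p<s/d$ (ensuring $\kappa<\infty$, hence $\ell^*$ finite and the exponents well-behaved) get used. The $L^q$-stability of the local polynomial coefficients is a second, more routine but still essential, ingredient that I would either cite or prove via Bramble–Hilbert on each subcube.
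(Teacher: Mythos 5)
Your first step is fine: a single piecewise polynomial $f_{\ell^*}\in\cP_k^{\ell^*}$ on the uniform grid at level $\ell^*$ does achieve $\|f-f_{\ell^*}\|_{L^p}\le Cb^{-\ell^*(s-d/q+d/p)}\le Cb^{-2s(\alpha+\beta)}$ (note the exponent per level is $s-d/q+d/p=s/\kappa$, not $s/(d\kappa)$ as written). The gap is in the rest of the plan: a single application of Proposition \ref{app of poly} at the finest level cannot be squeezed into the budget $W\le Cb^{d\alpha}$, $L\le Cb^{d\beta}$, no matter how you allocate $\delta,S,T,W_0,L_0$. First, the coefficient bound you invoke is false: since $\|\rho_{\ell^*,\Bi}^\gamma\|_{L^q(\Omega_\Bi^{\ell^*})}\asymp b^{-d\ell^*/q}$, stability of the local projection only gives $\bigl(\sum_{\Bi,\gamma}|a_{\Bi,\gamma}|^q\bigr)^{1/q}\le Cb^{d\ell^*/q}$, not $\le C$ (take $f\equiv 1$: all $b^{d\ell^*}$ constant-term coefficients equal $1$). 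Second, even granting $\|a\|_{\ell^q}\le C$, the "balancing act" you postpone cannot be completed. To drive the quantization error of Proposition \ref{app of poly} below $b^{-2s(\alpha+\beta)}\approx b^{-(s-d/q+d/p)\ell^*}$ you need, in the sparse case, $\delta(1-q/p)\gtrsim (s-d/q)\ell^*$, and the encoding network behind Proposition \ref{app of poly} (Theorem \ref{rep of vector}) then costs $W^2L^2\gtrsim M\approx b^{\delta q}$, while in the dense case it costs $W^2L^2\gtrsim N=b^{d\ell^*}$; your budget is only $W^2L^2\approx b^{2d(\alpha+\beta)}=b^{d\ell^*/\kappa}$. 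These requirements are incompatible whenever $\kappa>1$, i.e.\ precisely in the nonlinear regime $q<p$ that the proposition is about. Concretely, for $d=1$, $s=2$, $q=1$, $p=\infty$ one has $\kappa=2$, $\ell^*=4(\alpha+\beta)$; the accuracy forces $\delta\ge\ell^*$, hence $M\ge b^{4(\alpha+\beta)}$, against an available $W^2L^2\approx b^{2(\alpha+\beta)}$.

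The missing idea is the multiscale decomposition: the paper writes $f=\sum_{\ell\ge0}f_\ell$ with $f_\ell=\Pi_k^\ell(f)-\Pi_k^{\ell-1}(f)$, so that the Bramble--Hilbert lemma yields the decay $\bigl(\sum|a_{\Bi,\gamma}(\ell)|^q\bigr)^{1/q}\le Cb^{(d/q-s)\ell}$; smoothness then appears as coefficient decay across levels, which is what allows aggressive quantization (small $\delta(\ell)$) at fine levels and keeps the encoding cost within budget. Even with this, the proof is not a single optimization: the parameters $\delta(\ell),S(\ell),T(\ell),W_0,L_0$ are chosen level by level, the range $\{0,\dots,\ell^*\}$ is split into four regimes (sparse versus dense case of Proposition \ref{app of poly}, crossed with summing the subnetworks in parallel versus sequentially via Part (4) of Proposition \ref{basic constr}), and only this combination keeps the total width at $Cb^{d\alpha}$ and total depth at $Cb^{d\beta}$ while the per-level errors form a convergent geometric series of size $b^{-2s(\alpha+\beta)}$. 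Without replacing your single-scale step by such a decomposition, the proposal does not go through.
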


\begin{proof}
Let us consider the $L^q$-projection of $f\in L^q(\Omega)$ onto the space of piecewise polynomials of degree $k=\lfloor s \rfloor$ defined as
\[
\Pi_k^\ell(f) := \argmin_{h\in \cP_k^\ell} \|f-h\|_{L^q(\Omega)}.
\]
Let $f_0 = \Pi_k^0(f)$ and $f_\ell = \Pi_k^\ell(f) - \Pi_k^{\ell-1}(f)$ for $\ell\in \bN$. Then, we have the following decomposition
\[
f = \sum_{\ell=0}^\infty f_\ell.
\]
By expanding $f_\ell$ in the basis $\rho_{\ell,\Bi}^\gamma$, we can write
\[
f_\ell(x) = \sum_{|\gamma|\le k, \Bi \in I_\ell} a_{\Bi,\gamma}(\ell) \rho_{\ell,\Bi}^\gamma(x).
\]
By using the Bramble-Hilbert lemma, namely $\|\Pi_k^0(f) - f\|_{L^q(\Omega)} \le C|f|_{\cW^{s,q}(\Omega)}$, and a scaling argument (see \citet[Eq. (4.12)]{siegel2023optimal} for details), one can show that the coefficients satisfy the following bound
\[
|a_{\Bi,\gamma}(\ell)| \le C b^{(d/q-s)\ell} |f|_{\cW^{s,q}(\Omega_{\Bi^-}^{\ell-1})},
\]
where $\Omega_{\Bi^-}^{\ell-1} \supset \Omega_\Bi^\ell$ is the parent domain of $\Omega_\Bi^\ell$ for $\ell \ge 1$. When $\ell =0$, we have the modification $|a_{\boldsymbol{0},\gamma}(0)|\le C\|f\|_{\cW^{s,q}(\Omega)}$. Combining this bound with the sub-additivity of the Sobolev norm
\begin{equation}\label{sub-add}
\sum_{\Bi \in I_\ell} |f|_{\cW^{s,q}(\Omega_\Bi^\ell)}^q \le |f|_{\cW^{s,q}(\Omega)}^q,
\end{equation}
we get a bound for the $\ell^q$-norm of the coefficients (with the standard modification for $q=\infty$)
\begin{equation}\label{norm bound for coe}
\left( \sum_{|\gamma|\le k,\Bi \in I_\ell} |a_{\Bi,\gamma}(\ell)|^q \right)^{1/q} \le C b^{(d/q-s)\ell} \left( \sum_{|\gamma|\le k,\Bi \in I_\ell} |f|_{\cW^{s,q}(\Omega_{\Bi^-}^{\ell-1})}^q \right)^{1/q} \le C b^{(d/q-s)\ell},
\end{equation}
because $\|f\|_{\cW^{s,q}(\Omega)} \le 1$ and each $\Omega_{\Bi^-}^{\ell-1}$ appears $\binom{k+d}{d}b^d$ times in the summation. As a consequence, by using the fact that the basis $\rho_{\ell,\Bi}^\gamma$ has disjoint support,
\begin{equation}\label{norm bound for fl}
\|f_\ell\|_{L^p(\Omega)} \le C b^{-d\ell/p} \left( \sum_{|\gamma|\le k,\Bi \in I_\ell} |a_{\Bi,\gamma}(\ell)|^p \right)^{1/p} \le C b^{(d/q-d/p-s)\ell}.
\end{equation}

Next, we are going to construct a neural network $g_\ell$ to approximate $f_\ell$ for each $\ell \in \cL := \{0,\dots, \ell^*\}$ by applying Proposition \ref{app of poly}. The key of our proof is to choose the parameters $\delta,S,T,W_0,L_0$ in Proposition \ref{app of poly} appropriately as functions of $\ell$. For each $\ell \in \cL$, we choose
\begin{align*}
W_0(\ell) &= \left\lceil (\alpha/2) \log_2 b \right\rceil,  \\
L_0(\ell) &=  \left\lceil 4(s+\kappa+\kappa d/q) \beta \right\rceil,
\end{align*}
which imply that (since $\ell \le \ell^* \le 2\kappa(\alpha+\beta)$)
\begin{align*}
b^{\ell/L_0 } &\le b^{2\kappa(\alpha+\beta)/(4\kappa \beta)} \le C b^{\alpha/2}, \\
W_02^{W_0} &\le C \alpha b^{\alpha/2}.
\end{align*}
Using the inequality $\alpha \beta \ge (\alpha+\beta)/2$ for $\alpha,\beta \ge 1$, we get
\begin{equation}\label{small error}
4^{-W_0L_0} \le b^{-\alpha L_0} \le b^{-4(s+\kappa d/q)\alpha \beta} \le b^{-2s(\alpha+\beta) - d\ell^*/q}.
\end{equation}
In order to choose the remaining parameters $S,T$ and $\delta$,  we will need to decompose the index set $\cL$ into two groups according to the two cases in Proposition \ref{app of poly}. Besides, when we compute the summation of these small networks in each group, we will need to further decompose each group into two sets, so that we can control the size of the entire network by applying Proposition \ref{basic constr} Part (4) in two different ways (see the discussion below Proposition \ref{basic constr} for an explanation). Hence, we decompose the index set $\cL$ into four disjoint sets $\cL = \cup_{i=1}^4 \cL_i$, which will be given explicitly below. For each $\cL_i$, we denote
\[
F_i:= \sum_{\ell \in \cL_i} f_\ell,\quad \mbox{and}\quad G_i:= \sum_{\ell \in \cL_i} g_\ell.
\]
By choosing $S(\ell), T(\ell)$ and $\delta(\ell)$ appropriately, we are going to derive a bound for the approximation error $\|f_\ell-g_\ell\|_{L^p(\Omega_{\ell,\epsilon})} $ and show that $G_i \in \NN(W_i,L_i)$ with $W_i\le Cb^{d\alpha}$ and $L_i\le Cb^{d\beta}$ for each $i=1,2,3,4$. Thus, we divide the analysis into four cases. 

\textbf{Case 1:} $\ell \in \cL_1=\{\ell \in \cL: 0\le \ell \le 2\beta\}$. We choose
\[
\delta(\ell) = d\ell/q + (s+1)(2\alpha+2\beta -\ell).
\]
Since $\delta q \ge d\ell$, we apply Proposition \ref{app of poly} Part (2) to approximate $f_\ell$ with parameters
\begin{align*}
S(\ell) &= \left\lceil b^{\delta+d\ell/2-d\ell/q} \right\rceil, \\
T(\ell) &= \left\lceil (s+1)(2+2\beta-\ell)/d \right\rceil.
\end{align*}
This gives us a neural network $g_\ell \in \NN(W(\ell),L(\ell))$ with width
\begin{align*}
W(\ell) &\le C \max \left\{ \frac{b^{\delta+d\ell/2-d\ell/q}}{S \sqrt{T}}, b^{(\delta-d\ell/q)/T}, b^{\ell/L_0 }, W_0 2^{W_0} \right\} \\
&\le C \max \left\{ 1, b^{d(\alpha+1)}, b^{\alpha/2}, \alpha b^{\alpha/2} \right\} \\
&\le Cb^{d\alpha},
\end{align*}
and depth
\[
L(\ell) \le C \left(\left\lceil b^{-\delta + d\ell/q} S \right\rceil T+L_0 \right) \le C \left(b^{d\ell/2}(1+2\beta-\ell) + \beta \right).
\]
Moreover, by Proposition \ref{app of poly} and inequality (\ref{norm bound for coe}), the approximation error satisfies
\begin{equation}\label{error 1}
\begin{aligned}
\|f_\ell-g_\ell\|_{L^p(\Omega_{\ell,\epsilon})} &\le C \left(b^{-\delta} + 4^{-W_0L_0}\right) b^{d\ell/q-s\ell} \\
&\le C b^{-2s(\alpha+\beta)} \left( b^{-2\alpha-2\beta+\ell} +b^{-s\ell} \right),
\end{aligned}
\end{equation}
where we use (\ref{small error}) in the last inequality. By Proposition \ref{basic constr}, we can construct the sum $G_1 = \sum_{\ell\in\cL_1} g_\ell$ in a sequential way, so that $G_1\in \NN(W_1,L_1)$ with
\begin{align*}
W_1 &= \max_{\ell \in \cL_1} W(\ell)+2d+2 \le Cb^{d\alpha}, \\
L_1 &= \sum_{\ell \in \cL_1} L(\ell) \le C \sum_{\ell=0}^{2\beta} \left(b^{d\ell/2}(1+2\beta-\ell) + \beta \right) \le C b^{d\beta},
\end{align*}
because the sum is bounded by convergent geometric series.

\textbf{Case 2:} $\ell \in \cL_2=\{\ell \in \cL: 2\beta < \ell \le 2\alpha + 2\beta\}$. We choose $\delta(\ell)$ as in Case 1 so that $\delta q \ge d\ell$ and we can apply Proposition \ref{app of poly} Part (2) again. But we set the parameters
\begin{align*}
S(\ell) &= \left\lceil b^{\delta+d\beta-d\ell/q} \right\rceil, \\
T(\ell) &= \left\lceil 2(s+1)/d \right\rceil.
\end{align*}
This gives us a neural network $g_\ell \in \NN(W(\ell),L(\ell))$ with width
\begin{align*}
W(\ell) &\le C \max \left\{ \frac{b^{\delta+d\ell/2-d\ell/q}}{S \sqrt{T}}, b^{(\delta-d\ell/q)/T}, b^{\ell/L_0 }, W_0 2^{W_0} \right\} \\
&\le C \max \left\{ b^{d\ell/2-d\beta}, b^{d(\alpha+\beta-\ell/2)}, b^{\alpha/2}, \alpha b^{\alpha/2} \right\} \\
&\le Cb^{d\alpha} \left( b^{d(\ell/2-\alpha-\beta)} + b^{d(\beta - \ell/2)} + \alpha b^{-\alpha/2} \right),
\end{align*}
and depth
\[
L(\ell) \le C \left(\left\lceil b^{-\delta + d\ell/q} S \right\rceil T+L_0 \right) \le C \left(b^{d\beta} + \beta \right) \le Cb^{d\beta}.
\]
By Proposition \ref{app of poly} and inequality (\ref{norm bound for coe}), we also get the approximation error (\ref{error 1}) in this case. Using Proposition \ref{basic constr}, we can construct the sum $G_2 = \sum_{\ell\in\cL_2} g_\ell$ in a parallel way, so that $G_2\in \NN(W_2,L_2)$ with
\begin{align*}
W_2 &= \sum_{\ell \in \cL_2} W(\ell) \le Cb^{d\alpha} \sum_{\ell=2\beta+1}^{2\alpha+2\beta} \left( b^{d(\ell/2-\alpha-\beta)} + b^{d(\beta - \ell/2)} + \alpha b^{-\alpha/2} \right) \le Cb^{d\alpha}, \\
L_2 &= \max_{\ell \in \cL_2} L(\ell) \le  C b^{d\beta},
\end{align*}
since the first two series are convergent geometric series.

\textbf{Case 3:} $\ell \in \cL_3=\{\ell \in \cL: 2\alpha+2\beta < \ell \le 2\alpha+2\beta + 2d\alpha/\tau\}$, where $\tau$ is chosen to satisfy 
\begin{equation}\label{tau}
0< \tau < \frac{s}{1/q-1/p} -d = \frac{d}{\kappa -1}.
\end{equation}
Note that this condition can be satisfied since $q\le p$ and the Sobolev embedding condition holds. In this case, we let
\[
\delta(\ell) = 2d(\alpha+\beta)/q - \tau (\ell - 2\alpha- 2\beta)/q.
\]
Since $\delta q < d(2\alpha+2\beta) <d\ell$, we can apply Proposition \ref{app of poly} Part (1) to approximate $f_\ell$ with parameters
\begin{align*}
S(\ell) &= b^{d\beta}, \\
T(\ell) &= \left\lceil 2d/\tau \right\rceil +2,
\end{align*}
which implies
\[
\frac{d\ell - \delta q}{T} \le \frac{(d+\tau)(\ell - 2\alpha-2\beta)}{2d/\tau + 2} = \tau(\ell/2-\alpha-\beta).
\]
This gives us a neural network $g_\ell \in \NN(W(\ell),L(\ell))$ with width
\begin{align*}
W(\ell) &\le C \max \left\{ \frac{b^{\delta q/2}}{S \sqrt{T}}, b^{(d\ell -\delta q)/T}, b^{\ell/L_0 }, W_0 2^{W_0} \right\} \\
&\le C \max \left\{ b^{d\alpha+\tau(\alpha+\beta-\ell/2)}, b^{\tau(\ell/2-\alpha-\beta)}, b^{\alpha/2}, \alpha b^{\alpha/2} \right\} \\
&\le C \left( b^{d\alpha+\tau(\alpha+\beta-\ell/2)} + b^{\tau(\ell/2-\alpha-\beta)} + \alpha b^{\alpha/2} \right),
\end{align*}
and depth
\[
L(\ell) \le C(ST+L_0) \le C \left(b^{d\beta} + \beta \right) \le Cb^{d\beta}.
\]
Moreover, by Proposition \ref{app of poly} and inequality (\ref{norm bound for coe}), the approximation error satisfies
\begin{equation}\label{error 2}
\begin{aligned}
\|f_\ell-g_\ell\|_{L^p(\Omega_{\ell,\epsilon})} &\le C \left(b^{\delta q/p -d\ell/p-\delta} + 4^{-W_0L_0}\right) b^{d\ell/q-s\ell} \\
&\le C b^{-2s(\alpha+\beta)} \left( b^{\eta (\ell -2\alpha-2\beta)} +b^{-s\ell} \right),
\end{aligned}
\end{equation}
where $\eta:= d/q-d/p-s+\tau(1/q-1/p)<0$ by condition (\ref{tau}) and we use (\ref{small error}) in the last inequality. By Proposition \ref{basic constr}, we can construct the sum $G_3 = \sum_{\ell\in\cL_3} g_\ell$ in a parallel way, so that $G_3\in \NN(W_3,L_3)$ with
\begin{align*}
W_3 &= \sum_{\ell \in \cL_3} W(\ell) \le C \sum_{\ell=2\alpha+2\beta+1}^{2\alpha+2\beta+\lfloor 2d\alpha/\tau \rfloor} \left( b^{d\alpha+\tau(\alpha+\beta-\ell/2)} + b^{\tau(\ell/2-\alpha-\beta)} + \alpha b^{\alpha/2} \right) \le Cb^{d\alpha}, \\
L_3 &= \max_{\ell \in \cL_3} L(\ell) \le  C b^{d\beta}.
\end{align*}

\textbf{Case 4:} $\ell \in \cL_4=\{\ell \in \cL: 2\alpha+2\beta + 2d\alpha/\tau < \ell \le \ell^* \}$. By condition (\ref{tau}),
\[
\ell^* - 2\alpha -2\beta \le 2(\kappa-1)(\alpha+\beta)<2d(\alpha+\beta)/\tau.
\]
Hence, we can choose $\delta(\ell)$ as in Case 3 so that $\delta>0$ and $\delta q < d\ell$. We apply Proposition \ref{app of poly} Part (1) with parameters
\begin{align*}
S(\ell) &= \left\lceil b^{\delta q/2} \right\rceil, \\
T(\ell) &= \left\lceil 2d/\tau + 2 \right\rceil \left\lceil \tau(\ell/2-\alpha-\beta)-d\alpha \right\rceil.
\end{align*}
This gives us a neural network $g_\ell \in \NN(W(\ell),L(\ell))$. Since
\begin{align*}
\frac{d\ell - \delta q}{T} &\le \frac{(d+\tau)(\ell - 2\alpha-2\beta)}{(2d/\tau + 2) \lceil \tau(\ell/2-\alpha-\beta)-d\alpha \rceil} = \frac{\tau(\ell/2-\alpha-\beta)}{\lceil \tau(\ell/2-\alpha-\beta)-d\alpha \rceil} \\
&\le 1+ \frac{d\alpha}{\lceil \tau(\ell/2-\alpha-\beta)-d\alpha \rceil} \le 1+d\alpha,
\end{align*}
the width $W(\ell)$ satisfies
\begin{align*}
W(\ell) &\le C \max \left\{ \frac{b^{\delta q/2}}{S \sqrt{T}}, b^{(d\ell -\delta q)/T}, b^{\ell/L_0 }, W_0 2^{W_0} \right\} \\
&\le C \max \left\{ 1, b^{d\alpha+1}, b^{\alpha/2}, \alpha b^{\alpha/2} \right\} \\
&\le Cb^{d\alpha}.
\end{align*}
The depth $L(\ell)$ can be bounded as
\[
L(\ell) \le C(ST+L_0) \le C \left(b^{d(\alpha+\beta)-\tau(\ell/2-\alpha-\beta)}(\tau(\ell/2-\alpha-\beta)-d\alpha) + \beta \right).
\]
By Proposition \ref{app of poly} and inequality (\ref{norm bound for coe}), we also get the approximation error (\ref{error 2}) in this case. Using Proposition \ref{basic constr}, we can construct the sum $G_4 = \sum_{\ell\in\cL_4} g_\ell$ in a sequential way, so that $G_4\in \NN(W_4,L_4)$ with
\begin{align*}
W_4 &= \max_{\ell \in \cL_4} W(\ell) +2d+2 \le Cb^{d\alpha}, \\
L_4 &= \sum_{\ell \in \cL_4} L(\ell) \le C \sum_{\ell = 2\alpha +2\beta +\lceil 2d\alpha/\tau \rceil}^{\ell^*} \left(b^{d(\alpha+\beta)-\tau(\ell/2-\alpha-\beta)}(\tau(\ell/2-\alpha-\beta)-d\alpha) + \beta \right) \\
&\le C \sum_{j = \lceil 2d\alpha/\tau \rceil}^{\lfloor 2d(\alpha+\beta)/\tau \rfloor} \left(b^{d(\alpha+\beta)-\tau j/2}(\tau j/2-d\alpha) + \beta \right) \le Cb^{d\beta},
\end{align*}
where we use $\ell^* - 2\alpha -2\beta <2d(\alpha+\beta)/\tau$ and the last series is dominated by the term corresponding to $j=\lceil 2d\alpha/\tau \rceil$.

Finally, we construct the desired network as 
\[
g_{\alpha,\beta} = \sum_{i=1}^4 G_i = \sum_{\ell=0}^{\ell^*} g_\ell \in \NN(W,L),
\]
whose width $W\le Cb^{d\alpha}$ and depth $L\le Cb^{d\beta}$ by the above analysis. Since $\Omega_{\ell^*,\epsilon} \subseteq \Omega_{\ell,\epsilon}$ for $\ell \le \ell^*$, the fact that inequality (\ref{error 1}) holds for $\ell \in \cL_1\cup \cL_2$ implies
\[
\sum_{\ell=0}^{2\alpha+2\beta} \| f_\ell - g_\ell\|_{L^p(\Omega_{\ell^*,\epsilon})} \le C b^{-2s(\alpha+\beta)} \sum_{\ell=0}^{2\alpha+2\beta} \left( b^{-2\alpha-2\beta+\ell} +b^{-s\ell} \right) \le C b^{-2s(\alpha+\beta)}.
\]
Similarly, using inequality (\ref{error 2}), we get
\[
\sum_{\ell=2\alpha+2\beta+1}^{\ell^*} \| f_\ell - g_\ell\|_{L^p(\Omega_{\ell^*,\epsilon})} \le C b^{-2s(\alpha+\beta)} \sum_{\ell=2\alpha+2\beta+1}^{\ell^*} \left( b^{\eta (\ell -2\alpha-2\beta)} +b^{-s\ell} \right) \le C b^{-2s(\alpha+\beta)},
\]
since $\eta<0$. As a consequence, the approximation error can be bounded as
\begin{align*}
\|f-g_{\alpha,\beta} \|_{L^p(\Omega_{\ell^*,\epsilon})} &\le \sum_{\ell=0}^{\ell^*} \| f_\ell - g_\ell\|_{L^p(\Omega_{\ell^*,\epsilon})} + \sum_{\ell=\ell^*+1}^\infty \|f_\ell\|_{L^p(\Omega_{\ell^*,\epsilon})} \\
&\le C b^{-2s(\alpha+\beta)} + C \sum_{\ell=\ell^*+1}^\infty  b^{(d/q-d/p-s)\ell} \\
&\le C \left( b^{-2s(\alpha+\beta)} + b^{2(d/q-d/p-s)\kappa(\alpha+\beta)} \right) \\
&\le C b^{-2s(\alpha+\beta)},
\end{align*}
where we use (\ref{norm bound for fl}) in the second inequality and $(d/q-d/p-s)\kappa=-s$ in the last inequality.
\end{proof}

In order to prove Theorem \ref{app of sobolev}, we need to remove the trifling region from Proposition \ref{app of sobolev part} by using ideas from \citet{shen2022optimal,lu2021deep,siegel2023optimal}. Specifically, we will follow the construction in \citet{siegel2023optimal}, which uses different bases $b_i$ to create multiple approximators (by using Proposition \ref{app of sobolev part}). The bases can be chosen in such a way that they create minimally overlapping trifling regions and the median of the approximators has the desired accuracy on the whole domain $\Omega$.

\begin{proof}[Proof of Theorem \ref{app of sobolev}]
Without loss of generality, we assume that $f \in \cW^{s,q}([0,1]^d)$ has been normalized so that $\|f\|_{\cW^{s,q}([0,1]^d)} \le 1$. The case $p<q$ can be reduced to the case $p=q$ by using the inequality $\|f-g\|_{L^p([0,1]^d)} \le \|f-g\|_{L^q([0,1]^d)}$ for $p<q$. So, we can also assume that $1\le q\le p\le \infty$. 

In order to remove the trifling region in Proposition \ref{app of sobolev part}, we make use of different bases $b$. Let $k$ be the smallest integer such that $2^k \ge 2d+2$ and let $b_i$ be the $i$-th prime number for $i=1,\dots,2^k$. Thus, $k$ and $b_i$ only depend on the dimension $d$. To complete the proof, it is sufficient to show that, for any integers $m,n \ge b_{2^k}$, there exists $g\in \NN(W,L)$ with $W\le Cm^d$ and $L\le Cn^d$ such that
\[
\|f-g\|_{L^p(\Omega)} \le C (mn)^{-2s},
\]
where $\Omega = [0,1)^d$ as before.

For $i=1,\dots, 2^k$, we denote $\alpha_i = \lfloor \log_{b_i} m \rfloor$ and $\beta_i = \lfloor \log_{b_i} n \rfloor$. Thus, we have the simple inequalities $b_i^{-1} m \le b_i^{\alpha_i} \le m$ and $b_i^{-1} n \le b_i^{\beta_i} \le n$. In order to apply Proposition \ref{app of sobolev part}, we let $\ell_i^* = \lfloor 2\kappa (\alpha_i+\beta_i) \rfloor$ where $\kappa$ is defined as in Proposition \ref{app of sobolev part}. Notice that the following numbers are all distinct
\[
\cA:= \bigcup_{i=1}^{2^k} \left\{ \frac{1}{b_i^{\ell_i^*}},\dots, \frac{b_i^{\ell_i^*}-1}{b_i^{\ell_i^*}} \right\},
\]
since $b_i$ are all pairwise relatively prime. We choose $\epsilon>0$ small enough so that 
\[
\epsilon < \min_{u\neq v\in \cA} |u-v|.
\]
This choice of $\epsilon$ has the property that any element of $[0,1)$ is contained in at most one of the sets
\begin{equation}\label{bad sets}
[j b_i^{-\ell_i^*}-\epsilon, j b_i^{-\ell_i^*}), \quad j=1,\dots, b_i^{\ell_i^*} -1 \mbox{ and } i=1,\dots,2^k.
\end{equation}
Thus, if we let $\Omega_{\ell_i^*,\epsilon}$ denote the good region (\ref{good region}) at level $\ell_i^*$ with base $b_i$, then, for any $x\in \Omega$, we have $x\notin \Omega_{\ell_i^*,\epsilon}$ for at most $d$ different values $i$, because each coordinate of $x$ can be contained in at most one set from (\ref{bad sets}). In other words, the following set 
\[
\cI(x) := \{i: x\in \Omega_{\ell_i^*,\epsilon}\}
\]
has at least $2^k-d \ge 2^{k-1}+1$ elements, since $2^k \ge 2d+2$.

For $i=1,\dots, 2^k$, by applying Proposition \ref{app of sobolev part} with parameters $\alpha_i, \beta_i$, the base $b_i$ and the above $\epsilon$, we get $g_i \in \NN(W_i,L_i)$ with $W_i \le Cb_i^{d\alpha_i}\le Cm^d$ and $L_i\le Cb_i^{d\beta_i} \le Cn^d$ such that
\[
\| f- g_i \|_{L^p(\Omega_{\ell_i^*,\epsilon})} \le C b_i^{-2s(\alpha_i + \beta_i)} \le C(mn)^{-2s}.
\]
Let $\psi_{2^{k-1}} \in \NN(2^{k+2},k(k+1)/2)$ be the network in Proposition \ref{select median} that selects the $2^{k-1}$-largest value from $2^k$ values. We construct the desired network $g$ as
\[
x \to 
\begin{pmatrix}
g_1(x) \\
\vdots \\
g_{2^k}(x)
\end{pmatrix}
\to \psi_{2^{k-1}}(g_1(x),\dots, g_{2^k}(x)).
\]
Then, by Proposition \ref{basic constr}, we have $g\in \NN(W,L)$ with 
\begin{align*}
W &\le \max \left\{ \sum_{i=1}^{2^k} W_i, 2^{k+2} \right\} \le C m^d, \\
L &\le \max_{1\le i\le 2^k} L_i + k(k+1)/2 \le C n^d.
\end{align*}
It remains to estimate the approximation error. For each $x\in\Omega$, since $|\cI(x)| \ge 2^{k-1}+1$, the $2^{k-1}$-largest element of $\{g_1(x),\dots,g_{2^k}(x)\}$ must be both larger and smaller than some element of $\{g_i(x):i\in \cI(x)\}$. In other words, 
\[
\min_{i\in \cI(x)} g_i(x) \le g(x) \le \max_{i\in \cI(x)} g_i(x),
\]
which implies 
\[
|f(x) -g(x)| \le \max_{i\in \cI(x)} |f(x) - g_i(x)|. 
\]
When $p=\infty$, we finish the proof by noticing that the right hand side is bounded by $C(mn)^{-2s}$ by the definition of $\cI(x)$. When $p<\infty$, we have
\begin{align*}
\int_{\Omega} |f(x) -g(x)|^p dx &\le \int_{\Omega} \sum_{i\in \cI(x)} |f(x) - g_i(x)|^p dx \\
&\le \sum_{i=1}^{2^k} \| f- g_i \|_{L^p(\Omega_{\ell_i^*,\epsilon})}^p \\
&\le C (mn)^{-2sp},
\end{align*}
which completes the proof by taking $p$-th roots.
\end{proof}

\begin{remark}\label{besov detail}
Theorem \ref{app of besov} can be proven in the same way as above with minor modifications. The main differences are that we choose $k= \lfloor s \rfloor +1$ and the Bramble-Hilbert lemma needs to be replaced by the analogous inequality $\|\Pi_k^0(f) - f\|_{L^q(\Omega)} \le C\|f\|_{\cB^s_{q,r}(\Omega)}$, which follows from well-known bound for piecewise polynomial approximation of Besov functions \citep[Section 3]{devore1988interpolation}. Additionally, the sub-additivity (\ref{sub-add}) should be replaced by the corresponding inequality for Besov spaces
\[
\sum_{\Bi \in I_\ell} |f|_{\cB^s_{q,r}(\Omega_\Bi^\ell)}^q \le C |f|_{\cB^s_{q,r}(\Omega)}^q.
\]
We refer the reader to \citet{devore1993besov} for reference.
\end{remark}

\section*{Acknowledgments}

The work described in this paper was partially supported by National Natural Science Foundation of China under Grant 12371103. We thank the referees for their helpful comments and suggestions on the paper.

\appendix

\section{Proofs of technical results}

\subsection{Proof of Lemma \ref{PWL rep}}

For the first part, we let $t_1<t_2<\dots<t_m$ with $1\le m\le n$ be the breakpoints of $g$ (if $g$ is affine, let $m=1$ and $t_1=0$). Observe that $g$ can be written as 
\[
g(t) = c- a_0\sigma(-t+t_1) + \sum_{i=1}^m a_i\sigma(t-t_i),
\]
where $a_0$ and $a_1$ are the left and right derivatives at $t_1$, the remained $a_i$ give the jump in derivative at other breakpoints and $c$ is set to match the value at $0$. Hence, we have $g\in \NN(m+1,1)$. If $a_0=0$, then $g\in \NN(m,1)$.

The second part follows from \citet[Theorem 3.1]{daubechies2021nonlinear}, which showed that $g=f$ on $[\alpha,\beta]$ for some $f\in \NN(6W+2,2 \lceil \frac{n}{6W^2} \rceil) \subseteq \NN(6W+2,2L)$ when $[\alpha,\beta]=[0,1]$. We can extend their result on $[0,1]$ to any bounded interval $[\alpha,\beta]$ by applying an affine map on the input. Note that one can also extend the bounded interval to $(-\infty,\infty)$ by using a slightly larger network width.

\subsection{Proof of Lemma \ref{bit extraction}}

We modify the construction in \citet[Lemma 13]{bartlett2019nearly}. We partition $[0,1)$ into $2^m$ intervals $[j2^{-m},(j+1)2^{-m})$, $j=0,1,\dots,2^m-1$. The indicator function of $[j2^{-m},(j+1)2^{-m})$ can be approximated by the following piecewise linear function
\[
g_j(t) = 
\begin{cases}
0, & t\le j2^{-m} -\epsilon \mbox{ or } t\ge (j+1)2^{-m}, \\
1+\epsilon^{-1}(t-j2^{-m}), & j2^{-m} -\epsilon < t< j2^{-m},\\
1, & j2^{-m} \le  t\le  (j+1)2^{-m} -\epsilon, \\
-\epsilon^{-1}(t-(j+1)2^{-m}), & (j+1)2^{-m} -\epsilon < t < (j+1)2^{-m},
\end{cases}
\]
where we choose $\epsilon<2^{-n}$. By Lemma \ref{PWL rep}, we have $g_j\in \NN(4,1)$.

Observe that $x_i$ can be computed by adding the corresponding indicator function. For instance, $x_1 = \sum_{j=2^{m-1}}^{2^m-1} g_j(x)$. Furthermore, $\Bin 0.x_{m+1}\cdots x_n=2^m x - \sum_{i=1}^m 2^{m-i}x_i$. We can construct the desired network $f_{m,n}$ as follows
\[
x \to 
\begin{pmatrix}
g_0(x) \\
\vdots \\
g_{2^m-1}(x) \\
x
\end{pmatrix}
\to
\begin{pmatrix}
x_1 \\
\vdots \\
x_m \\
x
\end{pmatrix}
\to 
\begin{pmatrix}
x_1 \\
\vdots \\
x_m \\
\Bin 0.x_{m+1}\cdots x_n
\end{pmatrix}.
\]
Note that the last two maps are affine. Hence, $f_{n,m}\in \NN(2^{m+2}+1,1)$. 

To construct the network $f_{n,m,L}$ with $L\le m$, we can apply $f_{n,\lceil m/L \rceil}$ to the last component $L-1$ times and then apply $f_{n,m - (L-1)\lceil m/L \rceil}$ to extract $m$ bits:
\begin{align*}
x &\to
\begin{pmatrix}
\Bin x_1\cdots x_{\lceil m/L \rceil}.0 \\
\Bin 0.x_{\lceil m/L \rceil+1}\cdots x_n
\end{pmatrix}
\to 
\begin{pmatrix}
\Bin x_1\cdots x_{2\lceil m/L \rceil}.0 \\
\Bin 0.x_{2\lceil m/L \rceil+1}\cdots x_n
\end{pmatrix} \\
&\to \cdots \to 
\begin{pmatrix}
\Bin x_1\cdots x_{(L-1)\lceil m/L \rceil}.0 \\
\Bin 0.x_{(L-1)\lceil m/L \rceil+1}\cdots x_n
\end{pmatrix} \\
&\to 
\begin{pmatrix}
\Bin x_1\cdots x_m.0 \\
\Bin 0.x_m\cdots x_n
\end{pmatrix} \in \NN(2^{\lceil m/L \rceil+2}+2,L).
\end{align*}
Note that, for $j\le i$, $\Bin x_1\cdots x_i.0$ is a linear combination of $x_j,\dots,x_i$ and $\Bin x_1\cdots x_j.0$. Finally, for $L>m$, we let $f_{n,m,L} = f_{n,m,m} \in \NN(10,m) \subseteq \NN(10,L)$.

\subsection{Proof of Lemma \ref{app partition}}

The result is trivial for $\ell =0$. So we let $\ell \ge 1$ in the following. Let us first consider the one-dimensional case $d=1$. For each $m\in [1,\ell]$, we define the piecewise linear function
\[
g_m(t):= 
\begin{cases}
0, &t\le b^{-m}-\epsilon,\\
j+\epsilon^{-1}(t-jb^{-m}), &jb^{-m}-\epsilon<t\le jb^{-m}, \mbox{ for }j=1,\dots,b^m-1,\\
j, &jb^{-m}<t\le (j+1)b^{-m}-\epsilon, \mbox{ for }j=1,\dots,b^m-1,\\
b^m-1, &t>1-\epsilon.
\end{cases}
\]
We show the graph of $g_m$ in Figure \ref{graph of gm}. It is easy to see that $g_m$ has $2b^m-1$ pieces and $g_m\in \NN(2b^m-2,1)$ by Lemma \ref{PWL rep}. 
\begin{figure}[htbp]
\centering
\includegraphics[width=0.95\linewidth]{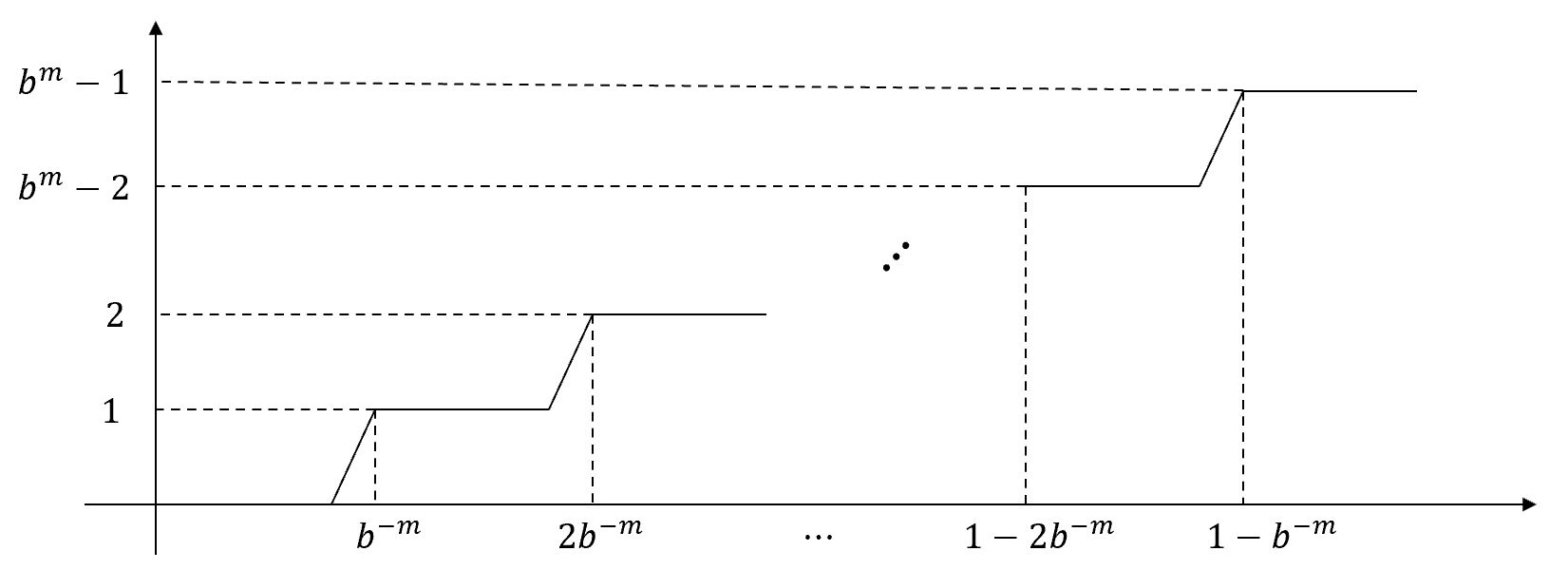}
\caption{The graph of function $g_m(t)$ in the proof of Lemma \ref{app partition}.}
\label{graph of gm}
\end{figure}

For any $t\in [ib^{-\ell},(i+1)b^{-\ell}-\epsilon)$, we have the $b$-adic representation
\[
t = \sum_{k=1}^\ell a_k b^{-k} + c,\quad i= \sum_{k=1}^\ell a_k b^{\ell -k},
\]
where $a_k\in \{0,\dots,b-1\}$ and $c\in [0,b^{-\ell} -\epsilon)$. Let $r:=\lfloor \ell/m \rfloor \ge 1$ and $s:= \ell - rm \in [0,m-1]$. We consider the following iteration with initialization $p_0=0$ and $t_0=t$:
\[
p_{n+1} = b^m p_n + g_m(t_n), \quad t_{n+1} = b^m t_n - g_m(t_n),\quad \mbox{for }n=0,\dots,r-1.
\]
Since $\sum_{k=m+1}^\ell a_kb^{-k}+c<b^{-m}-\epsilon$, we have $p_1 = g_m(t)= \sum_{k=1}^m a_k b^{m-k}$ and $t_1 = b^mt - g_m(t) = \sum_{k=m+1}^\ell a_kb^{m-k}+b^m c$. Inductively, one can check that
\[
p_n = \sum_{k=1}^{nm} a_k b^{nm-k},\quad t_n = \sum_{k=nm+1}^\ell a_kb^{nm-k}+b^{nm} c,\quad \mbox{for }n=1,\dots,r.
\]
If $s=0$, we already obtain the index $i = p_r$. If $s\neq 0$, we need one more iteration step
\[
p_{r+1} = b^sp_r + g_s(t_r).
\]
Since $b^{rm}c<b^{-s} -b^{rm}\epsilon<b^{-s} -\epsilon$, we have $g_s(t_r) = \sum_{k=rm+1}^\ell a_kb^{\ell-k}$ and $p_{r+1} = \sum_{k=1}^\ell a_k b^{\ell -k}=i$. Thus, we have given an iteration method to compute the index $i$ for $t\in [ib^{-\ell},(i+1)b^{-\ell}-\epsilon)$. In addition, for $t\in [1-\epsilon,1]$, the above iteration gives $p_n=b^{nm}-1$ for $n=1,\dots,r$. If $s\neq 0$, we get $p_{r+1} = b^\ell -1$. Hence, we also compute the index correctly for $t\in [1-\epsilon,1]$.

Next, we construct a neural network to implement the above iteration. We begin with the affine map $t \to (0,t)^\top $. Then, one iteration step can be implemented by composing with the following map
\[
\begin{pmatrix}
p\\
t
\end{pmatrix}
\to
\begin{pmatrix}
\sigma(p) \\
\sigma(t) \\
g_m(t)
\end{pmatrix}
\to
\begin{pmatrix}
b^m \sigma(p)+ g_m(t) \\
b^m \sigma(t) - g_m(t)
\end{pmatrix}
\in \NN(2b^m,1).
\]
If $s\neq 0$, we simply replace $g_m(t)$ by $g_s(t)$ in the $(r+1)$-th iteration step. After the last step of the iteration, we compose with the affine map which selects the first coordinate to get the desired network $q_1 \in \NN_{1,1}(2b^m,\lceil \ell/m \rceil)$. For any $L\in \bN$, if we choose $m= \lceil \ell/L \rceil$, then $q_1 \in \NN_{1,1}(2b^{\lceil \ell/L \rceil},L)$.

Finally, for higher dimensional case $d\ge 2$, we notice that 
\[
q_d(x) = \sum_{j=1}^d b^{\ell(j-1)} q_1(x_j),\quad \forall x\in \Omega_{\Bi,\epsilon}^\ell.
\]
By Proposition \ref{basic constr}, we have $q_d \in \NN_{d,1}(2db^{\lceil \ell/L \rceil},L)$.

\subsection{Proof of Theorem \ref{rep lower bound}}\label{proof of rep lower bound}

We begin with some notations for the neural network class $\NN_{1,1}(W,L)$. For integer $\ell \in [1,L+1]$, we use $P_\ell$ to denote the number of parameters (weights and biases) \emph{up to layer} $\ell$. Thus, $P:=P_{L+1} = (L-1)W^2 + (L+2)W+1$ is the number of parameters in the network. It is easy to see that $W\ell \le P_\ell \le 2W^2 \ell$. We use $f_a\in \NN(W,L)$ to denote the neural network parameterized by $a\in \bR^P$. 

Observe that, for any subset $S$ of $\{1,\dots,\min\{N,M\}\}$, there exists $x=(x_1,\dots,x_N)^\top \in \cS_{N,M}$ such that $x_i>0$ if and only if $i\in S$. By assumption, the function class $\NN(W,L)$ must shatter the set $\{1,\dots,\min\{N,M\}\}$ and hence
\begin{equation}\label{temp pdim}
\min\{N,M\} \le \Pdim(\NN(W,L)) \le C W^2L^2 \log(WL),
\end{equation}
where the second inequality is from (\ref{Pdim bound}). Thus, when $2N \ge M \ge N/2$, i.e. $p=1$, we get the desired bound. For $p>1$, we cannot use the pseudo-dimension directly, but we can apply an argument similar to the pseudo-dimension bound in \citet{bartlett2019nearly}. The main technical tool is the following form of Warren's Theorem from \citet[Lemma 17]{bartlett2019nearly} and \citet[Theorem 8.3]{anthony2009neural}.

\begin{lemma}\label{Warren}
Suppose $P\le N$ and let $f_1,\dots,f_N$ be polynomials of degree at most $D$ in $P$ variables. Then, the number of possible sign vectors attained by the polynomials can be bounded as
\[
|\{(\sgn(f_1(a)),\dots, \sgn(f_N(a))):a\in \bR^P \}|\le 2(2eND/P)^P,
\]
where $\sgn(t) =1$ if $t>0$ and $\sgn(t)=0$ otherwise.
\end{lemma}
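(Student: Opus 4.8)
The plan is to reduce the statement to Warren's classical bound on the number of connected components of the complement of a real algebraic hypersurface, via a preliminary perturbation that moves every realized sign vector into an open cell of the corresponding hypersurface arrangement. Write $K := \{(\sgn f_1(a),\dots,\sgn f_N(a)):a\in\bR^P\}\subseteq\{0,1\}^N$; this set is finite, and we must show $|K|\le 2(2eND/P)^P$.

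\emph{Step 1 (perturbation).} For each $v\in K$ I would fix a witness $a_v\in\bR^P$ with $(\sgn f_1(a_v),\dots,\sgn f_N(a_v))=v$. If every $f_i$ vanishes at every $a_v$ then $K$ is the singleton $\{0\}$ and the bound holds trivially; otherwise choose a constant $0<\eta<\min\{|f_i(a_v)|:v\in K,\ 1\le i\le N,\ f_i(a_v)\neq 0\}$ and set $\tilde f_i:=f_i-\eta/2$, which still has degree at most $D$. A short case analysis shows that for every $v$ and every $i$ one has $\sgn\tilde f_i(a_v)=\sgn f_i(a_v)=v_i$ and, crucially, $\tilde f_i(a_v)\neq 0$: if $f_i(a_v)>0$ then $f_i(a_v)\ge\eta$, so $\tilde f_i(a_v)\ge\eta/2>0$; if $f_i(a_v)\le 0$ then $\tilde f_i(a_v)<0$. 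Hence each $a_v$ lies in the open set $U:=\bR^P\setminus\bigcup_{i=1}^N\{x:\tilde f_i(x)=0\}$, and on each connected component of $U$ every $\tilde f_i$ has a constant nonzero sign. If $v\neq v'$ then $a_v$ and $a_{v'}$ must lie in different components of $U$, for otherwise all the $\tilde f_i$ would share a sign at both points and force $v=v'$. Therefore $|K|$ is at most the number of connected components of $U$.

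\emph{Step 2 (Warren's component count).} It then remains to bound the number of connected components of $U=\bR^P\setminus\bigcup_{i=1}^N\{\tilde f_i=0\}$, with $\deg\tilde f_i\le D$ and $N\ge P$, by $2(2eND/P)^P$; this is exactly Warren's theorem, which I would invoke directly, citing \citet[Theorem 8.3]{anthony2009neural} and \citet[Lemma 17]{bartlett2019nearly} for the proof. The point is that naively quoting the Milnor--Thom bound on the sum of Betti numbers of a real algebraic set — applied to the single polynomial $\prod_{i=1}^N\tilde f_i$, of degree at most $ND$ — only gives $O\!\big((ND)^P\big)$ components and misses the factor $(2e/P)^P$; Warren's refinement recovers it by exploiting the product structure, morally by counting, over the $\binom{N}{P}\le(eN/P)^P$ subsets $\{i_1<\dots<i_P\}$, the nondegenerate critical points of a generic linear functional constrained to $\{\tilde f_{i_1}=\dots=\tilde f_{i_P}=0\}$, of which a B\'ezout-type estimate allows $O(D^P)$.

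The elementary content is all in Step 1; the substance of the estimate lies in Step 2, and the one genuine obstacle is obtaining the $1/P$ inside the base of the exponent, which requires Warren's product-structure argument rather than a direct Milnor--Thom estimate. If only a bound of the form $(CND)^P$ were needed one could stop after the naive route, but the sharp form with the $/P$ is precisely what the pseudo-dimension estimate (\ref{Pdim bound}) of \citet{bartlett2019nearly} relies on.
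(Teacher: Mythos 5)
Your proposal is correct and matches the paper's treatment: the paper states this lemma without proof, importing it directly from \citet[Lemma 17]{bartlett2019nearly} and \citet[Theorem 8.3]{anthony2009neural}, which is exactly where your Step 2 also places the substance of the estimate. Your Step 1 perturbation (replacing $f_i$ by $f_i-\eta/2$ so that every realized sign vector is witnessed in an open cell, whence $|K|$ is at most the number of connected components of $\bR^P\setminus\bigcup_{i}\{\tilde f_i=0\}$) is a correct and standard bridge from the component-counting form of Warren's theorem to the stated form with the convention $\sgn(t)=0$ for $t\le 0$.
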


We first consider the case that $N/2 > M \ge C_0 N^{1/p}$. We are going to estimate the number of sign patterns that the neural network can output on the input set $\{1,\dots,N\}$. Specially, we define 
\begin{align*}
s(a) &= (\sgn(f_a(1)),\dots, \sgn(f_a(N))) \in \{0,1\}^N,\\
K &= |\{s(a):a\in \bR^P\}|.
\end{align*}
Note that we can assume that $P\le N$, because otherwise $4W^2L \ge P >N > 2M$ already implies the desired result. So, one can apply Lemma \ref{Warren} in the following analysis. To upper bound $K$, we partition $\bR^P$ into regions where $f_a(i)$, $i=1,\dots,N$, are polynomials of $a$. This can be done by using the method presented in the proof of \citet[Theorem 7]{bartlett2019nearly}. By using Lemma \ref{Warren}, they constructed iteratively a sequence of refined partition $\cA_0,\dots, \cA_L$ with the following two properties:
\begin{enumerate}[label=\textnormal{\arabic*.},parsep=0pt]
\item $\cA_0 = \bR^P$ and for $\ell =1,\dots, L$,
\begin{equation}\label{bound for A}
\frac{|\cA_\ell|}{|\cA_{\ell-1}|} \le 2 \left( \frac{2eNW\ell}{P_\ell} \right)^{P_\ell}.
\end{equation}

\item For each $\ell =1,\dots, L+1$, each element $A$ of $\cA_{\ell-1}$, each input $i=1,\dots,N$, and each neuron $u$ in the $\ell$-th layer, when $a$ varies in $A$, the net input to $u$ is a fixed polynomial function in $P_\ell$ variables of $a$, with total degree at most $\ell$. 
\end{enumerate}
In particular, for each $A\in \cA_L$, $f_a(i)$ is a polynomial of $a\in A$ with degree at most $L+1$, since we do not have activation in the last layer. By Lemma \ref{Warren}, we get 
\[
|\{s(a):a\in A\}| \le 2 \left( \frac{2eN(L+1)}{P_{L+1}}\right)^{P_{L+1}}.
\]
Applying the bound (\ref{bound for A}) iteratively gives
\[
|\cA_L| \le \prod_{\ell=1}^L 2 \left( \frac{2eNW\ell}{P_\ell} \right)^{P_\ell}.
\]
As a consequence,
\begin{align*}
K &\le \sum_{A\in \cA_L} |\{s(a):a\in A\}| \le \prod_{\ell=1}^{L+1} 2 \left( \frac{2eNW\ell}{P_\ell} \right)^{P_\ell} \\
&\le 2^{L+1} (2eN)^{\sum_{\ell=1}^{L+1} P_\ell} \le (4eN)^{W^2(L+1)(L+2)},
\end{align*}
where we use $W\ell \le P_\ell \le 2W^2\ell$ in the last two inequalities. On the other hand, $\NN(W,L)$ can match the values of any element in $\cS_{N,M}$ by assumption. Since $\cS_{N,M}$ contains every indicator function of every subset of $\{1,\dots,N\}$ of size $M$, we have
\[
\binom{N}{M} \le K \le (4eN)^{W^2(L+1)(L+2)}.
\]
Taking logarithms shows that
\[
M\log(N/M) \le 6W^2L^2 \log(4eN).
\]
Since $N/2 > M \ge C_0 N^{1/p}$, we have
\begin{align*}
M(1+\log(N/M)) &\le eM\log(N/M) \le 6eW^2L^2 \log(4eN) \\
&\le 6eW^2L^2 \log(4eC_0^{-p}M^p) \\
&\le C_p W^2L^2 \log(WL),
\end{align*}
where we use (\ref{temp pdim}) in the last inequality.

For the case $2N< M\le C_0 N^p$, we can assume that $P\le M$, because otherwise $4W^2L \ge P >M > 2N$ already implies the desired result. We consider a slightly different vector of sign pattern:
\[
s(a) = (\sgn(f_a(i)-j))_{i\in\{1,\dots,N\}, j\in\{0,\dots,M-1\}} \in \{0,1\}^{NM}.
\]
The only difference is that we now consider $NM$ piecewise polynomials $a \mapsto f_a(i)-j$ indexed by $(i,j)$, rather than $N$ piecewise polynomials $a\mapsto f_a(i)$ indexed by $i$. We can upper bound $K$, defined through the new sign pattern $s(a)$, in a similar manner as before and obtain
\[
K \le \prod_{\ell=1}^{L+1} 2 \left( \frac{2eNMW\ell}{P_\ell} \right)^{P_\ell} \le (4eNM)^{W^2(L+1)(L+2)}.
\]
Notice that the set $\cS_{N,M}$ contains all vectors whose first $N-1$  coordinates are arbitrary integers in $\{0,1,\dots,\lfloor M/N \rfloor\}$ and whose last coordinate is chosen to make the $\ell^1$ norm equal to $M$. By assumption, $\cS_{N,M}$ can be represented by $\NN(W,L)$, which implies
\[
(\lfloor M/N \rfloor+1)^{N-1} \le K \le (4eNM)^{W^2(L+1)(L+2)}.
\]
Taking logarithms and calculating as before, we get the desired bound.

\subsection{Proof of Lemma \ref{app of product}}

Following the construction in \citet[Lemma 5.1]{lu2021deep}, we define a set of sawtooth functions $T_i:\bR \to [0,1]$ by iteration $T_i = T_{i-1} \circ T_1$ for $i=2,3,\dots$, and
\[
T_1(x) := 
\begin{cases}
2x, & x\in [0,1/2],\\
2(1-x), & x\in (1/2,1],\\
0, & x\notin [0,1].
\end{cases}
\]
It is easy to see that $T_i$ has $2^{i-1}$ sawteeth and $T_i\in \NN(2^i,1)$ by Lemma \ref{PWL rep}. Consequently, the symmetric function $x \mapsto T_i(|x|)$ is in $\NN(2^{i+1},1)$.

Let us first consider the approximation of the square function on $[-1,1]$. For any $s\in \bN$, let $h_s:[-1,1] \to [0,1]$ be the continuous piecewise linear function with breakpoints $h_s(j/2^s)=(j/2^{s})^2$ for $j\in \bZ \cap [-2^s,2^s]$. Note that $h_s$ is a symmetric convex function. One can check that (see \citet[Lemma 5.1]{lu2021deep}), for any $x\in [-1,1]$, $0\le h_s(x) - x^2 \le 2^{-2s-2}$ and 
\[
h_s(x) = |x| - \sum_{i=1}^s 4^{-i} T_i(|x|).
\]
For any integers $k\ge 4$ and $L\ge 1$, we define the network $g_{k,L}$ by
\begin{align*}
x &\to 
\begin{pmatrix}
T_1(|x|) \\
\vdots \\
T_k(|x|) \\
|x|
\end{pmatrix}
\to 
\begin{pmatrix}
T_{k+1}(|x|) \\
\vdots \\
T_{2k}(|x|) \\
|x| - \sum_{i=1}^k 4^{-i} T_i(|x|)
\end{pmatrix} 
\to \cdots \to 
\begin{pmatrix}
T_{(L-1)k+1}(|x|) \\
\vdots \\
T_{Lk}(|x|) \\
|x| - \sum_{i=1}^{(L-1)k} 4^{-i} T_i(|x|)
\end{pmatrix} \\
&\to |x| - \sum_{i=1}^{Lk} 4^{-i} T_i(|x|),
\end{align*}
which satisfies $g_{k,L}(x) = h_{kL}(x)$ for $x\in [-1,1]$. In this construction, the width of the first layer is $2+\sum_{i=1}^k 2^{i+1} \le 2^{k+2}$ and the width of remained layers is $k2^k+1$. Hence, we have $g_{k,L} \in \NN(k2^k+1,L)$ because $k\ge 4$.

Observing that $xy=2(\frac{x+y}{2})^2 - \frac{x^2+y^2}{2}$, we define the desired network $f_{k,L}\in \NN(3k2^k+3,L)$ by 
\[
f_{k,L}(x,y) := 2g_{k,L}\left(\frac{x+y}{2}\right) - \frac{g_{k,L}(x)+g_{k,L}(y)}{2}.
\]
For $x,y\in[-1,1]$, we have $f_{k,L}(x,y) \ge - \frac{g_{k,L}(x)+g_{k,L}(y)}{2} \ge -1$ and $f_{k,L}(x,y) \le g_{k,L}\left(\frac{x+y}{2}\right) \le 1$, since $g_{k,L} = h_{kL}:[-1,1]\to [0,1]$ is convex. Furthermore,
\begin{align*}
f_{k,L}(x,y) - xy &= 2 \left( h_{kL}\left(\frac{x+y}{2}\right) - \left(\frac{x+y}{2}\right)^2 \right) - \frac{h_{kL}(x) - x^2}{2} - \frac{h_{kL}(y) - y^2}{2} \\
&\in [-2^{-2kL-2}, 2^{-2kL-1}],
\end{align*}
where we use $0\le h_{kL}(x) - x^2 \le 2^{-2kL-2}$.

\bibliographystyle{myplainnat}
\bibliography{references}
\end{document}